\documentclass{article} % For LaTeX2e
\usepackage{iclr2026_conference,times}

\usepackage{multirow}
\usepackage{caption}
\usepackage{subcaption}
\usepackage{booktabs} 
\usepackage{amsthm}
\usepackage{enumitem}
\usepackage{wrapfig}
\usepackage{titlesec}
\usepackage{svg}
\usepackage{colortbl}
\usepackage{empheq}
\usepackage{graphicx}
\usepackage{capt-of}
\usepackage{subcaption}

\newtheorem{lemma}{Lemma}[section] % numbered within sections

\usepackage{tikz}

\newcommand{\dittotikz}{%
    \tikz{
        \draw [line width=0.12ex] (-0.2ex,0) -- +(0,0.8ex)
            (0.2ex,0) -- +(0,0.8ex);
        \draw [line width=0.08ex] (-0.6ex,0.4ex) -- +(-1.2em,0)
            (0.6ex,0.4ex) -- +(1.2em,0);
    }%
}

\definecolor{wash}{gray}{0.9}

\usepackage{amsmath,amsfonts,bm,mathtools}

\newtheorem{proposition}{Proposition}[section]

\let\originalleft\left
\let\originalright\right
\renewcommand{\left}{\mathopen{}\mathclose\bgroup\originalleft}
\renewcommand{\right}{\aftergroup\egroup\originalright}

\def\eqref#1{equation~\ref{#1}}
\def\1{\bm{1}}

\def\rc{{\textnormal{c}}}

\def\rr{{\textnormal{r}}}

\def\rz{{\textnormal{z}}}

\def\rvc{{\mathbf{c}}}

\def\rvm{{\mathbf{m}}}

\def\rvr{{\mathbf{r}}}
\def\rvs{{\mathbf{s}}}

\def\rvx{{\mathbf{x}}}
\def\rvy{{\mathbf{y}}}
\def\rvz{{\mathbf{z}}}

\def\rmA{{\mathbf{A}}}
\def\rmB{{\mathbf{B}}}

\def\rmI{{\mathbf{I}}}

\def\rmW{{\mathbf{W}}}

\def\vb{{\bm{b}}}

\def\vr{{\bm{r}}}

\DeclareMathAlphabet{\mathsfit}{\encodingdefault}{\sfdefault}{m}{sl}
\SetMathAlphabet{\mathsfit}{bold}{\encodingdefault}{\sfdefault}{bx}{n}

\def\gD{{\mathcal{D}}}

\def\gL{{\mathcal{L}}}

\def\gN{{\mathcal{N}}}

\def\gW{{\mathcal{W}}}
\def\gX{{\mathcal{X}}}

\def\gZ{{\mathcal{Z}}}

\def\sI{{\mathbb{I}}}

\def\sL{{\mathbb{L}}}

\def\sR{{\mathbb{R}}}
\def\sS{{\mathbb{S}}}

\newcommand{\E}{\mathbb{E}}

\newcommand{\KL}{D_{\mathrm{KL}}}

\usepackage[dvipsnames]{xcolor}
\usepackage[colorlinks,linktoc=all,citecolor=MidnightBlue,linkcolor=BrickRed,urlcolor=MidnightBlue]{hyperref}
\usepackage{url}

\title{Self-supervised learning from\\structural invariance}

\author{\stepcounter{footnote}\textbf{Yipeng Zhang}$^{1,2}$\thanks{Correspondence to \href{mailto:yipeng.zhang@mila.quebec}{\color{black}\texttt{yipeng.zhang@mila.quebec}}.} \ \quad
  \textbf{Hafez Ghaemi}$^{1,2,3}$ \quad
  \textbf{Jungyoon Lee}$^{1,2}$ \quad
  \textbf{Shahab Bakhtiari}$^{1,2}$ \\[0.05em] 
  \textbf{Eilif B. Muller}$^{1,2,3,5}$ \quad
  \textbf{Laurent Charlin}$^{1,2,4,5}$ \\[0.05em]  
  $^{1}$Mila - Québec AI Institute \quad
  $^{2}$Université de Montréal \quad
  $^{3}$CHU Sainte-Justine \\[0.05em] 
  $^{4}$HEC Montréal \quad
  $^{5}$CIFAR AI Chair 
  \addtocounter{footnote}{-1}
}

\usepackage{etoc}

\iclrfinalcopy % Uncomment for camera-ready version, but NOT for submission.
\begin{document}

\etocdepthtag.toc{mtchapter}

\maketitle

\begin{abstract}
%\looseness=-100000
Joint-embedding \emph{self-supervised learning}~(SSL), the key paradigm for unsupervised representation learning from visual data, learns from invariances between semantically-related data pairs.
We study the one-to-many mapping problem in SSL,
where each datum may be mapped to multiple valid targets. 
This arises when data pairs come from naturally occurring generative processes, e.g., successive video frames.
We show that existing methods struggle to flexibly capture this conditional uncertainty. As a remedy, we introduce 
a latent variable to account for this uncertainty and derive a variational lower bound on the mutual information between paired embeddings.
Our derivation yields a simple regularization term for standard SSL objectives.
The resulting method, which we call
AdaSSL, applies to both contrastive and 
distillation-based
SSL objectives, and we empirically show its versatility in
causal representation learning,
fine-grained image understanding, and world modeling on videos.\footnote{Code is available at \texttt{\url{https://github.com/SkrighYZ/AdaSSL}}.}

\end{abstract}

\section{Introduction}

Over the last decade, joint-embedding \emph{self-supervised learning}~(SSL) has become the dominant approach in representation learning from unlabeled visual data~\citep{chen2020simple, zbontar2021barlow, grill2020bootstrap, radford2021learning, assran2023self}. The intuition behind SSL is to obtain semantically-related data pairs, often called \emph{positive pairs}, and encourage their representations to be similar, with proper regularization to prevent the encoder collapsing to a constant function~\citep{tongzhouw2020hypersphere, garrido2023on, zhuo2023towards}. 
The hope is that invariance to the differences between positive pairs leads to generalizable representations.

However, positive pairs are typically built with handcrafted augmentations (e.g., cropping, color jittering), which perturb pixels while preserving semantics. Such augmentations cannot precisely mimic changes in natural factors of variation that drive real-world distribution shifts~\citep{ibrahim2023the}.
For example, brightness jitter across pixels does not reproduce how lighting varies across objects or environments (e.g., from indoor to outdoor).
Consequently, augmentations may fail to induce the right invariances~\citep{ibrahim2023the, ibrahim2022robust, bouchacourt2021grounding}, discard fine-grained information~\citep{chen2020simple, zhang2024contrasting}, 
incur additional computation burden~\citep{bordes2023towards}, and require modality-specific heuristics~\citep{balestriero2023cookbook}, ultimately harming downstream performance.

One alternative is to exploit naturally-paired data---e.g., nearby video frames~\citep{klindt2021towards, bardes2024revisiting, sermanet2018time}, image–caption pairs~\citep{radford2021learning}, class labels~\citep{khosla2020supervised}, or embeddings from other models~\citep{sobal2025mathbbxsample, feizi2024gps}---which (better) reflect real-world variations.
From the lens of \emph{causal representation learning} (CRL)~\citep{yao2025unifying, reizinger2025position}, positive pairs $(\rvx, \rvx^+)$ are deterministically mapped from latent factors 
sampled according to $(\rvz, \rvz^+) \sim p(\rvz)p(\rvz^+ \mid \rvz)$, 
and we aim to recover $\rvz$ from $\rvx$. 
Unlike augmentations that perturb the observations directly,
natural positive pairs differ according to structured changes in latent factors of this \emph{data generating process}~(DGP). Modeling these latent changes can potentially improve generalization~\citep{ibrahim2022robust, dittadi2021on, kaur2023modeling} and visual understanding~\citep{awal2024vismin, garrido2025intuitive, lippe2023biscuit}.

Despite the aforementioned benefits, modeling natural pairs for SSL remains challenging.
When learning with augmented pairs, we typically assume the semantic latent factors are invariant, so the latent conditional distribution $p(\rvz^+ \mid \rvz)$ is highly concentrated near $\rvz$. However, natural pairs
induce complex 
$p(\rvz^+ \mid \rvz)$. 
For example, in world modeling~\citep{ha2018world,ha2018recurrent,hafner2025mastering,assran2025v}, the present state $\rvz$ may lead to multiple plausible futures $\{\rvz^+\}$ (e.g., a car may turn left or right), making the conditional distribution inherently multimodal.
For image–caption pairs, caption details vary with image complexity, producing \emph{heteroscedastic} noise. SSL methods that fail to capture this uncertainty often discard information not shared between the pair, leading to degraded performance~\citep{chen2020simple, 
radford2021learning,
jing2022understanding, yuksekgonul2023when, trusca2024object, zhang2024contrasting}.
We argue that leveraging the structure of $p(\rvz^+ \mid \rvz)$ enables SSL to learn more diverse and generalizable features---a principle we call \textbf{SSL from structural invariance}.

Recent works enable contrastive SSL to learn $p(\rvz^+ \mid \rvz)$ that has constant, anisotropic noise~\citep{kugelgen2021selfsupervised, zimmermann2021contrastive, rusak2025infonce}. In this work, we contribute a solution to model 
more complex conditional distributions with unknown shapes.
We are inspired by \emph{joint-embedding predictive architectures}~(JEPAs)~\citep{lecun2022path,garrido2024learning,assran2025v}, which use a latent variable to capture the prediction uncertainty.
However, in contrast to prior work~\citep{devillers2023equimod, garrido2024learning, ghaemi2024seqjepa, dangovski2022equivariant}, we do not assume access to this variable and infer it purely from (the structure hidden in) positive pairs.
For contrastive learning,
this latent variable allows us to derive a new tractable bound on the  \emph{mutual information}~(MI) between paired embeddings---a core motivation for SSL~\citep{linsker1988self, Tschannen2020On, oord2018representation}---and we empirically show its compatibility with 
distillation-based methods. We name our method \textbf{Adaptive SSL~(AdaSSL)} to highlight that it can adapt to different conditional distributions.

We evaluate AdaSSL in controlled settings with numerical data, natural images, and videos. On numerical data, we show that existing SSL methods lack the ability to model non-trivial conditionals, and AdaSSL achieves better performance both in- and out-of-distribution~(OOD). 
On images, AdaSSL consistently recovers fine-grained features better than baselines and learns more disentangled representations. On videos, AdaSSL captures stochastic object accelerations that baselines discard without sacrificing class accuracy.

In summary, our main contributions show that:
\begin{itemize}
    \item Naturally paired data that differ in their data-generating factors can help SSL methods to recover these factors better and lead to improved generalization performance.
    \item Existing SSL methods cannot capture the non-trivial conditional distributions induced by natural pairs, and we propose two variants of AdaSSL to address this limitation.
    \item AdaSSL consistently outperforms baselines on a variety of benchmarks, including 
    multiview CRL,
    fine-grained image understanding, and world modeling.
\end{itemize}

\section{Background and motivation}

In this section, we lay out our problem (\S\ref{sec:prelim:dgp}) and discuss the limitations of existing SSL methods in addressing it (\S\ref{sec:prelim:con}, \S\ref{sec:prelim:noncon}). We then present a theoretical result showing the importance of modeling heteroscedastic noise (\S\ref{sec:proposition}), which motivates our method.

\subsection{Problem formulation}
\label{sec:prelim:dgp}

In CRL, representation learning is viewed as learning to \emph{invert} the DGP such that the latent factors responsible for generating the data are recovered ~\citep{zimmermann2021contrastive,kugelgen2021selfsupervised, rusak2025infonce, reizinger2025position}. 
We assume a data pair $(\rvx, \rvx^+)$ conforms to the following generative process:
\begin{equation}
\label{eq:dgp}
    \rvz \sim p(\rvz)
    \, , \quad
    \rvz^+ \mid \rvz \sim p(\rvz^+ \mid \rvz)
    \, , \quad
    \rvx = g(\rvz)
    \, , \quad
    \rvx^+ = g(\rvz^+)
    \, ,
\end{equation}
where $g: \gZ \to \gX$ is an unknown mixing function that produces the observations $\rvx, \rvx^+ \in \gX$ based on the latent factors $\rvz, \rvz^+ \in \gZ$.

Under Eq.~\ref{eq:dgp}, the variability in $\rvx$ is entirely captured by the latent factors $\rvz$ since $g$ is deterministic. 
A representation that encodes the full latent variability is therefore important for supporting a broad range of downstream tasks, each relying on some (a priori unknown) semantic structure in $\rvz$~\citep{bengio2013representation}.
When SSL models fail to accurately model $p(\rvz^+ \mid \rvz)$, they may discard features that contribute to the unmodeled variance, because doing so removes them from the target and thereby reduces the prediction error.

Formally, our goal is to learn a function $f: \gX \to \sR^{d_f}$ that encodes the data $\rvx \in \gX$ into an embedding space $\sR^{d_f}$ such that we can predict, or \emph{identify}, a subset\footnote{Although full latent recovery is often the goal in theory, invariance to certain style factors 
can help generalization~\citep{deng2022strong} and prevent shortcut solutions in SSL~\citep{chen2020simple} under finite data.} of the latent factors that are useful for downstream tasks from $f(\rvx)$ with a simple function, e.g., an affine transformation. We denote this subset of latent factors as ``content factors'' $\rvc \coloneqq \rvz_{\sI}$ for $\sI \subseteq [d_z]$, and the other (less relevant) factors as ``style'' factors $\rvs \coloneqq \rvz_{[d_z] \setminus \sI}$ following \citet{kugelgen2021selfsupervised}.

\subsection{Preliminaries: contrastive SSL}
\label{sec:prelim:con}

Contrastive SSL methods assume the content factors $\rvc$ to be roughly unperturbed under the conditional law $p_{\rvz^+ \mid \rvz}$, and use an objective that encourage $f(\rvx)$ and $f(\rvx^+)$ to be similar. 
To prevent representation collapse where $f$ becomes a constant function, contrastive objectives use another term to encourage the representations to have high entropy~\citep{chen2020simple, zbontar2021barlow, bardes2022vicreg, tongzhouw2020hypersphere}.
In this work, we focus on sample-contrastive methods based on InfoNCE~\citep{oord2018representation,chen2020simple}, and observe the duality between dimension- and sample-contrastive methods~\citep{garrido2023on, balestriero2022contrastive}. 

The InfoNCE loss has the form:
\begin{equation}
\label{eq:infonce}
\gL_\mathrm{InfoNCE}
=
\E_{\{(\rvx^{(i)}, {\rvx^+}^{(i)})\}_{i=1}^K \overset{\mathrm{iid}}{\sim} p(\rvx, \rvx^+)}
\left[
\frac{1}{K} \sum_{i=1}^K 
-\log \frac{e^{s(\rvx^{(i)}, {\rvx^+}^{(i)}) / \tau}}{\frac{1}{K}\sum_{j=1}^K e^{s(\rvx^{(i)}, {\rvx^+}^{(j)}) / \tau}}
\right]
\, ,
\end{equation}
where $\tau$ is a temperature parameter and $s(\cdot, \cdot)$ is a similarity function over pairs. Intuitively, InfoNCE encourages the similarity function to assign a high score for positive pairs and a low score for pairs that does not come from the true joint.
The similarity function often adopts a simple form on the normalized embeddings, i.e., $s(\rvx, \rvy) = \psi(\rvx)^\top\psi(\rvy)$ where $\psi(\cdot) = \frac{f(\cdot)}{\|f(\cdot)\|_2}$. The simplicity of the similarity function allows features to be easily extracted from the embedding space because they are used to discriminate between data points linearly during training~\citep{Tschannen2020On}.

It has been shown that when the marginal $p(\rvz^+)$ is uniform, the similarity function implicitly models the log conditional: $s^\star(\rvx, \rvx^+) \propto \log p(\rvz^+ \mid \rvz)$~\citep{zimmermann2021contrastive}. With a dot-product similarity, the hypothesis class of $p_{\rvz^+ \mid \rvz}$ reduces to von Mises-Fisher (vMF) distributions, where $\tau$ controls the concentration strength. Since vMF distribution does not account for anisotropic noise, \citet{rusak2025infonce} introduces a diagonal matrix $\boldsymbol{\Lambda}$ that weighs the concentration along each dimension: $s(\rvx, \rvy) = -(\psi(\rvx) - \psi(\rvy))^\top\boldsymbol{\Lambda} (\psi(\rvx) - \psi(\rvy))$. 
\textbf{Nevertheless, it remains unclear how to flexibly model an arbitrary conditional distribution $p(\rvz^+ \mid \rvz)$ while keeping the similarity function simple enough to allow efficient feature extraction.}

\subsection{Preliminaries: distillation-based SSL}
\label{sec:prelim:noncon}

Distillation-based
SSL methods do not use explicit regularization to prevent representation collapse. Our work addresses the limitations of the invariance component of the SSL objective, making it applicable to these methods as well. Typically, they use asymmetric encoders: an online branch predicts representations produced by a target branch, on which a stop-gradient is applied~\citep{grill2020bootstrap, chen2021exploring}. While empirically effective, the reason these design choices prevent collapse is not fully understood~\citep{tian2021understanding, zhang2022how, zhuo2023towards}. We illustrate our findings with BYOL~\citep{grill2020bootstrap}, the backbone of many recent successful 
distillation-based
methods~\citep{guo2022byol, assran2025v}:
\begin{equation}
\label{eq:byol}
    \gL_\mathrm{BYOL}
    =  \left\| \eta(\psi(\rvx)) - \psi_\mathrm{EMA}(\rvx^+) \right\|_2^2,
\end{equation}
where $\psi_\mathrm{EMA}$ is the exponential moving average 
of the parameters defining $\psi$ over time
and $\eta(\cdot)$ is an MLP predictor.

Because of the predictor, distillation-based methods frame the problem as predictive learning more explicitly than contrastive ones. \textbf{Intuitively, the predictor accounts for cases where $\E[\rvz^+ \mid \rvz] \neq \rvz$; but it remains unclear how it can capture complex noise structures in $p(\rvz^+ \mid \rvz)$---which may be heteroscedastic or even multimodal---without conditioning on additional information.}

\subsection{Unavoidable heteroscedasticity}
\label{sec:proposition}

Before presenting our solution to these questions, we first provide a theoretical result that underscores the importance of modeling heteroscedasticity between paired embeddings.

\begin{proposition}
\label{thm:C1-main}
Let $\mathbb{S}^{d_{f}} \subset \mathbb{R}^{d_{f}+1}$ denote the $d_f$-dimensional unit sphere. Let $g:\mathbb{R}^{d_z} \to \mathbb{R}^{d_x}$ be $C^{1}$ diffeomorphic to its image, and let $f:\mathbb{R}^{d_x} \to \mathbb{S}^{d_f}$ be $C^{1}$ almost everywhere. Define $h:=f\circ g:\mathbb R^{d_z}\to\mathbb S^{d_f}$. Assume further that the random vectors $\rvz, \rvz^{+} \in \mathbb{R}^{d_z}$ are sampled as $
\rvz \sim p_\rvz, \rvz^{+} = \rvz + \varepsilon, \varepsilon \sim p_\varepsilon,
$ where $p_\rvz$ is not a point mass and $\varepsilon$ is independent of $\rvz$, $\mathbb{E}[\varepsilon] = 0$, and $\operatorname{Cov}(\varepsilon) \succ 0$. Suppose that for $p_\rvz$-almost every $\rvz$ we have 
$\operatorname{rank}Dh(\rvz)=d_z$. Write $H = h(\rvz)$ and $H^{+} = h(\rvz^{+})$. Then the conditional law $p_{H^{+} \mid H}(h(\rvz^{+}) \mid h(\rvz))$ is necessarily heteroscedastic: its conditional variance depends on $h(\rvz)$ for $p_\rvz$-almost every $\rvz$.
\end{proposition}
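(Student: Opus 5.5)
The plan is to argue by contradiction. Suppose the conditional covariance $\Sigma(\rvz) := \operatorname{Cov}(H^{+} \mid H = h(\rvz))$ were constant, equal to some fixed $\Sigma$, on a set of positive $p_\rvz$-measure. I would then show that this is incompatible with $h$ mapping into the compact sphere while having full rank. Since $g$ is a diffeomorphism onto its image and $\operatorname{rank} Dh = d_z$ almost everywhere, $h$ is locally injective almost everywhere. Hence conditioning on $H = h(\rvz)$ is locally equivalent to conditioning on $\rvz$, and
\[
\Sigma(\rvz) = \operatorname{Cov}_{\varepsilon}\bigl(h(\rvz+\varepsilon)\bigr),
\]
with $\varepsilon$ independent of $\rvz$. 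One technicality is that $h$ may be non-injective globally, in which case the conditional law becomes a mixture over preimages. I would handle this by restricting to the full-measure set where the preimage is locally a single point, or by noting that mixing only adds a between-component variance term, which itself varies.

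\textbf{Key step 1 (how the variance depends on $\rvz$).} Because $h$ takes values on $\mathbb{S}^{d_f}$, the second moment is fixed:
\[
\operatorname{tr}\Sigma(\rvz) = 1 - \bigl\|\E_\varepsilon[h(\rvz+\varepsilon)]\bigr\|^2 .
\]
So constant total variance is equivalent to constant norm of the smoothed map $m(\rvz) := \E_\varepsilon[h(\rvz+\varepsilon)] = (h * p_{-\varepsilon})(\rvz)$. I therefore need to show that $\|m\|$ cannot be constant on a positive-measure set.

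\textbf{Key step 2 (rule out constant variance).}
\begin{itemize}
\item First I would exploit the boundary of the image. The image $h(\mathbb{R}^{d_z})$ has dimension $d_z$ locally. Since $\operatorname{Cov}(\varepsilon) \succ 0$, the noise moves $h(\rvz+\varepsilon)$ along a genuinely $d_z$-dimensional neighborhood of the sphere, so $\|m(\rvz)\| < 1$ strictly.
\item Next I would use the unboundedness of $\mathbb{R}^{d_z}$ against the compactness of the sphere. Since $p_\rvz$ is not a point mass, there are distinct points in its support. Moving the centre $\rvz$ changes the local spread of $h$: the quantity $\E\|Dh(\rvz)\varepsilon\|^2$ to first order equals $\operatorname{tr}(Dh\,\operatorname{Cov}(\varepsilon)\,Dh^\top)$.
\item A full-rank $C^1$ map into a compact manifold cannot have $\operatorname{tr}(Dh\,\Sigma_\varepsilon\,Dh^\top)$ constant everywhere. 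If it were, $h$ would be a "uniform-speed" immersion of $\mathbb{R}^{d_z}$ into a bounded region. I would then try to conclude, via a volume argument (the pulled-back metric has volume bounded below, yet the image has finite area), that $h$ must fold back, creating preimage multiplicities and hence varying mixture variance.
\end{itemize}

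\textbf{Main obstacle.} The hardest part is making step 2 rigorous for non-infinitesimal noise, and for "almost every $\rvz$" rather than "not constant everywhere." Step 2 yields only that $\Sigma$ is non-constant. Upgrading this to dependence at almost every point would need an analyticity-type argument: $m$ is a convolution, so it is smooth when $p_\varepsilon$ has a density. If $\|m\|^2$ were locally constant on a positive-measure set, I would try to propagate that by unique continuation, which requires extra regularity. If that fails, I would settle for interpreting "depends on $h(\rvz)$" as "is not $p_\rvz$-a.s. constant," which the contradiction argument above delivers.
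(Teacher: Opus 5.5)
\textbf{Where the argument breaks: reducing to the trace.} Constancy of $\operatorname{Cov}(H^{+}\mid H)$ only gives constancy of $\operatorname{tr}\Sigma(\rvz)=1-\|m(\rvz)\|^2$. So your route needs the total variance to vary. Key step 2 supplies this by asserting that a full-rank map into a compact manifold cannot have $\operatorname{tr}(Dh\,\Sigma_\varepsilon\,Dh^\top)$ constant, and that assertion is false.

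Take $d_z=d_x=d_f=1$, $g=\mathrm{id}$, $h(t)=(\cos t,\sin t)$. Then $\|Dh(t)\|=1$. Moreover, for \emph{any} noise law, $m(t)=R_t\,\E[(\cos\varepsilon,\sin\varepsilon)]$ with $R_t$ the rotation by $t$. So $\|m\|$, and hence the total conditional variance, is exactly constant.

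The same happens in higher dimensions for the flat torus $h(u)=d_z^{-1/2}(\cos u_1,\sin u_1,\dots,\cos u_{d_z},\sin u_{d_z})\in\mathbb{S}^{2d_z-1}$, which satisfies $Dh^\top Dh=d_z^{-1}I$.

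The volume/folding idea cannot rescue this. $h$ does fold back, but the preimages $t+2\pi k$ produce identical conditional laws. The mixture therefore adds no between-component variance, contrary to your claim that it ``itself varies''.

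In this example the conditional covariance \emph{is} heteroscedastic as a matrix: it equals $R_t C R_t^\top$ with $C=\operatorname{Cov}((\cos\varepsilon,\sin\varepsilon))$, rotating with the tangent line. The dependence on $\rvz$ lives in the \emph{directions} of the covariance, which is exactly what taking the trace throws away.

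\textbf{The missing idea, which is the paper's route: keep the whole leading-order matrix.} With $\rvz^+=\rvz+\sigma\varepsilon$, a first-order expansion gives $\operatorname{Cov}(H^+\mid\rvz)=\sigma^2 Dh(\rvz)\Sigma_\varepsilon Dh(\rvz)^\top+o(\sigma^2)$.
\begin{enumerate}
\item Since $\Sigma_\varepsilon\succ 0$, the range of this matrix equals $\operatorname{range}Dh(\rvz)$.
\item Differentiating $\|h\|^2\equiv 1$ gives $\operatorname{range}Dh(\rvz)\subset h(\rvz)^\perp$.
\item If the matrix were constant, its range would be a fixed $d_z$-dimensional subspace $W$ with $h(\rvz)\perp W$ for all $\rvz$.
\item Differentiating $w^\top h(\rvz)=0$ for $w\in W$ then gives $w\perp\operatorname{range}Dh(\rvz)=W$. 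So $W\subset W^\perp$, i.e.\ $W=\{0\}$, contradicting $\operatorname{rank}Dh=d_z$.
\end{enumerate}

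\textbf{Drop the goal of handling non-infinitesimal noise.} It cannot succeed. In the circle example with $\varepsilon\sim\mathrm{Unif}[-\pi,\pi]$, all hypotheses hold, yet $H^+$ is uniform on $\mathbb{S}^1$ independently of $H$, so the conditional law is exactly homoscedastic. The statement is only true at leading order in the noise scale, which is all the paper's proof, and its accompanying remark, actually establish.
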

Proposition~\ref{thm:C1-main} shows that heteroscedasticity between paired embeddings emerges from the geometric mismatch between the embedding space and the ground-truth latent space, regardless of the encoding function or embedding dimensionality (proof in \S\ref{thm:C1}). Intuitively, mapping the flat latent space $\sR^{d_z}$ onto a curved manifold such as $\sS^{d_f}$ distorts local neighborhoods differently depending on the location of $h(\rvz)$, causing input-dependent variance in $p(h(\rvz^+)\mid h(\rvz))$.
Here, we explicitly show the case of projecting from unbounded latent space $\mathbb{R}^{d_z}$ to
normalized embedding space $\mathbb{S}^{d_f}$
and discuss the reverse scenario
in \S\ref{thm:C1_other}. 

In SSL, standard similarity functions and predictors implicitly assume that positive pairs exhibit the same noise scale (\S\ref{sec:prelim:con}, \S\ref{sec:prelim:noncon}), but Proposition~\ref{thm:C1-main} shows that this cannot hold when the (unknown) geometry of $\gZ$ and the embedding space differ.
Consequently, common similarity functions such as the dot product fail to capture this conditional variance, since they aggregate the variability uniformly across all embedding directions and data pairs. We show this empirically in \S\ref{sec:exp:num}.

\section{Method}\label{sec:method}

\begin{figure}[t!]
    \centering
    \includegraphics[width=\textwidth]{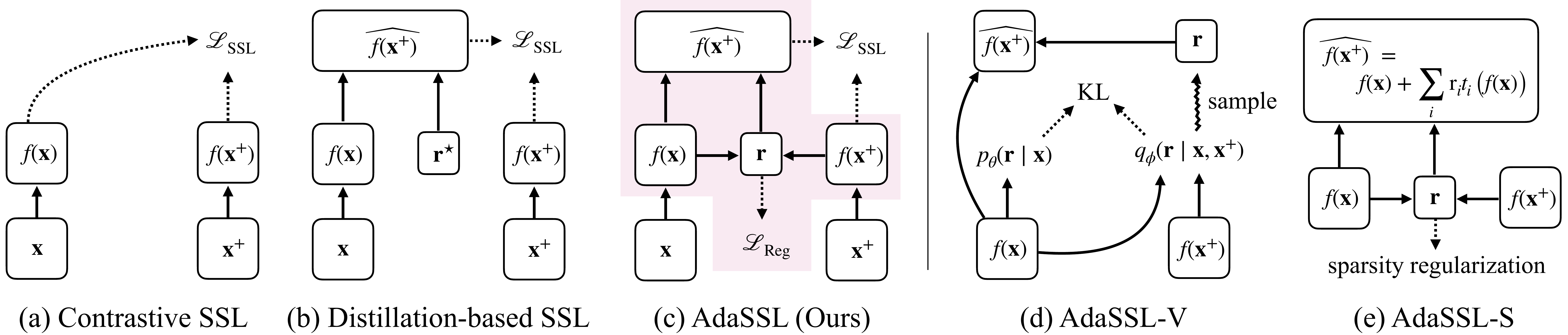}
    \caption{Visual comparison of models. Boxes denote vectors and arrows denote functions. 
    We use $f$ to denote both encoders although they may use different parameters in practice. (a) Contrastive SSL typically uses a symmetric architecture. (b) 
    Distillation-based
    SSL uses a predictor to predict the embeddings of one branch from the other, optionally with the help of some supervision $\rvr^\star$ related to the difference between the inputs. (c) Our method, AdaSSL, extends SSL by modeling the latent variable $\rvr$.
    (d) AdaSSL-V learns a variational distribution, $q_\phi(\rvr \mid \rvx, \rvx^+)$, and uses an MLP as predictor. (e) AdaSSL-S regularizes the sparsity of $\rvr$ and uses a modular predictor.}
    \label{fig:model}
    \vspace{-0.2in}
\end{figure}

\begin{figure}[t!]
    \centering
    \includegraphics[width=\textwidth]{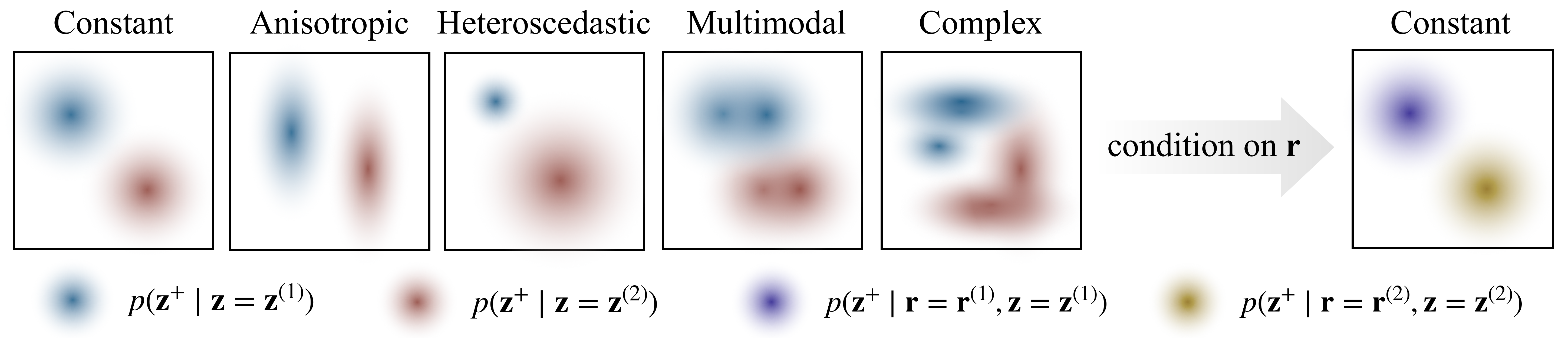}
    \vspace{-0.15in}
    \caption{Illustration of different types of noise structure in $p(\rvz^+ \mid \rvz)$. Here, ``constant'' refers to isotropic, homoscedastic noise.
    Conditioning on a latent variable $\rvr$ can transform the noise into a simpler form. For example, a car may turn left or right, producing a bimodal conditional distribution; conditioning on the driver’s intention removes the irrelevant mode.}
    \label{fig:concept}
    \vspace{-0.1in}
\end{figure}

We now present our method, which addresses the aforementioned challenges by modeling uncertainty with a \emph{latent-variable model}. In \S\ref{sec:method:mot}, we
introduce our overall objective.
We then discuss two variants of AdaSSL in \S\ref{sec:method:adasslv} and \S\ref{sec:method:adassls}, which optimize this objective in distinct ways.

\subsection{Modeling uncertainty with a latent variable}
\label{sec:method:mot}

Learning a representation that 
maximally preserves the 
mutual information (MI) 
between paired embeddings is useful for representation learning and motivates SSL~\citep{linsker1988self, Tschannen2020On, oord2018representation}. Contrastive SSL optimizes a lower bound on the mutual information $I(f(\rvx); f(\rvx^+))$~\citep{oord2018representation} but, as discussed in Sec.~\ref{sec:prelim:con}, relies on strong assumptions on the latent conditional $p(\rvz^+ \mid \rvz)$.

To relax these assumptions, we introduce an auxiliary latent variable $\rvr$ to parameterize the conditional uncertainty in $p(\rvz^+ \mid \rvz)$. 
This induces a latent-variable model $p(\rvr, \rvz^+ \mid \rvz) = p(\rvr \mid \rvz)p(\rvz^+ \mid \rvr, \rvz)$, where $p(\rvr \mid \rvz)$ acts as a conditional prior.
In Fig.~\ref{fig:model}, we visually compare our method to existing approaches.
Intuitively,
$\rvr$ should contain information about $\rvz^+$ that cannot be solely predicted from $\rvz$. 
For example, $\rvr$ may represent camera movement, agent actions, or temporal gaps---factors that modify the underlying latents before they are mapped to observations through $g$.
Consequently, modeling $p(\rvz^+ \mid \rvz, \rvr)$ may require a simpler model (e.g., $s(\cdot, \cdot)$ in Eq.~\ref{eq:infonce} or $\eta(\cdot)$ in Eq.~\ref{eq:byol}) compared to modeling the full $p(\rvz^+ \mid \rvz)$, as illustrated in Fig.~\ref{fig:concept}.

With this in mind, we can rearrange $I(f(\rvx); f(\rvx^+))$ into a form that is easier to model. Specifically, by the chain rule of MI,
\begin{equation}
\label{eq:adassl-info}
I\left(f(\rvx); f(\rvx^+) \right)
=
I\left(f(\rvx), \rvr; f(\rvx^+)\right)
- 
I\left(\rvr; f(\rvx^+) \mid f(\rvx)\right)
\, .
\end{equation}
Intuitively, optimizing the first term on the RHS involves modeling $p(\rvz^+ \mid \rvz, \rvr)$ instead of $p(\rvz^+ \mid \rvz)$.
This can be achieved through an SSL objective that encourages the model to use information in $\rvr$ to reduce the uncertainty in predicting $f(\rvx^+)$ from $f(\rvx)$. However, without constraints, $\rvr$ could encode $f(\rvx^+)$ directly, increasing the second term and creating a shortcut.
This motivates the general form of our objective:
\begin{equation}
\label{eq:adassl-general}
    \gL_\mathrm{AdaSSL}
    =
    \gL_\mathrm{SSL}((\rvx, \rvr), \rvx^+)
    +
    \beta
    \gL_\mathrm{Reg}(\rvr)
    \, ,
\end{equation}
where the SSL term is any standard SSL loss (e.g., $\gL_\mathrm{InfoNCE}$)
and the regularizer $\gL_\mathrm{Reg}(\rvr)$ limits this degenerate behavior by discouraging overly informative $\rvr$.
The hyperparameter $\beta$ controls the strength of regularization per standard practice~\citep{higgins2017beta, locatello2020weakly}.

This objective 
matches the conceptual framework depicted in Fig.~13 of \citet{lecun2022path}. 
We propose two ways to model $\rvr$, which we discuss next.

%%%%%%%%%%%%%%%%%%%%%%%%%%%%%%
\subsection{AdaSSL-V and a lower bound on $I\left(f(\rvx), f(\rvx^+)\right)$}
\label{sec:method:adasslv}

We first estimate the posterior of $\rvr$ with a variational distribution $q_\phi(\rvr \mid \rvx, \rvx^+)$~\citep{kingma2013auto,sohn2015learning}.
The joint then becomes $\tilde{p}(\rvx, \rvx^+, \rvr) \coloneqq p(\rvx, \rvx^+)q(\rvr \mid \rvx, \rvx^+)$. 
The informational-theoretical properties of contrastive learning allow us to optimize a lower bound on $I(\rvx, \rvx^+)$\footnote{
For brevity, we use the notation $I(\rvx; \rvx^+)$ in this section, but in practice  
we optimize
$I(f(\rvx); f(\rvx^+)) \leq I(\rvx; \rvx^+)$ because our method operates on paired embeddings.}.
Specifically, the first term in Eq.~\ref{eq:adassl-info} is bounded by InfoNCE~\citep{oord2018representation} by treating $(\rvx, \rvr)$ as a single variable:
\begin{equation}
\label{eq:infonce_bound}
I_{\tilde{p}}(\rvx, \rvr; \rvx^+)
\geq
-\gL_\mathrm{InfoNCE}
=
\E_{\{(\rvx^{(i)}, {\rvx^+}^{(i)} ,\rvr^{(i)})\}_{i=1}^K \overset{\mathrm{iid}}{\sim} \tilde{p}}
\left[
\frac{1}{K} \sum_{i=1}^K 
\log \frac{e^{s(\rvx^{(i)}, {\rvx^+}^{(i)}, \rvr^{(i)}) / \tau}}{\frac{1}{K}\sum_{j=1}^K e^{s(\rvx^{(i)}, {\rvx^+}^{(j)}, \rvr^{(i)}) / \tau}}
\right]
\, .
\end{equation}

\paragraph{Similarity function.} As discussed in \S\ref{sec:prelim:con}, our goal is to have a similarity function that is flexible yet simple. With $\rvr$ as a latent variable, we still use the dot-product similarity on embeddings: 
\begin{equation}
\label{eq:editor}
    s(\rvx, \rvx^+, \rvr) = \psi_1(\rvx, \rvr)^\top\psi_2(\rvx^+)
    \, , \;
    \mathrm{where}
    \;
    \psi_1(\rvx, \rvr)
    =
    \frac{t(f(\rvx), \rvr)}{\|t(f(\rvx), \rvr)\|_2}
    \, , 
    \psi_2(\rvx^+)
    =
    \frac{f(\rvx^+)}{\|f(\rvx^+)\|_2}
    \, .
\end{equation}
Specifically, we \emph{edit} $f(\rvx)$ with the help of $\rvr$ and an editing function $t(\cdot, \cdot)$ such that it lies in the vicinity of $f(\rvx^+)$. We parameterize $t$ with a linear projection or two-layer MLPs for AdaSSL-V.

We derive a bound for the second term of Eq.~\ref{eq:adassl-info} in \S\ref{sec:app:derivation}:
\begin{equation}
\label{eq:reg}
-I_{\tilde{p}}(\rvr; \rvx^+ \mid \rvx) 
\geq 
-\gL_\mathrm{Reg}
=
-\E_{p(\rvx, \rvx^+)}
\left[\KL
(
q_\phi(\rvr \mid \rvx, \rvx^+)
\|
p_\theta(\rvr \mid \rvx)
)
\right]
\, ,
\end{equation}
where $\KL(\cdot \| \cdot)$ denotes the Kullback-Leibler divergence.

Thus, by introducing a variational distribution on $\rvr$, we obtain a tractable lower bound on $I(\rvx; \rvx^+)$. In practice, we parameterize $q_\phi$ and $p_\theta$ using lightweight MLPs on top of the embeddings $f(\rvx)$ and $f(\rvx^+)$, modeling both as factorized Gaussians.
Plugging the terms into Eq.~\ref{eq:adassl-general}, we get
\begin{empheq}[box=\fbox]{equation}
    \gL_\mathrm{AdaSSL-V} 
    =
    \gL_\mathrm{SSL}
    \left(
    \E_{q_\phi}
    \psi_1(\rvx, \rvr), \psi_2(\rvx^+)
    \right)
    +
    \beta
    \KL
    (
    q_\phi(\rvr \mid \rvx, \rvx^+)
    \|
    p_\theta(\rvr \mid \rvx)
    )
    \, .
\end{empheq}
We call this variant of our method \textbf{AdaSSL-V}(variational).
For InfoNCE, The first term is explicilty stated in Eq.~\ref{eq:infonce_bound}. For BYOL, we replace the input to the predictor $\eta(\cdot)$ in Eq.~\ref{eq:byol} with $\psi_1(\rvx, \rvr)$.

\emph{Remark.} Although AdaSSL-V is only theoretically justified for contrastive SSL, one can use a distillation-based SSL loss as well because they still encourage $\rvr$ to aid prediction of $f(\rvx^+)$.

%%%%%%%%%%%%%%%%%%%%%%%%%%%%%%
\subsection{AdaSSL-S and sparse modular edits}
\label{sec:method:adassls}

Natural transitions usually correspond to sparse changes in the latent factors, an inductive bias widely adopted in the CRL literature~\citep{ahuja2022weakly, klindt2021towards, lippe2023biscuit}.
Therefore, we hypothesize that we can implement Eq.~\ref{eq:adassl-general} by predicting $\rvr$ and regularizing its sparsity. \textbf{AdaSSL-S}(parse) realizes this idea. Instead of variational approximation,
we predict $\rvr$ deterministically from $f(\rvx)$ and $f(\rvx^+)$: 
$\rvr = m(f(\rvx), f(\rvx^+))$, where $m$ is an MLP followed by \texttt{tanh} activation.
We then regularize the sparsity of $\rvr$:
\begin{empheq}[box=\fbox]{equation}
\label{eq:adassl-s}
    \gL_\mathrm{AdaSSL-S} 
    =
    \gL_\mathrm{SSL}
    \left(\psi_1(\rvx, \rvr), \psi_2(\rvx^+)
    \right)
    +
    \beta
    \|\rvr\|_0
    \, ,
\end{empheq}
where the $L_0$ penalty\footnote{We slightly abuse the notation and use $\|\rvr\|_0$ to denote the number of non-zero elements of $\rvr$.} is made differentiable through the Gumbel-Sigmoid estimator similar to the one used by \citet{lachapelle2022disentanglement, brouillard2020differentiable}. We include its details in \S\ref{sec:app:l0}.

Inspired by~\citet{ibrahim2022robust, hu2022lora}, we use a modular editing function for AdaSSL-S:
\begin{equation}
\label{eq:lora}
    t(f(\rvx), \rvr)
    =
    f(\rvx)
    + 
    \sum_{i=1}^{d_r}
    \rr_i t_i(f(\rvx))
    =
    f(\rvx) + \sum_{i=1}^{d_r}
    \rr_i (\rmB_i \rmA_i f(\rvx) + b_i)
    \, ,
\end{equation}
where $d_r$ is the dimensionality of $\rvr$.
Each $t_i(\cdot)$ is an affine transformation parameterized by a rank-1 matrix $\rmB_i\rmA_i$ and a scalar offset $b_i$. This design is motivated by the assumption that differences between the paired embeddings lie in a low-dimensional latent subspace, where edits are applied.

In \S\ref{sec:exp:images}, we will show that AdaSSL-S works well with contrastive learning, but requires additional care for distillation-based methods in some settings.

\section{Experiments}

We evaluate AdaSSL in various settings to show it learns diverse and generalizable representations. We start with two benchmarks where we have access to the ground-truth data generating factors. First, we generate numerical data where we systematically increase the complexity of $p(\rvz^+ \mid \rvz)$, and show AdaSSL mitigates the limitations of existing SSL methods (\S\ref{sec:exp:num}). Second, we use 3D-rendered images from 3DIdent~\citep{zimmermann2021contrastive} and show AdaSSL identifies latent factors better than all baselines (\S\ref{sec:exp:ident}). 

For other benchmarks, we do not have access to the ground-truth data generating factors. Instead, we use proxy tasks such as fine-grained classification to evaluate the quality of the learned representations. On natural images from CelebA~\citep{liu2015faceattributes}, AdaSSL captures fine-grained features and generalizes to OOD data~(\S\ref{sec:exp:images}). On large-scale image data from iNat-2021~\citep{van2021benchmarking}, we show that AdaSSL is more robust to noisy data pairings than vanilla SSL~(\S\ref{sec:app:results:inat}). Finally, 
in \S\ref{sec:exp:videos} and \S\ref{sec:app:results:mnist},
we show that AdaSSL outperforms baselines on modeling the uncertainty in video transitions on an extended Moving-MNIST dataset~\citep{srivastava2015unsupervised, drozdov2024video}.

Throughout this section, we refer to data pairs that differ in the underlying latent factors as \emph{natural pairs} and those constructed using augmented views of the same image as \emph{standard pairs}. We show that natural pairs enable us to learn better representations in the studied settings. 
Additionally, inferring a latent variable $\rvr$ that is invariant to low-level transformations sometimes requires us to use another augmented view; we encourage readers to refer to \S\ref{sec:add-view} for details.

\subsection{Overview of experimental protocol}
\label{sec:exp:setup}

\paragraph{Baselines.}
Our experiments in \S\ref{sec:exp:num}, \S\ref{sec:exp:ident} focus on contrastive SSL.
As discussed in \S\ref{sec:prelim:con}, InfoNCE~\citep{chen2020simple, oord2018representation} and AnInfoNCE~\citep{rusak2025infonce} are the contrastive baselines that account for isotropic and anisotropic noise in $p(\rvz^+ \mid \rvz)$, respectively. AnInfoNCE learns directional weights of the similarity function, $\boldsymbol{\Lambda}$. For a fair comparison, we also use a learnable scalar weight $\lambda$ for other methods in \S\ref{sec:exp:num} and \S\ref{sec:exp:images} and find it beneficial. 
Table~\ref{tab:sim_func} compares the similarity functions across methods. 
On natural images in \S\ref{sec:exp:images}, we experiment with both InfoNCE and BYOL~\citep{grill2020bootstrap}.
We also compare with~\citet{ibrahim2022robust}, which models the change between latent factors as Lie group transformations; we denote this method as LieSSL.
For the CRL benchmark in \S\ref{sec:exp:ident}, we include classic disentanglement methods, including $\beta$-VAE~\citep{higgins2017beta} and AdaGVAE~\citep{locatello2020weakly}. For the video experiments in \S\ref{sec:exp:videos}, we use BYOL
as our base SSL method.

\vspace{-0.9em}

\paragraph{H-InfoNCE.} In addition to existing baselines, we introduce H-InfoNCE, which extend AnInfoNCE to account for heteroscedastic noise by predicting $\boldsymbol{\Lambda}_{\rvx}$ from $f(\rvx)$ with an affine function (H-InfoNCE$_\mathrm{Affine}$) or an MLP (H-InfoNCE$_\mathrm{MLP}$); it replaces $\boldsymbol{\Lambda}$ in AnInfoNCE's similarity function with this conditional  $\boldsymbol{\Lambda}_{\rvx}$ (see Table~\ref{tab:sim_func}). 
Additionally, H-InfoNCE uses another MLP predictor to predict $f(\rvx^+)$ from $f(\rvx)$, similar to distillation-based SSL, except for in Table~\ref{tab:numerical-simple}, where we ensure $\E[\rvz^+ \mid \rvz] = \rvz$. 
Note that all InfoNCE, AnInfoNCE, and H-InfoNCE assume unimodal $p(\rvz^+ \mid \rvz)$ 
while AdaSSL relaxes 
this assumption
by using a latent-variable model.

\vspace{-0.9em}

\paragraph{Experimental setup.} We use a five-layer MLP as $f$ for the numerical experiments in \S\ref{sec:exp:num}, a ResNet-18 \emph{encoder} followed by a two-layer MLP \emph{projector} as $f$ for the image experiments in \S\ref{sec:exp:ident} and \S\ref{sec:exp:images}, 
a ResNet-50 encoder followed by a two-layer MLP projector as $f$ in \S\ref{sec:app:results:inat},
and a five-layer 3D CNN followed by a three-layer MLP projector as $f$ for videos. 
Unless otherwise noted, we train the model from scratch on the training set and perform model selection based on the performance of an online affine probe on the validation set. For evaluation in \S\ref{sec:exp:num} and \S\ref{sec:exp:ident}, we follow~\citet{zimmermann2021contrastive} by training an affine probe 
on top of the \emph{embeddings} produced by the frozen $f$ on the training data.
For evaluation in \S\ref{sec:exp:images} 
and \S\ref{sec:exp:videos}, 
we train an affine probe on both the embeddings (output of the frozen projector) and the output of the frozen encoder, which we refer to as \emph{representations}.
We then evaluate the probes' performance on the test set following standard practice~\citep{chen2020simple, grill2020bootstrap}. All results are reported as the mean and standard deviation over three random seeds. Additional experimental details and data visualizations can be found in \S\ref{sec:app:implementation}.

\subsection{Numerical data}
\label{sec:exp:num}

\begin{table}[t]
\caption{Linear regression $R^2$ on unimodal $p(\rvz^+ \mid \rvz)$. All experiments share the same $\Sigma$ and the mixing function $g$ for each trial. Although all models achieve good performance on the training set $p(\rvz)$, a flexible model 
is crucial to achieving good OOD performance. Values below 0.7 are dimmed.
}
\vspace{-0.1in}
\label{tab:numerical-simple}
\begin{center}
\resizebox{\linewidth}{!}{
\begin{tabular}{c|l|ccc|ccc}
\toprule

\multirow{2}{*}{$\mathrm{Var}(\rvc^+ \mid \rvc)$}

& 

\multirow{2}{*}{\bf Model}

& \multicolumn{3}{c|}{\textsc{model space: unbounded}} & \multicolumn{3}{c}{\textsc{model space: hypersphere}} \\

& & $p(\rvz)$ & $\gN(0, 5 \cdot \mathbf{I})$ & $\gN(0, 5 \cdot \mathbf{I})_\mathrm{OOD}$ & $p(\rvz)$ & $\gN(0, 5 \cdot \mathbf{I})$ & $\gN(0, 5 \cdot \mathbf{I})_\mathrm{OOD}$ \\

\midrule

%%%%%%%%%%%%%%%%%%
 - & Identity 
 & 0.7410 \tiny{$\pm$ 0.0943} 
 & \color{lightgray} 0.5103 \tiny{$\pm$ 0.0374} 
 & \color{lightgray} 0.1243 \tiny{$\pm$ 0.0883}
 & 0.7410 \tiny{$\pm$ 0.0943} 
 & \color{lightgray} 0.5103 \tiny{$\pm$ 0.0374} 
 & \color{lightgray} 0.1243 \tiny{$\pm$ 0.0883} \\

\midrule

%%%%%%%%%%%%%%%%%%
0 & InfoNCE & 0.9912 \tiny{$\pm$ 0.0051} & 0.9614 \tiny{$\pm$ 0.0060} & 0.8924 \tiny{$\pm$ 0.0590} &
0.8657 \tiny{$\pm$ 0.1462} & 0.8004 \tiny{$\pm$ 0.0764} & \color{lightgray} 0.2683 \tiny{$\pm$ 0.2626} \\

\midrule

%%%%%%%%%%%%%%%%%%
\multirow{2}{*}{1} & InfoNCE & 0.9943 \tiny{$\pm$ 0.0031} & 0.9731 \tiny{$\pm$ 0.0070} & 0.9564 \tiny{$\pm$ 0.0074} &
0.9785 \tiny{$\pm$ 0.0178} & 0.9104 \tiny{$\pm$ 0.0154} & \color{lightgray} 0.6944 \tiny{$\pm$ 0.0657} \\

%%%%%%%%%%%%%%%%%%
 & H-InfoNCE$_\mathrm{Affine}$ & 
 0.9956 \tiny{$\pm$ 0.0019} & 0.9736 \tiny{$\pm$ 0.0080} & 0.9592 \tiny{$\pm$ 0.0072} &
0.9953 \tiny{$\pm$ 0.0021} & 0.9645 \tiny{$\pm$ 0.0065} & 0.9154 \tiny{$\pm$ 0.0100} \\

\midrule 

%%%%%%%%%%%%%%%%%%
\multirow{3}{*}{Anisotropic} & 
InfoNCE & 0.9968 \tiny{$\pm$ 0.0013} & 0.9764 \tiny{$\pm$ 0.0055} & 0.9668 \tiny{$\pm$ 0.0056} &
0.9509 \tiny{$\pm$ 0.0358} & 0.7755 \tiny{$\pm$ 0.1385} & \color{lightgray} 0.3523 \tiny{$\pm$ 0.2323} \\

%%%%%%%%%%%%%%%%%%
& AnInfoNCE & 0.9962 \tiny{$\pm$ 0.0019} & 0.9753 \tiny{$\pm$ 0.0068} & 0.9627 \tiny{$\pm$ 0.0088} &
0.9613 \tiny{$\pm$ 0.0418} & 0.8403 \tiny{$\pm$ 0.0299} & \color{lightgray} 0.4022 \tiny{$\pm$ 0.2316} \\

%%%%%%%%%%%%%%%%%%
& H-InfoNCE$_\mathrm{Affine}$ & 
0.9963 \tiny{$\pm$ 0.0019} & 0.9685 \tiny{$\pm$ 0.0032} & 0.9510 \tiny{$\pm$ 0.0023} &
0.9970 \tiny{$\pm$ 0.0017} & 0.9537 \tiny{$\pm$ 0.0149} & 0.9018 \tiny{$\pm$ 0.0035} \\

\midrule 

%%%%%%%%%%%%%%%%%%
 & InfoNCE & 0.8553 \tiny{$\pm$ 0.0532} & \color{lightgray} 0.2664 \tiny{$\pm$ 0.0984} & \color{lightgray} -0.1891 \tiny{$\pm$ 0.2545} &
0.7851 \tiny{$\pm$ 0.0920} & \color{lightgray} 0.2690 \tiny{$\pm$ 0.1024} & \color{lightgray} 0.0209 \tiny{$\pm$ 0.1110} \\

%%%%%%%%%%%%%%%%%%
 Heteroscedastic & AnInfoNCE 
 & 0.8447 \tiny{$\pm$ 0.0611} & \color{lightgray} 0.2745 \tiny{$\pm$ 0.1052} & \color{lightgray} -0.2277 \tiny{$\pm$ 0.3284} &
 0.7563 \tiny{$\pm$ 0.1276} & \color{lightgray} 0.2563 \tiny{$\pm$ 0.1092} & \color{lightgray} 0.0070 \tiny{$\pm$ 0.1230} \\

 %%%%%%%%%%%%%%%%%%
 (affine+activation) & H-InfoNCE$_\mathrm{Affine}$ 
 & 0.9826 \tiny{$\pm$ 0.0060} & 0.9482 \tiny{$\pm$ 0.0165} & 0.8666 \tiny{$\pm$ 0.0741} &
  0.9426 \tiny{$\pm$ 0.0222} & \color{lightgray} 0.6276 \tiny{$\pm$ 0.1084} & \color{lightgray} 0.3106 \tiny{$\pm$ 0.1218} \\

 %%%%%%%%%%%%%%%%%%
 & H-InfoNCE$_\mathrm{MLP}$ 
 & 0.9892 \tiny{$\pm$ 0.0023} & 0.9610 \tiny{$\pm$ 0.0098} & 0.9149 \tiny{$\pm$ 0.0348} 
&
0.9856 \tiny{$\pm$ 0.0075} & 0.9288 \tiny{$\pm$ 0.0175} & 0.7633 \tiny{$\pm$ 0.0576}
\\

\bottomrule

\end{tabular}
}
\end{center}
\vspace{-0.2in}
\end{table}

In this section, we study the effect of complexity of the conditional variance in $p(\rvz^+ \mid \rvz)$. Specifically, we sample correlated latents $\rvc \sim \gN(0, \Sigma)$, where $\Sigma$ is sampled from an Inverse-Wishart distribution (mean $\rmI$, with typically nonzero correlations), and
sample $\rvc^+$ from different conditional distributions $p(\rvc^+ \mid \rvc)$. Style latents are sampled independently: $\rvs, \rvs^+ \sim \gN(0, \mathbf{I})$, yielding $\rvz = [\rvc, \rvs]$ and $\rvz^+ = [\rvc^+, \rvs^+]$. 
Denote the training latent distribution as $p(\rvz)$.
A random invertible MLP parameterizing $g$ (details in \S\ref{sec:app:numerical_details}) maps these latents to observations $\rvx$, $\rvx^+$ via Eq.~\ref{eq:dgp}. We then train $f$ on the $(\rvx, \rvx^+)$ pairs under the SSL algorithms and freeze it. 

In the evaluation phase, we train a linear regressor to predict $\rvc$ from the frozen embeddings, $f(\rvx)$, where $\rvx = g([\rvc, \rvs])$.
We perform three evaluations by varying the training and testing distributions \emph{of the regressor}:
\begin{itemize}[leftmargin=*]
    \vspace{-0.1in}
    \item $p(\rvz)$: both train and test use $\rvz \sim p(\rvz)$.
    \vspace{-0.1in}
    \item $\gN(0, 5\cdot\rmI)$: both train and test use $\rvz \sim \gN(0, 5\cdot\rmI)$. This tests embeddings under covariate shift.
    \vspace{-0.1in}
    \item $\gN(0, 5\cdot\rmI)_\mathrm{OOD}$: train on $\rvz \sim p(\rvz)$ but test on $\rvz \sim \gN(0, 5\cdot\rmI)$.  This corresponds to a practical scenario where distribution shifts happen after deployment when both the encoder and regressor are frozen. This setting requires not only a robust regressor but also that the representation be well aligned across the two distributions~\citep{ruan2022optimal}.
    % \vspace{-0.05in}
\end{itemize}

Following prior works~\citep{zimmermann2021contrastive, kugelgen2021selfsupervised}, we vary the latent space assumptions (unbounded or hypersphere) and model flexibility (InfoNCE, AnInfoNCE, or H-InfoNCE) by changing the similarity function.

\subsubsection{Unimodal $p(\rvc^+ \mid \rvc)$}

We first construct a unimodal conditional, where we expect H-InfoNCE to suffice.
We sample $\rvc^+$ following
$
\rc_i^+ \mid \rvc
\sim
\gN\left(\rc_i^+; \rc_i, \sigma(\rvc)_i^2\right)
$, with $\sigma(\rvc)$ either $0$, isotropic, anisotropic, or heteroscedastic in which case $\sigma(\cdot)$ is a random affine function followed by \texttt{softplus} activation.

Table~\ref{tab:numerical-simple} leads to two main observations. First, models achieve high performance when both their embedding space and model flexibility match the true conditional $p(\rvc^+ \mid \rvc)$; otherwise we see a decrease in performance, which corroborates the findings of~\citet{zimmermann2021contrastive}. Notably, we see a clear performance drop with InfoNCE and AnInfoNCE with normalized embedding space. H-InfoNCE improves the performance by a large margin by capturing heteroscedasticity, consistent with Proposition~\ref{thm:C1-main}. 
Second, while latent correlations help all models perform well on in-distribution data $p(\rvz)$, 
flexible models’ improved performance is more pronounced under OOD evaluation.
Under heteroscedastic noise, the encoders learned with InfoNCE and AnInfoNCE fall short, even trailing the identity function. Interestingly, when $\mathrm{Var}(\rvc^+ \mid \rvc) = 0$, generalization performance of InfoNCE is weaker than the best models in each block, supporting our hypothesis that 
naturally varying pairs that have independent variation along latent factors can help generalization.

\subsubsection{Complex $p(\rvc^+ \mid \rvc)$}

In this experiment, we design a DGP where $p(\rvc^+ \mid \rvc)$ is both multimodal and heteroscedastic. We hypothesize that natural pairs usually differ sparsely in $\rvc$, and the differed factors are sometimes conditioned on a latent variable.
Therefore, we randomly select some dimensions of $\rvc^+$ and $\rvc$ to be shared, while the rest follow Gaussians 
conditioned on 
a latent variable $\boldsymbol{\kappa}$, i.e., $\rc_i, \rc_i^+ \mid \boldsymbol{\kappa} \sim \gN\left(\mu(\boldsymbol{\kappa})_i, \sigma(\boldsymbol{\kappa})_i^2\right)$. See \S\ref{sec:app:numerical_details} for details.

%\begin{table}[t]
\begin{wraptable}{r}{0.45\textwidth}
\setlength{\tabcolsep}{4pt}
\vspace{-0.3in}  % move the table up
\caption{
Linear regression $R^2$ on complex $p(\rvc^+ \mid \rvc)$. All models normalize embeddings and AdaSSL outperforms baselines.}
\label{tab:numerical_complex}
\begin{center}
\vspace{-0.1in}
\resizebox{\linewidth}{!}{

\begin{tabular}{lccc}
\toprule

\bf Model & $p(\rvz)$
& $\gN(0, 5\cdot\mathbf{I})$ 
& $\gN(0, 5 \cdot \mathbf{I})_\mathrm{OOD}$ \\

\midrule

%%%%%%%%%%%%%%%%%% 
InfoNCE 
& 0.5210 \tiny{$\pm$ 0.1611} & 0.5024 \tiny{$\pm$ 0.0850} & 0.0395 \tiny{$\pm$ 0.3141} \\
%%%%%%%%%%%%%%%%%% 
AnInfoNCE 
& 0.5446 \tiny{$\pm$ 0.1745} & 0.5578 \tiny{$\pm$ 0.1271} & 0.1652 \tiny{$\pm$ 0.2261} \\
%%%%%%%%%%%%%%%%%% 
\rowcolor{wash} H-InfoNCE$_\mathrm{MLP}$ & \underline{0.8750} \tiny{$\pm$ 0.0658} & 0.7784 \tiny{$\pm$ 0.0915} & 0.5471 \tiny{$\pm$ 0.2480} \\
%%%%%%%%%%%%%%%%%% 
\rowcolor{wash} AdaSSL-V 
& 0.8609 \tiny{$\pm$ 0.0740} & {\bf 0.8656} \tiny{$\pm$ 0.0195} & {\bf 0.6638} \tiny{$\pm$ 0.0956} \\
%%%%%%%%%%%%%%%%%% 
\rowcolor{wash} AdaSSL-S 
& {\bf 0.9187} \tiny{$\pm$ 0.0174} & \underline{0.8472} \tiny{$\pm$ 0.0292} & \underline{0.6325} \tiny{$\pm$ 0.0737} \\

\bottomrule

\end{tabular}
}
\end{center}
\vspace{-0.2in}
% \vspace{-0.5in}
\end{wraptable}
%\end{table}

Table~\ref{tab:numerical_complex} shows that InfoNCE and AnInfoNCE struggle to recover the latent factors, especially on the hardest OOD evaluation, $\gN(0, 5\cdot\rmI)_\mathrm{OOD}$.
H-InfoNCE improves performance, and AdaSSL variants improve it further. We visualize the learned conditionals in \S\ref{sec:app:results:density}, where AdaSSL best fits the ground truth, suggesting that its improvement comes from more accurate uncertainty modeling.

\subsection{Synthetic Images}
\label{sec:exp:ident}

\begin{figure}[t!]
    \centering
    \begin{minipage}{0.48\textwidth}
    \centering
\captionof{table}{Identifiability results on 3DIdent. AdaSSL achieves the best disentanglement and $R^2$ scores. ``\protect\dittotikz'' \, denotes ``same as above''.}
\label{tab:ident}
\vspace{-0.1in}
\resizebox{\linewidth}{!}{
\begin{tabular}{lccc}
\toprule

\bf Model & \bf Pairing & {\bf DCI disent.} ($\uparrow$) & $R^2$ ($\uparrow$) \\

\midrule

$\beta$-VAE$_\mathrm{\beta=1}$ & - & 0.2076 \tiny{$\pm$ 0.0243} & 0.6649 \tiny{$\pm$ 0.0307} \\

%%%%%%%%%%%%%
$\beta$-VAE$_\mathrm{\beta=16}$ & - & 0.1883 \tiny{$\pm$ 0.0191} & 0.6672 \tiny{$\pm$ 0.0216} \\

%%%%%%%%%%%%%
$\beta$-VAE$_\mathrm{\beta=100}$ & - & 0.3352 \tiny{$\pm$ 0.0468} & 0.6691 \tiny{$\pm$ 0.0342} \\

%%%%%%%%%%%%%
AdaGVAE$_\mathrm{\beta=1}$ & Natural & 0.4098 \tiny{$\pm$ 0.0413} & 0.6436 \tiny{$\pm$ 0.0343} \\

%%%%%%%%%%%%%
AdaGVAE$_\mathrm{\beta=16}$ & \dittotikz & 0.3800 \tiny{$\pm$ 0.0131} & 0.6511 \tiny{$\pm$ 0.0141} \\

%%%%%%%%%%%%%
AdaGVAE$_\mathrm{\beta=100}$ & \dittotikz & \underline{0.4582} \tiny{$\pm$ 0.0154} & 0.6213 \tiny{$\pm$ 0.0143}\\

\midrule

%%%%%%%%%%%%%
InfoNCE & Standard 
& 0.1447 \tiny{$\pm$ 0.0032} & 0.3382 \tiny{$\pm$ 0.0074} \\

%%%%%%%%%%%%%
AnInfoNCE & \dittotikz 
& 0.1349 \tiny{$\pm$ 0.0007} & 0.3704 \tiny{$\pm$ 0.0113} \\

%%%%%%%%%%%%%
InfoNCE & Natural 
& 0.1178 \tiny{$\pm$ 0.0073} & 0.8184 \tiny{$\pm$ 0.0047} \\

%%%%%%%%%%%%%
AnInfoNCE & \dittotikz 
& 0.2772 \tiny{$\pm$ 0.0184} & 0.8243 \tiny{$\pm$ 0.0002} \\

%%%%%%%%%%%%%
\rowcolor{wash} AdaSSL-V$_\mathrm{Additive}$ & \dittotikz & {\bf 0.4661} \tiny{$\pm$ 0.0467} & 0.8857 \tiny{$\pm$ 0.0012} \\

%%%%%%%%%%%%%
\rowcolor{wash} AdaSSL-V$_\mathrm{Linear}$ & \dittotikz & 0.2756 \tiny{$\pm$ 0.0266} & {\bf 0.9331} \tiny{$\pm$ 0.0077} \\

%%%%%%%%%%%%%
\rowcolor{wash} AdaSSL-V$_\mathrm{MLP}$ & \dittotikz & 0.1027 \tiny{$\pm$ 0.0048} & 0.8948 \tiny{$\pm$ 0.0017} \\

%%%%%%%%%%%%%
\rowcolor{wash} AdaSSL-S & \dittotikz & 0.1777 \tiny{$\pm$ 0.1009} & \underline{0.9309} \tiny{$\pm$ 0.0096} \\

\bottomrule

\end{tabular}
}
\end{minipage}
    \hfill
    % First minipage for the figure
    \begin{minipage}{0.48\textwidth}
        \centering
        \includegraphics[width=\linewidth]{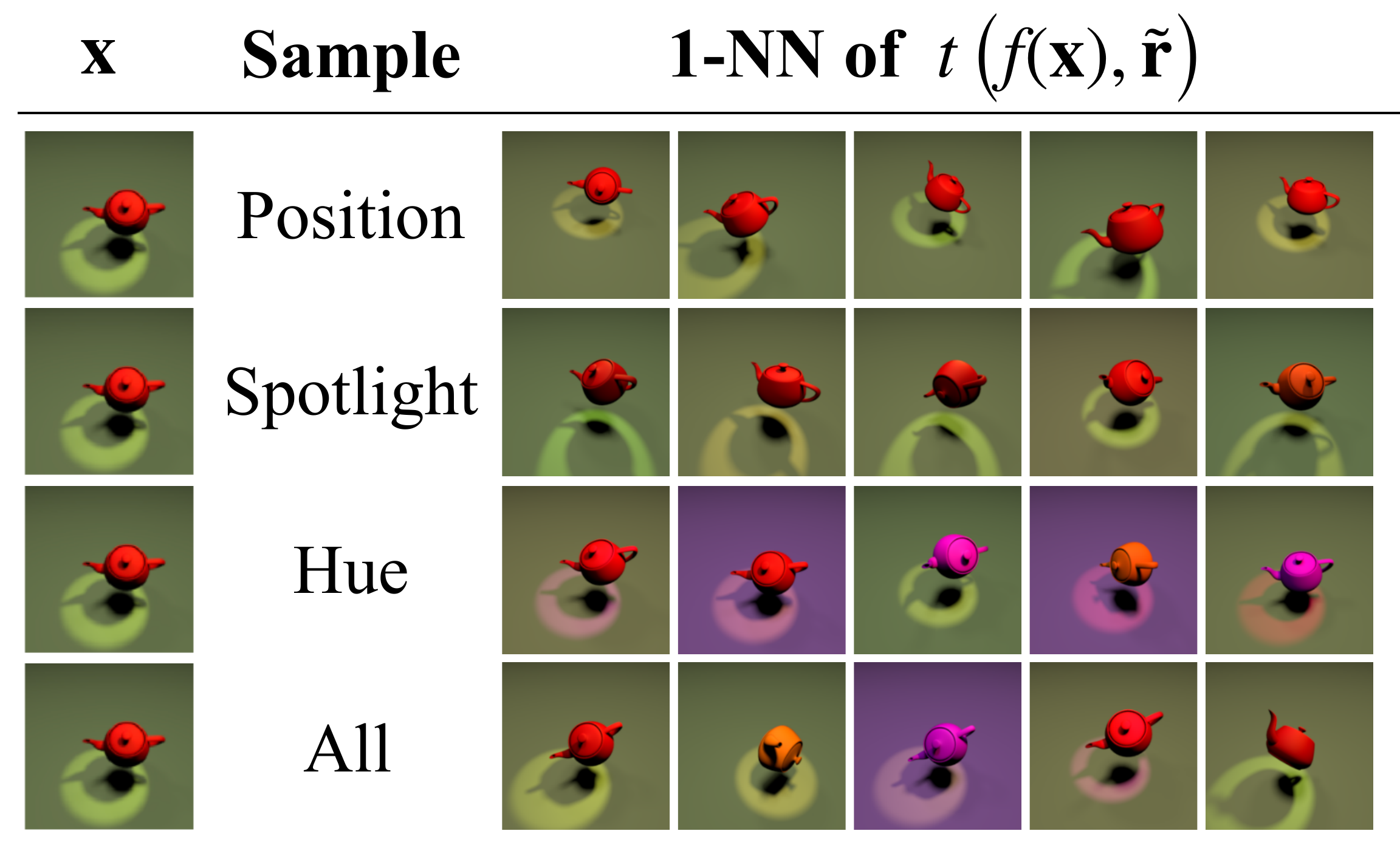}
        \vspace{-0.2in}
        \caption{AdaSSL-V performs controllable retrieval. From the query image $\rvx$, we sample $\tilde{\rvr}$ from different dimensions of the learned prior $p_\theta(\rvr \mid \rvx)$ which correspond to interpretable changes in the edited image $t(f(\rvx), \tilde{\rvr})$.}
        \label{fig:retrieval}
    \end{minipage} 
% Second minipage for the table
\vspace{-0.2in}
\end{figure}

We next show AdaSSL can be used to recover \emph{all} data-generating factors from natural pairs on 3DIdent~\citep{zimmermann2021contrastive}, a dataset of realistically rendered images of a teapot varying in ten data generating factors such as position, spotlight, and hue. 
Following~\citet{locatello2020weakly}, we generate natural pairs by first drawing two samples from the marginal latent distribution. Then, each latent coordinate is replaced with some probability by the corresponding coordinate from the other sample, resulting in paired latent vectors that differ sparsely~(details in \S\ref{sec:app:imp:3dident}).
We evaluate (a) disentanglement 
in the learned embeddings with the DCI disentanglement score~\citep{eastwood2018a}, and (b) the recovery of latent factors, i.e., ``empirical'' identifiability, up to affine transformations with $R^2$.

Table~\ref{tab:ident} shows that $\beta$-VAE and AdaGVAE fail to identify the latent factors, though AdaGVAE achieves decent disentanglement. InfoNCE with augmentations performs worse, likely because augmentation invariance conflicts with the CRL objective. SSL baselines using natural pairs achieve good latent recovery but yield more entangled latent factors. 

We hypothesize that AdaSSL's regularization encourages efficient encodings of $\rvr$, akin to $\beta$-VAE~\citep{higgins2017beta, burgess2018understanding}. Since the \emph{differences} between paired latents are coordinate-wise independent, aligning $\rvr$'s axes with them lets the aggregated posterior $\E_{p(\rvx^+ \mid \rvx)}[q(\rvr \mid \rvx, \rvx^+)]$ factorize, lowering its KL to the factorized prior $p(\rvr \mid \rvx)$. Therefore, we expect some disentanglement in $\rvr$. To verify this, we vary the complexity of the editing function $t$ (additive, linear, nonlinear) denoted by subscripts. 
With simpler $t$, the dimensions of $f(\rvx)$ must align more directly with those of $\rvr$, making any disentanglement in $\rvr$ more visible in $f(\rvx)$. Indeed, simpler $t$ leads to more disentagled embeddings while consistently outperforming baselines on regression. In particular, AdaSSL-V$_\mathrm{Linear}$ and AdaSSL-S, which both use linear editing functions, 
achieve the highest $R^2$.  

To better understand the learned $\rvr$, we visualize its effect by retrieving nearest neighbor of a query image $\rvx$ after editing it with samples $\tilde{\rvr}$ (Fig.~\ref{fig:retrieval}). Given evidence of disentanglement, we expect sampling specific latent dimensions to induce meaningful changes in the edited embeddings  $t(f(\rvx), \tilde{\rvr})$. Concretely, we sample $\tilde{\rr}_i \sim p_\theta(\rr_i \mid \rvx)$ for $i \in \sL \subseteq [d_r]$ for some set of latent indices $\sL$, and fix all others to their expectations.
Fig.~\ref{fig:retrieval} shows results for three different $\sL$'s. We find that we can retrieve objects that differ in position, spotlight, and color, while leaving most other factors unchanged, though orientation remains entangled with other factors.
Finally, when sampling from the full prior, we retrieve images that differ sparsely in latent factors, consistent with the training DGP.

Together, these results highlight SSL as a promising path for multiview CRL given its efficiency (no reconstruction) and demonstrated scalability to high-dimensional images.

\begin{figure}[t!]
    \centering
        \begin{minipage}{0.56\textwidth}
        \centering

\captionof{table}{
        %\footnotesize 
        Linear $F_1$ scores on representations (encoder output) and embeddings (projector output) trained on CelebA, under weak or strong augmentations. AdaSSL+GT, 
        %and BYOL+GT
        a soft performance upper bound, 
        uses the ground-truth attribute difference as $\rvr$.}
        \label{tab:cba}
        \vspace{-0.1in}
        \resizebox{\linewidth}{!}{
        \renewcommand{\arraystretch}{1.1}
\begin{tabular}{clccccc}
\toprule

& \multirow{2}{*}{\bf Model}
& \multirow{2}{*}{\bf Pairing} 
& \multicolumn{2}{c}{\textsc{weak augmentation}} & \multicolumn{2}{c}{\textsc{strong augmentation}} \\
% \cmidrule(lr){3-4} \cmidrule(lr){5-6}
& & & \bf Repr. & \bf Emb.  & \bf Repr. & \bf Emb. \\

\midrule

%%%%%%%%%%%%%%%
\parbox[t]{2mm}{\multirow{9}{*}{\rotatebox[origin=c]{90}{\textsc{Contrastive}}}}
& InfoNCE & Standard & 0.2698 \tiny{$\pm$ 0.0030} & 0.1295 \tiny{$\pm$ 0.0051} & 0.5965 \tiny{$\pm$ 0.0004} & 0.5694 \tiny{$\pm$ 0.0011} \\

%%%%%%%%%%%%%%%
& AnInfoNCE & \dittotikz & 
0.2534 \tiny{$\pm$ 0.0064} & 0.1822 \tiny{$\pm$ 0.0036} & \underline{0.5967} \tiny{$\pm$ 0.0015} & \textbf{0.5742} \tiny{$\pm$ 0.0030} \\

%%%%%%%%%%%%%%%
& InfoNCE & Natural  & 0.5473 \tiny{$\pm$ 0.0027} & 0.3747 \tiny{$\pm$ 0.0051} & 0.5784 \tiny{$\pm$ 0.0008} & 0.4941 \tiny{$\pm$ 0.0035} \\

%%%%%%%%%%%%%%%
& AnInfoNCE & \dittotikz & 
0.5413 \tiny{$\pm$ 0.0010} & 0.4249 \tiny{$\pm$ 0.0032} & 0.5789 \tiny{$\pm$ 0.0008} & 0.4987 \tiny{$\pm$ 0.0033} \\

%%%%%%%%%%%%%%%
& LieSSL  & \dittotikz 
& 0.5029 \tiny{$\pm$ 0.0061} & 0.4525 \tiny{$\pm$ 0.0098} & 0.5926 \tiny{$\pm$ 0.0036} & 0.5685 \tiny{$\pm$ 0.0015} \\

%%%%%%%%%%%%%%%
& \cellcolor{wash}H-InfoNCE$_\mathrm{MLP}$ & \cellcolor{wash}\dittotikz 
& \cellcolor{wash}0.5521 \tiny{$\pm$ 0.0042} & \cellcolor{wash}0.4559 \tiny{$\pm$ 0.0058} & \cellcolor{wash}0.5789 \tiny{$\pm$ 0.0016} & \cellcolor{wash}0.5138 \tiny{$\pm$ 0.0023} \\

%%%%%%%%%%%%%%%
& \cellcolor{wash}AdaSSL-V & \cellcolor{wash}\dittotikz   & \cellcolor{wash}{\bf 0.5784} \tiny{$\pm$ 0.0025} & \cellcolor{wash}{\bf 0.4794} \tiny{$\pm$ 0.0015} & \cellcolor{wash}{\bf 0.6014} \tiny{$\pm$ 0.0008} & \cellcolor{wash}\underline{0.5706} \tiny{$\pm$ 0.0034} \\

%%%%%%%%%%%%%%%
 & \cellcolor{wash}AdaSSL-S 
 & \cellcolor{wash}\dittotikz 
 & \cellcolor{wash}
 \underline{0.5676} \tiny{$\pm$ 0.0049} 
 & \cellcolor{wash}
 \underline{0.4581} \tiny{$\pm$ 0.0016} 
 & \cellcolor{wash}
 0.5911 \tiny{$\pm$ 0.0014} 
 & \cellcolor{wash}0.5654 \tiny{$\pm$ 0.0007} \\

\cmidrule{2-7}

%%%%%%%%%%%%%%%
& AdaSSL+GT & \dittotikz  
& 0.6818 \tiny{$\pm$ 0.0011} & 0.6840 \tiny{$\pm$ 0.0019} & 0.6779 \tiny{$\pm$ 0.0003} & 0.6832 \tiny{$\pm$ 0.0011} \\

\midrule

%%%%%%%%%%%%%%%
\parbox[t]{2mm}{\multirow{6}{*}{\rotatebox[origin=c]{90}{\quad\textsc{Distillation}}}}
& BYOL & Standard & 0.2989 \tiny{$\pm$ 0.0025} & 0.1832 \tiny{$\pm$ 0.0037} & 0.5368 \tiny{$\pm$ 0.0013} & \underline{0.5043} \tiny{$\pm$ 0.0037}\\

%%%%%%%%%%%%%%%
& BYOL & Natural & \underline{0.5465} \tiny{$\pm$ 0.0018} & \underline{0.4263} \tiny{$\pm$ 0.0019}  & \underline{0.5608} \tiny{$\pm$ 0.0004} & 0.5019 \tiny{$\pm$ 0.0013} \\

%%%%%%%%%%%%%%%
& \cellcolor{wash}AdaSSL-V & \cellcolor{wash}\dittotikz  & \cellcolor{wash}{\bf 0.5816} \tiny{$\pm$ 0.0035} & \cellcolor{wash}{\bf 0.5067} \tiny{$\pm$ 0.0051} & \cellcolor{wash}{\bf 0.5702} \tiny{$\pm$ 0.0017} & \cellcolor{wash}{\bf 0.5302} \tiny{$\pm$ 0.0018} \\

% %%%%%%%%%%%%%%%
% & \cellcolor{wash}AdaSSL-S & \cellcolor{wash}\dittotikz & \underline{0.5544} \tiny{$\pm$ 0.0183} & \underline{0.4834} \tiny{$\pm$ 0.0198} & \cellcolor{wash} 0.5334 \tiny{$\pm$ 0.0057} & \cellcolor{wash} 0.4772 \tiny{$\pm$ 0.0051} \\

%%%%%%%%%%%%%%%
& \cellcolor{wash}AdaSSL-S & \cellcolor{wash}\dittotikz & \cellcolor{wash}0.4520 \tiny{$\pm$ 0.0130} & \cellcolor{wash}0.3014 \tiny{$\pm$ 0.0107} & \cellcolor{wash} 0.5334 \tiny{$\pm$ 0.0057} & \cellcolor{wash} 0.4772 \tiny{$\pm$ 0.0051} \\

\cmidrule{2-7}

%%%%%%%%%%%%%%%
& AdaSSL+GT & \dittotikz & 0.5948 \tiny{$\pm$ 0.0042} & 0.5730 \tiny{$\pm$ 0.0016} & 0.5984 \tiny{$\pm$ 0.0024} & 0.5872 \tiny{$\pm$ 0.0003} \\

\bottomrule

\end{tabular}
}
    \end{minipage}
    \hfill
    \begin{minipage}{0.4\textwidth}
        \centering
        \includegraphics[width=\linewidth,trim=6 2 2 2,clip]{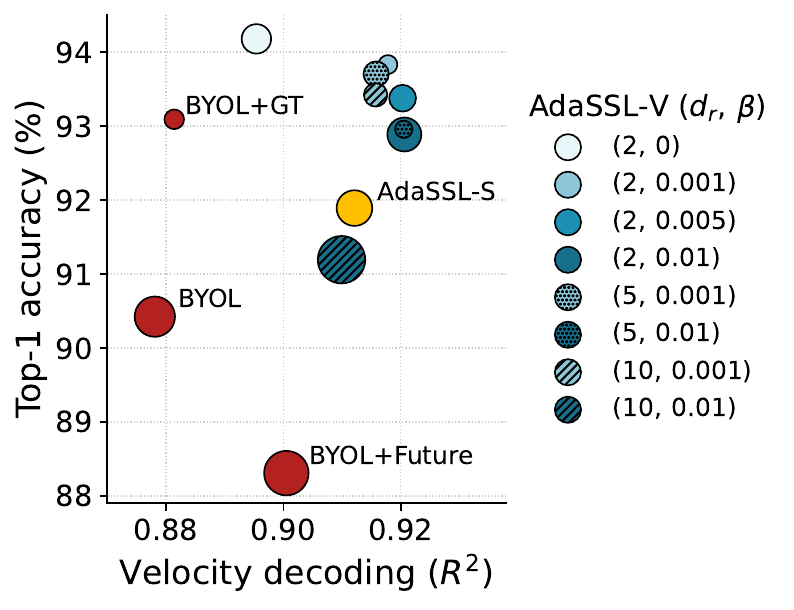}
        \vspace{-0.2in}
  \caption{Velocity decoding and digit recognition from representations on stochastic Moving-MNIST. Marker radius indicates standard deviation.
  }
  \label{fig:pareto}
    \end{minipage} 
    
\end{figure}

\subsection{Natural images}
\label{sec:exp:images}

Although we do not have access to the ground-truth data generating factors of natural images, we perform experiments on CelebA~\citep{liu2015faceattributes}
which contains celebrity images with annotated facial attributes.
We obtain real-world natural pairs 
by matching different photos of the same celebrity, which 
differ sparsely in their facial attributes~(\S\ref{sec:app:cba_details}). 
We then train models on paired images and evaluate with affine probes on 40 facial attributes of unseen identities, inducing a natural distribution shift.
Results in Table~\ref{tab:cba} show that standard pairing rely on strong augmentations to work well.
However, using natural pairs largely reduces the gap, and
AdaSSL-V is the only method that consistently improves upon the standard pairing baselines across all settings. 
This exposes the weakness of vanilla SSL objectives under complex conditionals in natural pairs. We still observe a gap between AdaSSL and AdaSSL+GT, indicating room for improvement in future work.

Interestingly, while AdaSSL-S consistently improves InfoNCE on natural pairs, this benefit is not observed when combined with BYOL. We hypothesize that explicitly regularizing the embeddings' information content (as in the denominator of Eq.~\ref{eq:infonce}) pressures the model to utilize $\rvr$ effectively. In contrast, distillation-based methods lack this direct tension and may require additional regularization on $\rvr$ to achieve the same effect. We provide empirical support of this hypothesis in \S\ref{sec:app:results:adassl-s}.

We show AdaSSL scales to larger data and backbones in \S\ref{sec:app:results:inat}.

\subsection{World modeling on videos}
\label{sec:exp:videos}

In sections above, we have shown AdaSSL models $p(\rvz^+ \mid \rvz)$ well. 
Since modeling this transition distribution is central to world modeling on videos, we test AdaSSL on it.
We hypothesize that inability to model uncertainty about the future drives the model to discard variant factors.
We introduce uncertainty by injecting random changes in velocity between two segments of Moving-MNIST~\citep{srivastava2015unsupervised, drozdov2024video}, which are then used as positive pairs. We use BYOL~\citep{grill2020bootstrap} as the SSL method for this experiment, whose predictions can condition on a future segment (BYOL+Future) similar to~\citet{liu2025future} or the ground-truth change in velocity (BYOL+GT).
Fig.~\ref{fig:pareto} shows that AdaSSL captures both the invariant factor, digit, and the variant factor, velocity, better than baselines. Ablation on AdaSSL-V shows its robustness to the dimensionality of $\rvr$ under proper regularization. 

We include empirical analysis on AdaSSL's world modeling capabilities in \S\ref{sec:app:results:mnist} and additional experimental details in \S\ref{sec:videoimp}.

\section{Related work}

\paragraph{Self-supervised learning.}
SSL in the latent space has evolved from solving hand-crafted \emph{pretext tasks}~\citep{noroozi2016unsupervised, doersch2015unsupervised, dosovitskiy2014discriminative, gidaris2018unsupervised} to learning semantic-preserving representations from invariance to augmentations~\citep{oord2018representation, wu2018unsupervised, gutmann2010noise, pmlr-v119-chen20j, caron2020unsupervised, wu2018unsupervised, he2020momentum, radford2021learning, caron2021emerging, zbontar2021barlow, bardes2022vicreg, pmlr-v139-ermolov21a, chen2021exploring, grill2020bootstrap, assran2023self, baevski2022data2vec, caron2020unsupervised, he2016deep}. Studies have also explored the relationship between invariant representations and variational inference~\citep{bizeul2024a, sinha2021consistency}. Beyond invariance, equivariant representations preserve transformation information~\citep{hinton2011transforming}. 
In SSL, this is achieved by providing augmentation parameters to the predictor~\citep{garrido2023self, ghaemi2024seqjepa, devillers2023equimod, garrido2024learning, park2022learning}, or using subspaces for different invariances~\citep{xiao2021what, eastwood2023selfsupervised}.
However, these approaches are tied to chosen augmentations and break down when the sources of uncertainty are unknown. Alternatively, one can exploit the invariance between observation pairs that are transformed similarly~\citep{shakerinava2022structuring}, or model transformation with Lie groups~\citep{ibrahim2022robust}; the latter requires jointly optimizing the vanilla SSL loss and only learns a single factor of variation.
Lastly, \citet{lavoie2024modeling} reduce prediction uncertainty between image–caption pairs by conditioning visual representations on textual ones through a cross-attention mechanism, thereby improving the feature diversity of contrastive vision–language models.
Unlike prior work, our method does not require transformation labels, handles multiple varying factors, and provides a simple, theoretically justified objective that is compatible with standard SSL methods across diverse settings.

\paragraph{Causal representation learning.}

Much research examines recovering data-generating factors and their causal relations~\citep{hyvarinen2016unsupervised, scholkopf2021toward, von2023nonparametric, ahuja2023interventional, brehmer2022weakly, locatello2020weakly, lachapelle2022disentanglement, lippe2023biscuit, klindt2021towards, ahuja2022weakly, lippe2022citris, yao2025unifying}.
While offering theoretical guarantees, these methods often rely on strong assumptions or probabilistic generative models, limiting scalability.
SSL has been connected to CRL~\citep{zimmermann2021contrastive, kugelgen2021selfsupervised, rusak2025infonce, yao2024multiview}, where studies focus on identifying the content 
factors that follow simple conditionals~(\S\ref{sec:prelim:con}).
This work relaxes these assumptions by allowing structured variation between paired latents and demonstrates strong performance on multiview CRL, a step towards understanding and advancing SSL~\citep{reizinger2025position}.

\paragraph{World modeling with SSL.}

Unlike image-based SSL that rely on augmentations, video world models with SSL learn the transition dynamics of videos, often by predicting target frames given some context~\citep{sermanet2018time, feichtenhofer2021large, bardes2024revisiting, assran2025v, schwarzer2021dataefficient, guo2022byol}. Through the process, the model learns useful representations for downstream tasks such as video understanding. A key challenge is that uncertainty grows with the temporal gap between positive pairs, making the future harder to predict and forcing models to fix a temporal resolution~\citep{feichtenhofer2021large, bardes2024revisiting}, which may limit their ability to learn features at
different levels of abstractions~\citep{zacks2001event}. 
Introducing a latent variable $\rvr$, as we do, can reduce the uncertainty and learn more diverse features~(\S\ref{sec:exp:videos}).
Finally, although we focus on improving SSL in the latent space,
we note there are successful approaches that predict in the observation space~\citep{schmidt2024learning, tong2022videomae, feichtenhofer2022masked, pmlr-v235-jang24c, bruce2024genie, yang2024learning}.

\section{Limitations and future work}

In this section, we discuss the limitations of our work and suggest future directions.

First, while we perform experiments on controlled settings with ResNet-18/50, it would be interesting to explore AdaSSL's behavior with ViT backbones and on uncurated data, such as web-scale image-caption pairs and natural videos. On such data, one could investigate the types of uncertainty (e.g., object-level versus global, kinematic versus intrinsic) that AdaSSL prioritizes to learn. Two recent works, while potentially incorporating stronger inductive bias, share a similar conceptual motivation to AdaSSL on these specific settings~\citep{lavoie2024modeling, hoang2025midway}. 

Second, the world model $t$ in AdaSSL must reason about how the latent cause $\rvr$ transforms the environment, i.e., $p(\rvz^+ \mid \rvz, \rvr)$, a task that naturally scales in difficulty with the complexity of the underlying transition. However, we hypothesize that operating in the latent space makes this transition distribution considerably easier to model than in observation space. Future work could explore more structured transition models to better capture how latent factors drive world dynamics.

Third, a natural extension of AdaSSL is action-free representation learning on videos, where the model discovers an implicit action space from naturally occurring changes. This learned action space can then support downstream tasks. For example, one may train a decoder on the frozen world model and prior over $\rvr$ for controllable video generation. We can also learn a projection between actions and the learned latent space to enable \emph{planning}~\citep{sobal2025learning, assran2025v}.
We perform a preliminary step of this idea where we evaluate the world model's capability to predict future trajectories with a given action in \S\ref{sec:retrieval_mnist}.

Finally, we do not provide theoretical guarantees for identifiability and leave it as future work.

\section{Conclusion}

In this work, we reveal the 
limitation of SSL methods when trained on naturally paired data and introduce AdaSSL, which learns a latent variable that captures the uncertainty 
in prediction targets.
Our approach consistently outperforms existing methods across all benchmarks. We believe this is a promising step in expanding the capability of SSL methods, leading to potentially fruitful advancements in learning generalizable representations, 
identifying latent factors from high-dimensional images,
and world modeling with uncertainty.

% \subsubsection*{Author Contributions}
% If you'd like to, you may include  a section for author contributions as is done
% in many journals. This is optional and at the discretion of the authors.

\subsubsection*{Acknowledgments}
We appreciate the constructive feedback from the anonymous reviewers. We also thank Siddarth Venkatraman, Michael Chong Wang, Emiliano Penaloza, and Omar Salemohamed for insightful discussions, Anirudh Buvanesh and Lucas Maes for precise pointers to literature, and Mehran Shakerinava for proofreading. Additionally, YZ would like to thank Xiaofeng Zhang and Dhanya Sridhar for helpful feedback during the early development of the idea.

YZ, HG, EBM, and LC acknowledge the generous support of the CIFAR AI Chair program. Additionally, YZ is supported by the AI Scholarship from Université de Montréal. HG is supported by the UNIQUE Centre~(unique.quebec).
EBM is supported by NSERC Discovery Grants (RGPIN-2022-05033). SB is supported by NSERC Discovery Grants (RGPIN-2023-03875) and the Canada Excellence Research Chairs (CERC) Program.
This research was enabled in part by compute resources provided by Mila~(mila.quebec) and the Digital Research Alliance of Canada~(alliancecan.ca).

\subsection*{Ethics statement}

This work uses the CelebA dataset~\citep{liu2015faceattributes}, which consists of publicly available celebrity face images collected from the internet. We use it solely for non-commercial academic research in facial attribute learning, under its research-only license. Our use does not involve face generation or manipulation, and all experiments are conducted strictly for evaluating algorithmic performance, in line with the intended use of the dataset.

\subsection*{Reproducibility statement}

We comprehensively detail our experimental setup in \S\ref{sec:app:implementation}. Code to reproduce our results is  available at \texttt{\url{https://github.com/SkrighYZ/AdaSSL}}.

\bibliography{ref}
\bibliographystyle{iclr2026_conference}

\clearpage
\appendix

\etocdepthtag.toc{mtappendix}
\etocsettagdepth{mtchapter}{none}
\etocsettagdepth{mtappendix}{subsection}

{
    % Force links to black inside this group
    \hypersetup{linkcolor=black}

    \etocsettocstyle{\noindent{\LARGE\sc Appendix}\par\vspace{0.3in}}{}
    \tableofcontents
}

\clearpage

\section{Derivation of Eq.~\ref{eq:reg}}
\label{sec:app:derivation}

\begin{align*}
&-I_{\tilde{p}}(\rvr; \rvx^+ \mid \rvx) \\
%%%%%%%%%
&\quad=
-\E_{\tilde{p}}
\left[
\log\frac{q(\rvr \mid \rvx, \rvx^+)}{\tilde{p}(\rvr \mid \rvx)}
\right] \\
%%%%%%%%%
&\quad=
-\E_{\tilde{p}}
\left[
\log\frac{q(\rvr \mid \rvx, \rvx^+)}{\tilde{p}(\rvr \mid \rvx)}
+ \log p(\rvr \mid \rvx)
- \log p(\rvr \mid \rvx)
\right] \\
%%%%%%%%%
&\quad=
-\E_{\tilde{p}}
\left[
\log\frac{q(\rvr \mid \rvx, \rvx^+)}{p(\rvr \mid \rvx)}
+ \log \frac{p(\rvr \mid \rvx)}{\tilde{p}(\rvr \mid \rvx)}
\right] \\
%%%%%%%%%
&\quad=
-\E_{p(\rvx, \rvx^+)}
\left[\KL
(
q(\rvr \mid \rvx, \rvx^+)
\|
p(\rvr \mid \rvx)
)
\right]
+ 
\E_{\tilde{p}}
\left[
\log \frac{\tilde{p}(\rvr \mid \rvx)}{p(\rvr \mid \rvx)}
\right] \\
%%%%%%%%%
&\quad=
-\E_{p(\rvx, \rvx^+)}
\left[\KL
(
q(\rvr \mid \rvx, \rvx^+)
\|
p(\rvr \mid \rvx)
)
\right] 
+
\int \int \int
p(\rvx)p(\rvx^+ \mid \rvx)q(\rvr \mid \rvx, \rvx^+)
\log \frac{\tilde{p}(\rvr \mid \rvx)}{p(\rvr \mid \rvx)}
d\rvr d\rvx^+ d\rvx\\
%%%%%%%%%
&\quad=
-\E_{p(\rvx, \rvx^+)}
\left[\KL
(
q(\rvr \mid \rvx, \rvx^+)
\|
p(\rvr \mid \rvx)
)
\right] 
+
\int \int 
p(\rvx)
\left(\int
p(\rvx^+ \mid \rvx)q(\rvr \mid \rvx, \rvx^+)
d\rvx^+ 
\right)
\log \frac{\tilde{p}(\rvr \mid \rvx)}{p(\rvr \mid \rvx)}
d\rvr d\rvx\\
%%%%%%%%%
&\quad=
-\E_{p(\rvx, \rvx^+)}
\left[\KL
(
q(\rvr \mid \rvx, \rvx^+)
\|
p(\rvr \mid \rvx)
)
\right] 
+
\int \int 
p(\rvx)
\tilde{p}(\rvr \mid \rvx)
\log \frac{\tilde{p}(\rvr \mid \rvx)}{p(\rvr \mid \rvx)}
d\rvr d\rvx \\
%%%%%%%%%
&\quad=
-\E_{p(\rvx, \rvx^+)}
\left[\KL
(
q(\rvr \mid \rvx, \rvx^+)
\|
p(\rvr \mid \rvx)
)
\right]
+
\E_{p(\rvx)
\tilde{p}(\rvr \mid \rvx)}
\left[
\log \frac{\tilde{p}(\rvr \mid \rvx)}{p(\rvr \mid \rvx)}
\right] \\
%%%%%%%%%
&\quad=
-\E_{p(\rvx, \rvx^+)}
\left[\KL
(
q(\rvr \mid \rvx, \rvx^+)
\|
p(\rvr \mid \rvx)
)
\right]
+
\E_{p(\rvx)}
\left[\KL
(
\tilde{p}(\rvr \mid \rvx)
\|
p(\rvr \mid \rvx)
)
\right] \\
%%%%%%%%%
&\quad\geq
-\E_{p(\rvx, \rvx^+)}
\left[\KL
(
q(\rvr \mid \rvx, \rvx^+)
\|
p(\rvr \mid \rvx)
)
\right]
\, .
\end{align*}

\section{Theory}
\label{sec:app:proof}

\begin{lemma} \label{lem:range}
Let $A\in\mathbb R^{m\times n}$ and let $\Sigma\in\mathbb R^{n\times n}$ be symmetric positive definite. Then
\[
\operatorname{range}(A\Sigma A^{\!\top}) = \operatorname{range}(A).
\]
\end{lemma}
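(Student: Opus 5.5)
The plan is to prove the two inclusions separately. For the inclusion $\operatorname{range}(A\Sigma A^{\top}) \subseteq \operatorname{range}(A)$, take any vector of the form $A\Sigma A^{\top}v$. It can be written as $A w$ with $w = \Sigma A^{\top} v$, so it lies in $\operatorname{range}(A)$. This direction uses nothing about $\Sigma$.

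For the reverse inclusion, I will not argue directly but pass to orthogonal complements in $\mathbb R^m$, using the standard identity $\operatorname{range}(M)^{\perp} = \ker(M^{\top})$. The matrix $A\Sigma A^{\top}$ is symmetric, so it suffices to show
\[
\ker(A\Sigma A^{\top}) \subseteq \ker(A^{\top}).
\]
Taking orthogonal complements then gives $\operatorname{range}(A) \subseteq \operatorname{range}(A\Sigma A^{\top})$.

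To establish this kernel inclusion, let $A\Sigma A^{\top}u = 0$ and multiply on the left by $u^{\top}$. This gives
\[
0 = u^{\top}A\Sigma A^{\top}u = (A^{\top}u)^{\top}\Sigma (A^{\top}u).
\]
Positive definiteness of $\Sigma$ then forces $A^{\top}u = 0$. Note that the opposite kernel inclusion holds trivially, so in fact $\ker(A\Sigma A^{\top}) = \ker(A^{\top})$.

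The only point that needs care is that the complement step uses the symmetry of $A\Sigma A^{\top}$, which follows from the symmetry of $\Sigma$. Without it, one would have to work with $\ker\bigl((A\Sigma A^{\top})^{\top}\bigr)$ instead. As an alternative route, one could write $\Sigma = LL^{\top}$ with $L$ invertible (Cholesky) and use $\operatorname{range}(BB^{\top}) = \operatorname{range}(B)$ for $B = AL$, together with $\operatorname{range}(AL) = \operatorname{range}(A)$. I expect no real obstacle; the quadratic-form step is the crux.
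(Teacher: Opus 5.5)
Your proof is correct and follows essentially the same route as the paper: show $\ker(A\Sigma A^{\top}) = \ker(A^{\top})$ via the quadratic form $(A^{\top}u)^{\top}\Sigma(A^{\top}u)$ and positive definiteness, then take orthogonal complements using the symmetry of $A\Sigma A^{\top}$. Your separate direct proof of $\operatorname{range}(A\Sigma A^{\top}) \subseteq \operatorname{range}(A)$ is harmless but redundant, since the kernel equality already gives both inclusions.
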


\begin{proof}
For any $x \in \mathbb R^m$, we have
\[
x^{\!\top}(A \Sigma A^{\!\top})x 
= (A^{\!\top}x)^{\!\top} \Sigma (A^{\!\top}x).
\]
Since $\Sigma$ is symmetric positive definite, the right-hand side is zero if and only if 
$A^{\!\top}x = 0$. Thus,
\[
\ker(A \Sigma A^{\!\top}) = \ker(A^{\!\top}).
\]
Taking orthogonal complements yields
\[
\operatorname{range}(A \Sigma A^{\!\top}) = \operatorname{range}(A).
\]
\end{proof}
\noindent\textit{Remark.} This is a standard linear algebra fact; we include it here for completeness.

\begin{proposition} \label{thm:C1} 
Let $\mathbb{S}^k \subset \mathbb{R}^{k+1}$ denote the $k$-dimensional unit sphere. Let $g:\mathbb{R}^d \to \mathbb{R}^{d'}$ be $C^{1}$ diffeomorphic to its image, and let $f:\mathbb{R}^{d'} \to \mathbb{S}^k$ be $C^{1}$ almost everywhere. Define $h:=f\circ g:\mathbb R^{d}\to\mathbb S^{k}$. Assume further that the random vectors $z, z^{+} \in \mathbb{R}^d$ are sampled as
\[
z \sim p_Z, 
\qquad 
z^{+} = z + \varepsilon, \quad \varepsilon \sim p_\varepsilon,
\]
where $p_Z$ is not a point mass and $\varepsilon$ is independent of z, $\mathbb{E}[\varepsilon] = 0$, and $\operatorname{Cov}(\varepsilon) \succ 0$. 

Suppose that for $p_Z$-almost every $z$ we have $h(z)\in \mathbb S^k$ and $\operatorname{rank}Dh(z)=d$. Write $H = h(z)$ and $H^{+} = h(z^{+})$. Then the conditional law
\[
p_{H^{+} \mid H}(h(z^{+}) \mid h(z)),
\]
is necessarily heteroscedastic: its conditional variance depends on $h(z)$ for $p_Z$-almost every $z$.
\end{proposition}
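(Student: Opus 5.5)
We will argue by a local linearization (delta-method) computation of the conditional covariance of $H^+$ given $H$, and then show that this covariance cannot be constant in $h(z)$ because of the curvature of $\mathbb S^k$. Since $h(z)\in\mathbb S^k$, the conditional law of $H^+$ given $H=h(z)$ is that of $h(z+\varepsilon)$. This uses that $h$ is injective on the relevant support: $g$ is a diffeomorphism onto its image, and $Dh$ has full rank a.e. For small noise, or after rescaling $\varepsilon\mapsto\sigma\varepsilon$ and examining the leading order in $\sigma$, we expand
\[
h(z+\varepsilon)=h(z)+Dh(z)\varepsilon+o(\|\varepsilon\|),
\]
which gives
\[
\operatorname{Cov}(H^+\mid H=h(z)) = Dh(z)\,\Sigma_\varepsilon\,Dh(z)^{\!\top} + o(\sigma^2),
\qquad \Sigma_\varepsilon:=\operatorname{Cov}(\varepsilon)\succ 0 .
\]
By Lemma~\ref{lem:range}, the range of this leading-order covariance equals $\operatorname{range}Dh(z)$. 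Since $\operatorname{rank}Dh(z)=d$, this range is a $d$-dimensional subspace of $\mathbb R^{k+1}$.

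The key geometric step comes next. Differentiating $\|h\|_2^2\equiv 1$ gives $h(z)^{\!\top}Dh(z)=0$, so $\operatorname{range}Dh(z)\subseteq T_{h(z)}\mathbb S^k=h(z)^{\perp}$. The conditional covariance therefore has a nontrivial kernel containing the direction $h(z)$ itself: $\operatorname{Cov}(H^+\mid H)\,h(z)=0$ to leading order.

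Suppose, for contradiction, that the conditional covariance is a constant matrix $C$ for a.e.\ $z$. Then $\operatorname{range}C=\operatorname{range}Dh(z)\subseteq h(z)^\perp$ for a.e.\ $z$. In particular $h(z)\in\ker C$ for a.e.\ $z$, and $\ker C=(\operatorname{range}C)^\perp$ is a fixed subspace $V$ of dimension $k+1-d$ that is orthogonal to $\operatorname{range}Dh(z)$. Since $p_Z$ is not a point mass, $h$ takes at least two values, and by continuity and full rank it sweeps an open $d$-dimensional patch of its image. Every such point lies in $V\cap\mathbb S^k$. Differentiating along that patch, the tangent vectors $Dh(z)v$ would also lie in $V$. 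But they lie in $\operatorname{range}C=V^\perp$, so $Dh(z)=0$, which contradicts $\operatorname{rank}Dh(z)=d\ge1$. Hence the covariance must vary with $h(z)$ on a set of full measure. The "almost every" clause follows by applying the argument on any positive-measure set where the covariance is claimed constant.

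We expect the main obstacle to be making the linearization rigorous for non-infinitesimal noise. The exact covariance is not $Dh\Sigma Dh^\top$, and $f$ is only $C^1$ almost everywhere. We would first try to avoid the expansion entirely, using an exact version of the same idea. Because $\|H^+\|=1$ and $\|H\|=1$, the conditional second moment in the radial direction satisfies
\[
h(z)^{\!\top}\operatorname{Cov}(H^+\mid H)\,h(z)=\operatorname{Var}(h(z)^{\!\top}H^+\mid H).
\]
Meanwhile, the trace of the covariance equals $1-\|\mathbb E[H^+\mid H]\|^2$. A constant covariance $C$ would then force $h(z)^\top C h(z)$ to be constant on the image, a quadratic-form constraint on a $d$-dimensional patch of the sphere. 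The a.e.\ full rank of $Dh$ and the nondegeneracy of $\varepsilon$ should rule this out. If this exact route stalls, we would fall back on stating the result at leading order in the noise scale, which matches the intuition described after Proposition~\ref{thm:C1-main}.
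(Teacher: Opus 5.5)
Your proposal is correct at the same leading-order level of rigor as the paper and follows its proof essentially step for step: the delta-method covariance $\sigma^2 Dh(z)\Sigma Dh(z)^{\top}$, Lemma~\ref{lem:range} to get a fixed range $W=\operatorname{range}C$, tangency $h(z)^{\top}Dh(z)=0$ to get $h(z)\in W^{\perp}$, and differentiating that constraint to force $W\subset W^{\perp}$, hence $W=\{0\}$ (your $V=\ker C$ is the paper's $W^{\perp}$). Two small caveats: your exact, non-asymptotic route cannot succeed, since for $h(z)=(\cos z,\sin z)$ with $\varepsilon$ uniform on $[-\pi,\pi]$ the point $H^{+}$ is uniform on $\mathbb{S}^1$ given any $z$, so the conditional covariance is exactly $\tfrac12 I$; and the same example shows that full rank of $Dh$ together with $g$ being a diffeomorphism does not give the global injectivity you invoke, so it is cleaner to condition on $z$, as the paper does.
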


\begin{proof}
Fix $z$ where $h$ is $C^1$ and $\operatorname{rank}Dh(z)=d$. For $\sigma >0$ small, define $z^{+} = z + \sigma \varepsilon$ with $\varepsilon\sim p_\varepsilon$. A first-order Taylor expansion and the delta method give
\[
h(z+\sigma \varepsilon) = h(z) + Dh(z)\,\sigma \varepsilon + o(\sigma),
\]
which implies
\[
\mathrm{Cov}[h(z+\sigma \varepsilon) \mid z] = \sigma^2 Dh(z)\, \Sigma\, Dh(z)^\top + o(\sigma^2).
\]
If the conditional covariance were homoscedastic at leading order, there exists a fixed positive semidefinite matrix $C$ such that
\[
Dh(z)\, \Sigma\, Dh(z)^\top \equiv C \quad \text{for $p_Z$-almost every $z$}.
\]
Let $W:=\operatorname{range}(C)$. By Lemma~\ref{lem:range} and $\Sigma\succ 0$ we have
\[
\operatorname{range}\big(Dh(z)\big)
=\operatorname{range}\big(Dh(z)\Sigma Dh(z)^{\!\top}\big)
=\operatorname{range}(C)=W,
\]
so $\operatorname{range}(Dh(z))\equiv W$ is the same $d$-dimensional subspace for $p_Z$-almost every $z$.
Because $h(z)\in\mathbb S^{k}$ we have $\|h(z)\|^{2}\equiv 1$, so differentiating yields
\[
h(z)^{\!\top}Dh(z)=0,
\]
i.e.\ $\operatorname{range}(Dh(z))\subset h(z)^{\perp}$. Since $\operatorname{range}(Dh(z))=W$ for almost every $z$, we obtain $W\subset h(z)^{\perp}$ almost everywhere, hence $h(z)\in W^{\perp}$ for almost every $z$.

Pick any nonzero $w\in W$. Then $w^{\!\top}h(z)=0$ for almost every $z$, and differentiating gives $w^{\!\top}Dh(z)=0$ for almost every $z$, i.e.\ $w\perp \operatorname{range}(Dh(z))=W$. Thus $W\subset W^{\perp}$, which forces $W=\{0\}$. This contradicts $\operatorname{rank}Dh(z)=d>0$. Therefore the hypothesis that $Dh(z)\Sigma Dh(z)^{\!\top}$ is constant in $z$ is false, so the leading-order conditional covariance must depend on $z$ for $p_Z$-almost every $z$. 
\end{proof}

\noindent \textit{Remark.} The above argument establishes heteroscedasticity at \emph{leading order} in the noise scale $\sigma$, which rigorously shows that the conditional covariance depends on $z$ for sufficiently small $\sigma$. For larger $\sigma$, higher-order terms in the Taylor expansion of $h$ become significant and the exact conditional covariance may be more complicated; nevertheless, the local Jacobian $Dh(z)$ still transforms the noise differently at different points, so the conditional variance remains intuitively location-dependent, even if no simple closed-form expression exists.

\begin{proposition}[Tangent-space variant of Proposition~\ref{thm:C1}]\label{thm:C1_other} Let $\mathbb{S}^k \subset \mathbb{R}^{k+1}$ denote the $k$-dimensional unit sphere, and $U \subset \mathbb{S}^k$ an open set. Let $g:\mathbb{S}^k \to \mathbb{S}^{k'}$ be $C^{1}$ diffeomorphic to its image, and let $f:\mathbb{S}^{k'} \to \mathbb{R}^{d}$ be $C^{1}$ almost everywhere. Define $h := f \circ g : U \to \mathbb{R}^d$. We assume that $h$ is nondegenerate, i.e., $h(U)$ is not contained in any proper affine subspace of its intrinsic dimension. Suppose that for almost every $z \in U$, the derivative $D h(z) : T_z \mathbb{S}^k \to \mathbb{R}^d$ has full rank, i.e.\ $\operatorname{rank} D h(z) = k$. Assume further that the conditional distribution of $z^+ \in \mathbb{S}^k$ given $z$ is locally Gaussian in the tangent space 
\[
p(z^+ \mid z) \propto \exp\Big(-(z^+ - z)^\top \Lambda (z^+ - z)\Big),
\]
with a constant positive definite diagonal matrix $\Lambda$. 

Define $H = h(z)$ and $H^{+} = h(z^{+})$. Then for generic nondegenerate $C^1$ maps $h$, the conditional law
\[
p_{H^{+} \mid H}(h(z^{+}) \mid h(z)),
\]
is heteroscedastic for almost every $z \in U$.
\end{proposition}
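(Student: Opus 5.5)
The plan is to mirror the proof of Proposition~\ref{thm:C1}: a small-noise delta-method argument reduces heteroscedasticity to the non-constancy of the pushed-forward covariance. Then the geometry of the source sphere, together with nondegeneracy of $h$, rules out constancy. Fix $z\in U$ where $h$ is $C^1$ and $\operatorname{rank}Dh(z)=k$. Choose a smooth local orthonormal frame $E(z)\in\mathbb R^{(k+1)\times k}$ of $T_z\mathbb S^k$, and write the tangent noise as $z^+ \approx \exp_z(\sigma E(z)u)$ with $u\sim\mathcal N(0,\tfrac12\Lambda^{-1})$. A first-order expansion gives
\[
\operatorname{Cov}[h(z^+)\mid z]=\sigma^2\,Dh(z)E(z)\,\Sigma\,E(z)^{\!\top}Dh(z)^{\!\top}+o(\sigma^2),\qquad \Sigma:=\tfrac12\Lambda^{-1}\succ0 .
\]
Call the leading term $M(z)$. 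Homoscedasticity at leading order means $M(z)\equiv C$ for almost every $z\in U$.

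Next I use Lemma~\ref{lem:range}, applied with $A=Dh(z)E(z)$. It gives $\operatorname{range}(Dh(z)|_{T_z\mathbb S^k})=\operatorname{range}(C)=:W$, a fixed $k$-dimensional subspace of $\mathbb R^d$, for almost every $z$. Since $U$ is open and connected pieces of it are reached by $C^1$ curves, every curve $\gamma$ in $U$ satisfies $\tfrac{d}{dt}h(\gamma(t))\in W$. Hence $h(U)\subset h(z_0)+W$ on each connected component. This alone is not a contradiction: an affine $k$-plane is of dimension $k$, so the nondegeneracy assumption does not forbid it. The argument must therefore use the full matrix identity, not just ranges.

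The key step, and the one I expect to be the main obstacle, is the following. Identify $h(z_0)+W\cong\mathbb R^k$, so that $h$ becomes a local diffeomorphism $\phi:U\to\mathbb R^k$. The condition $D\phi\,E\Sigma E^{\!\top}D\phi^{\!\top}\equiv C$ says that $\phi$ is a local isometry from $(U,\,g_\Sigma)$ onto $(\mathbb R^k,\,C^{-1})$, a flat metric. Here $g_\Sigma(v,v)=v^{\!\top}(E\Sigma E^{\!\top})^{+}v$ is the metric on $T\mathbb S^k$ induced by the constant tangent precision $\Lambda$. When $\Lambda$ is isotropic, $g_\Sigma$ is a constant multiple of the round metric, which has nonzero sectional curvature for $k\ge2$. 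By Gauss's Theorema Egregium no such local isometry exists, which gives the contradiction.

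For anisotropic diagonal $\Lambda$, the frame $E$ cannot be chosen globally consistently, so "constant $\Lambda$ in the tangent space" must be read via a fixed local frame. I would first attempt the curvature computation for the resulting metric. If that is unwieldy, I would fall back on the "generic" qualifier: the set of $C^1$ maps $h$ whose Jacobian satisfies the overdetermined PDE system $D\phi\,E\Sigma E^{\!\top}D\phi^{\!\top}=C$ is nongeneric. It is contained in the solutions of a closed condition of infinite codimension. The case $k=1$, where curvature is absent, would be handled by this genericity route. Finally, as in the remark after Proposition~\ref{thm:C1}, the conclusion holds at leading order in $\sigma$ for almost every $z\in U$.
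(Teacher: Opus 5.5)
Your reduction (delta method, then Lemma~\ref{lem:range} to get $\operatorname{range}(Dh(z)|_{T_z\mathbb{S}^k})\equiv W$, then integration along curves to get $h(U)\subset h(z_0)+W$) is the paper's argument. But you then discard the conclusion the paper draws from it. The hypothesis says $h(U)$ is not contained in any proper affine subspace \emph{of its intrinsic dimension}. Since $\operatorname{rank}Dh=k$, that dimension is $k$, so nondegeneracy says that $h(U)$ lies in no $k$-dimensional affine plane of $\mathbb{R}^d$, i.e.\ $h(U)$ is not flat. Reading it as excluding only planes of dimension less than $k$ would make the hypothesis vacuous given the rank condition, which signals that this reading is wrong. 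So $h(U)\subset h(z_0)+W$ with $\dim W=k$ is already the contradiction, and the paper's proof ends there. Your claim that nondegeneracy ``does not forbid'' this is the step that goes wrong.

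Having discarded that, your curvature step has to carry the whole proof, and it does so only partially.
\begin{enumerate}
\item For isotropic $\Lambda$ and $k\ge 2$ the argument is sound. It also buys something the paper's route does not: it never uses nondegeneracy, so it covers $d=k$, where that hypothesis is vacuous and the paper's argument gives no contradiction.
\item For anisotropic $\Lambda$ you would have to show that the induced metric on $T\mathbb{S}^k$ is non-flat on the given open set $U$, and you do not do this. There is no frame ambiguity here: $\Lambda$ acts on $z^+-z\in\mathbb{R}^{k+1}$, so the tangent covariance, e.g.\ $P_z\Lambda^{-1}P_z$, is a globally defined field.
\item For $k=1$ curvature is not available at all.
\end{enumerate}

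Your fallback, that the homoscedastic maps form a closed set of ``infinite codimension'', is asserted rather than proved. Under your reading of the hypotheses it would also be doing real work. For $k=1$ and isotropic $\Lambda$, take a map sending an arc of $U$ at unit speed onto a straight segment in direction $w$. It has full rank, and its leading covariance is a constant multiple of $ww^\top$, independent of $z$. That is exactly the kind of map the nondegeneracy assumption, read correctly, rules out. The missing idea is simply to invoke it at the point where you obtained $h(U)\subset h(z_0)+W$.
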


\begin{proof}  
We construct $z^+$ by a small Gaussian step in $\mathbb{R}^{k+1}$ and normalization:
\[
z^+ = \frac{z + \varepsilon}{\| z + \varepsilon \|}, \quad \varepsilon \sim \mathcal{N}(0, \Lambda^{-1}).
\]  
A first-order approximation for small $\varepsilon$ gives
\[
z^+ - z = P_z \varepsilon + O(\|\varepsilon\|^2),
\]
where $P_z = I - z z^\top$ is the projector to the tangent space, and the pushforward density on the sphere matches
\[
p(z^+ \mid z) \propto \exp\big(-(z^+ - z)^\top \Lambda (z^+ - z)\big)
\]
up to higher-order terms.

Fix $z \in U$ where $h$ is $C^1$ and $\operatorname{rank}Dh(z)$ has full rank. A Euclidean Taylor expansion gives
\[
h(z^+) = h(z) + Dh(z) (z^+ - z) + O(\|z^+ - z\|^2).
\]  
Substituting $z^+ - z \approx P_z \varepsilon$
\[
h(z^+) = h(z) + Dh(z) P_z \varepsilon + R(z),
\]
where $R(z)$ collects higher-order terms, and the leading-order conditional covariance is
\[
\mathrm{Cov}(h(z^{+}) \mid z) = Dh(z) \Sigma^\mathrm{tan}_z Dh(z)^\top + R(z), \quad \Sigma^\mathrm{tan}_z = P_z \Lambda^{-1} P_z,
\]
with $R(z)$ continuous and symmetric.

Suppose that $\mathrm{Cov}(h(z^{+})\mid z)$ were constant across $z \in U$. With $\Sigma^\mathrm{tan}_z \succ0$, the range of the leading term $\mathrm{range}(Dh(z) \Sigma^\mathrm{tan}_z Dh(z)^\top) = \mathrm{range}(Dh(z))$ would have to be the same subspace $W \subset \mathbb{R}^d$ for almost every $z \in U$. 

For any differentiable curve $z(t) \subset U$ through points where $Dh(z(t))$ has full rank, we can write 
\[
\frac{d}{dt} h(z(t)) = Dh(z(t)) \dot z(t) \in W 
\]
Integrating along all such curves in $U$ gives
\[h(U) \subset h(z_0) + W, \]
for some base point $z_0$. This would imply that the image $h(U)$ is contained in a fixed affine subspace $W \subset R^d$, contradicting the nondegeneracy assumption on $h$. Therefore, a constant pushforward covriance can only occur in the trivial case of no noise ($\Sigma^\mathrm{tan}_z=0$, or $\Lambda^{-1} = 0$) or in a highly specific algebraic cancellation between $Dh(z)$ and $\Sigma^\mathrm{tan}_z$. For generic nondegenerate $C^1$ maps $h$ and almost every $z \in U$, the conditional covariance is therefore heteroscedastic.
\end{proof}

\noindent \textit{Remark.} This is analogous to Proposition~\ref{thm:C1}, but with domain and codomain swapped; the argument relies on the Jacobian of the map and the local Gaussian structure in the tangent space.

\begin{proposition}[Extension of Proposition~\ref{thm:C1}]
Let $g:\mathbb R^{d} \to \mathbb R^{d'}$ be a $C^{2}$ with a local diffeomorphism and $f:\mathbb R^{d'} \to \mathcal{M}$ be $C^{2}$ almost everywhere. Define $h:=f\circ g:\mathbb R^{d}\to\mathcal{M}$ where $\mathcal{M}$ is a Riemannian manifold with strictly positive sectional curvature on a nonempty open set. Assume further that the random vectors $z, z^{+} \in \mathbb{R}^d$ are sampled as
\[
z \sim p_Z, 
\qquad 
z^{+} = z + \varepsilon, \quad \varepsilon \sim p_\varepsilon,
\]
where $p_Z$ is not a point mass and $\varepsilon$ is independent of z, $\mathbb{E}[\varepsilon] = 0$, and $\operatorname{Cov}(\varepsilon) \succ 0$.  

Suppose that for $p_Z$-almost every $z$ we have $h(z) \in \mathcal{M}$ and $\operatorname{rank}Dh(z)=d$. Write $H = h(z)$ and $H^{+} = h(z^{+})$. Then the conditional law
\[
p_{H^{+} \mid H}(h(z^{+}) \mid h(z)),
\]
is necessarily heteroscedastic: its conditional variance depends on $h(z)$ for $p_Z$-almost every $z$.
\end{proposition}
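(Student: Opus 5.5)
We follow the template of Proposition~\ref{thm:C1}: a small-noise expansion, then a contradiction with the geometry of the codomain. Fix $z$ where $h$ is $C^2$ and $\operatorname{rank}Dh(z)=d$, and scale the noise as $z^+=z+\sigma\varepsilon$. Covariance on a Riemannian manifold has to be defined intrinsically, so we work in normal coordinates at $H=h(z)$. Concretely, we measure the conditional spread through $\log_{H}(H^+)\in T_H\mathcal M$ and compare covariances at different base points by parallel transport. For nonflat $\mathcal M$ there is no canonical identification of tangent spaces, so homoscedasticity is best read as follows: the covariance field $z\mapsto \Sigma_H$ is parallel, i.e.\ covariantly constant along the image $h(\mathbb R^d)$. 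A first-order expansion gives
\[
\log_{h(z)} h(z+\sigma\varepsilon)=\sigma\, Dh(z)\varepsilon+O(\sigma^2),
\]
so the leading-order covariance is $C(z)=\sigma^2 Dh(z)\Sigma Dh(z)^\top$, a symmetric form on $T_{h(z)}\mathcal M$. By Lemma~\ref{lem:range}, its range is $W_z:=\operatorname{range}Dh(z)$, the $d$-dimensional tangent space of the immersed image $N:=h(U)$ for any open $U$ on which $h$ is an immersion.

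Now assume, for contradiction, that this covariance field is parallel along $N$ for almost every $z$; by continuity this holds on an open set. Then its range distribution, which is exactly the tangent bundle $TN$, is invariant under parallel transport along curves in $N$. Equivalently, $\nabla_X Y\in TN$ for all $X,Y$ tangent to $N$, so the second fundamental form of $N$ vanishes and $N$ is totally geodesic. This is the analogue of the step ``$\operatorname{range}(Dh(z))\equiv W$'' in Proposition~\ref{thm:C1}. The Euclidean-sphere argument there used the normal vector $h(z)$; here curvature has to take its place.

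The decisive step is to rule out totally geodesic images of dimension $d$ in the region of strictly positive sectional curvature. The parallelism is stronger than total geodesicity, since $C$ itself is a parallel positive-definite form on $TN$. Hence the induced metric, pulled back through $C$, admits a parallel frame: after diagonalising $C$ with constant eigenvalues, we get a parallel orthonormal frame of $TN$. Consequently the intrinsic curvature of $N$ vanishes. Gauss's equation for a totally geodesic submanifold gives $K_N(X,Y)=K_{\mathcal M}(X,Y)$. Therefore $K_{\mathcal M}=0$ on $2$-planes tangent to $N$, which contradicts strict positivity whenever $d\ge 2$ and $N$ meets the positively curved open set.

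Two points need extra care, and we expect them to be the main obstacle. First, the degenerate case $d=1$, where sectional curvature of $TN$ is vacuous. There we would instead use the transverse part of the parallel covariance, i.e.\ the $O(\sigma^3)$ Jacobi-field correction to the spread of geodesics from $h(z)$, which involves $R(\cdot,\dot h)\dot h$ and is nonzero in positive curvature. Second, we must justify that $h(\mathbb R^d)$ actually intersects the positively curved region on a set of positive $p_Z$-measure. If this is not implied by the hypotheses, we would add it as an explicit assumption, in the same spirit as the nondegeneracy assumption of Proposition~\ref{thm:C1_other}. As in the Remark after Proposition~\ref{thm:C1}, the conclusion is rigorous at leading order in $\sigma$.
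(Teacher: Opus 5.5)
Your route is genuinely different from the paper's. The paper reads homoscedasticity as $Dh(z)\Sigma Dh(z)^\top\equiv C$ for one fixed matrix, implicitly in an ambient $\mathbb{R}^{k+1}$. It identifies this with $h$ being a local isometry from the flat space $(\mathbb{R}^d,\langle\cdot,\cdot\rangle_\Sigma)$ into $\mathcal M$, and concludes by Theorema Egregium. You instead make homoscedasticity intrinsic (a parallel covariance field along $h$), extract total geodesicity from the parallel range distribution, show $N$ is flat, and finish with the Gauss equation.

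The total-geodesic step is what your route buys. Theorema Egregium only excludes local isometries onto open subsets of $\mathcal M$, i.e.\ the case $d=\dim\mathcal M$. Flat submanifolds of positively curved manifolds do exist in positive codimension (the Clifford torus in $\mathbb{S}^3$), so for $2\le d<\dim\mathcal M$ an argument like yours is needed. You are also right that the hypotheses do not force $h$ into the positively curved region. This must be assumed, and the paper uses it silently.

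Two steps need repair. First, flatness of $N$ does not follow from ``$C$ is a parallel positive-definite form''. Diagonalising $C$ against $g_N$ gives constant eigenvalues and parallel eigenbundles. That yields a parallel orthonormal frame only if the eigenvalues are simple. If they repeat (e.g.\ $C\propto g_N^{-1}$), parallelism alone tells you nothing, since $g_N^{-1}$ is parallel on every Riemannian manifold.

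The missing ingredient is that $C$ is the pushforward of a \emph{constant} matrix, so $h^*(C^{-1})=\sigma^{-2}\Sigma^{-1}$ identically. Then the fields $E_i:=\sigma\,Dh\,\Sigma^{1/2}e_i$ commute and are $C^{-1}$-orthonormal on $N$. Since $N$ is totally geodesic, $\nabla^N$ is torsion-free and preserves $C^{-1}$. Koszul's formula therefore forces $\nabla^N E_i=0$, hence $R^N=0$. This identity is where the paper's ``isometry from a flat space'' idea enters your argument.

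Second, your proposed fix for $d=1$ cannot work, because under your own definition the statement is false there. Take $h$ to be a constant-speed geodesic through the positively curved region. Whenever $|\sigma\varepsilon|$ is below the injectivity radius, $\log_{h(z)}h(z+\sigma\varepsilon)=\sigma\varepsilon\,\dot h(z)$ exactly. So the covariance $\sigma^2\operatorname{Var}(\varepsilon)\,\dot h\otimes\dot h$ is parallel to every order in $\sigma$. The noise never leaves the curve, so no transverse Jacobi-field term ever appears. With the intrinsic reading you must assume $d\ge 2$.

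Alternatively, adopt the extrinsic reading the paper implicitly uses: $\mathcal M$ isometrically embedded in $\mathbb{R}^n$, with $C$ a constant ambient matrix. Then, exactly as in Proposition~\ref{thm:C1}, the range of $Dh$ is a fixed $W$, so $N$ is an open piece of the affine plane $h(z_0)+W$ lying inside $\mathcal M$. The second fundamental form of $\mathcal M\subset\mathbb{R}^n$ therefore satisfies $\mathrm{II}(X,X)=0$ for $X\in TN$. The Gauss equation then gives $K_{\mathcal M}(X,Y)=-|\mathrm{II}(X,Y)|^2\le 0$ for orthonormal $X,Y$, a contradiction for every $d\ge 1$.
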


\begin{proof}

Following the same reasoning as in Theorem~\ref{thm:C1}, homoscedasticity at leading order would require a constant positive semidefinite matrix $C$ such that
\[
Dh(z)\, \Sigma\, Dh(z)^\top \equiv C \quad \text{for $p_Z$-almost every $z$}.
\]

Since $\Sigma \succ 0$, the above condition is equivalent to requiring that
\[
\langle u, v \rangle_\Sigma := u^\top \Sigma v = \langle Dh(z) u, Dh(z) v \rangle_{\mathbb{R}^{k+1}} \quad \forall u,v \in \mathbb{R}^d, \text{ for a.e. } z
\]
i.e., $h$ is a local Riemannian isometry from the flat space $(\mathbb{R}^d, \langle \cdot,\cdot \rangle_\Sigma)$ to the positively curved manifold $(\mathcal{M}, g_{\mathcal{M}})$. However, local isometries preserve sectional curvature (Gauss’ Theorema Egregium), so no such local isometry from an open subset of $\mathbb{R}^d$ to an open subset of $\mathcal{M}$ exists. Hence, the homoscedasticity condition cannot hold.

Therefore, for all sufficiently small $\sigma>0$, the conditional covariance
\[
\mathrm{Cov}[h(z+\sigma \varepsilon) \mid z] = \sigma^2 Dh(z)\, \Sigma\, Dh(z)^\top + o(\sigma^2)
\]
depends on $z$, and the conditional distribution of $h(z^+)$ given $h(z)$ is necessarily heteroscedastic for $p_Z$-almost every $z$.
\end{proof}

\section{Additional results and analysis}
\label{sec:app:results}

%%%%%%%%%%%%%%%%%%%%%%%%%%%%%%%%%%%%%%%%%%%%%%%%%%%%%%%%%%%%%%%%%%%%%%%%%%%%%%%%%%%%%%%%%%%%%%%%%%%%%%%%%%%%%%%%%%%%%%%%%%%%%%%%%%%%%%%%%%%%%%%%%%%%%%%%%%%%%%%%%%%%%%%%%%%%%%%%%%%%%%%%%%%%%%%%%%%%%%%%%%%%%%%%%%%%%%%%%%%%%%%%%%%%%%%%%%%%%%%%%%%%%%%%%%%%%%%%%%%%%%%%%%%%%%%%%%%%%%%%%%%%%%%%%%%%%%%%%%%%%%%%%%%%%%%%%%%%%%%%%%%%%%%%%%%%%%%%%%%%%%%%%%%%%%%%%%%%%%%%%%%%
\subsection{Robustness to noisy data pairings on iNat-1M}
\label{sec:app:results:inat}

\begin{figure}[t!]
    \centering
    \includegraphics[width=\linewidth]{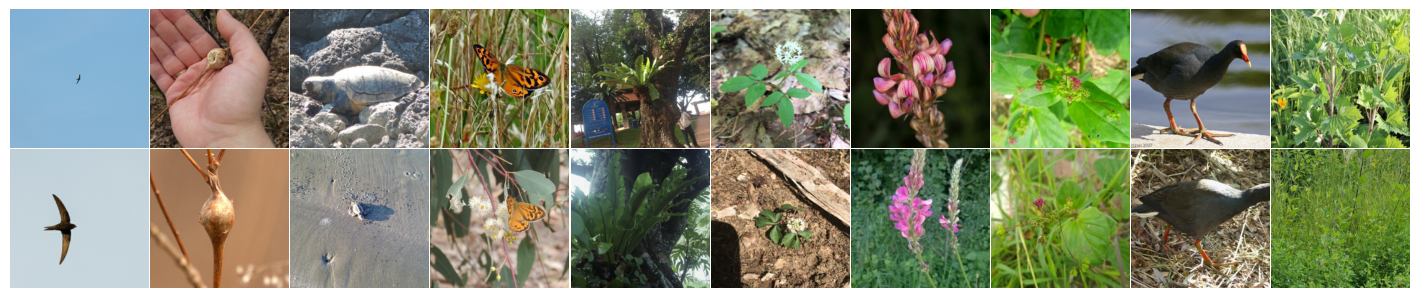}
    \caption{Visualization of images paired by class label from the iNat-1M dataset.}
    \label{fig:inat_vis_class}
\end{figure}

\begin{figure}[t!]
    \centering
    \includegraphics[width=\linewidth]{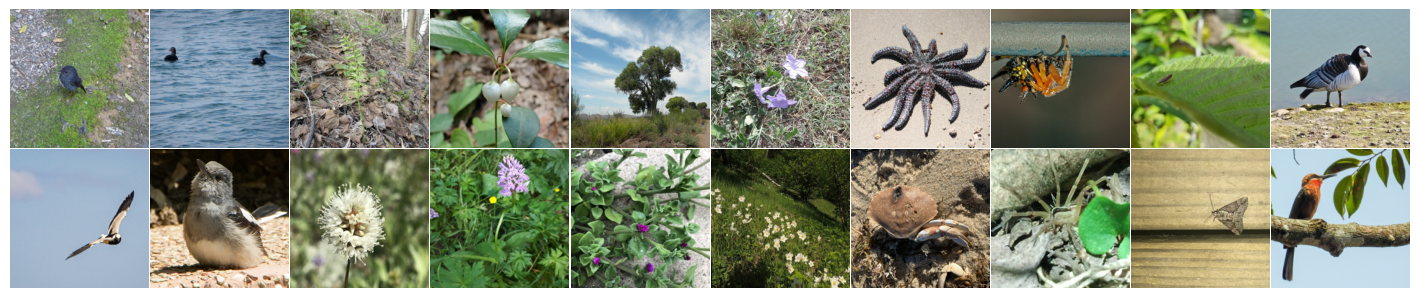}
    \caption{Visualization of images paired by superclass label from the iNat-1M dataset.}
    \label{fig:inat_vis_superclass}
\end{figure}

\begin{figure}[t!]
    \centering
    \includegraphics[width=0.95\textwidth]{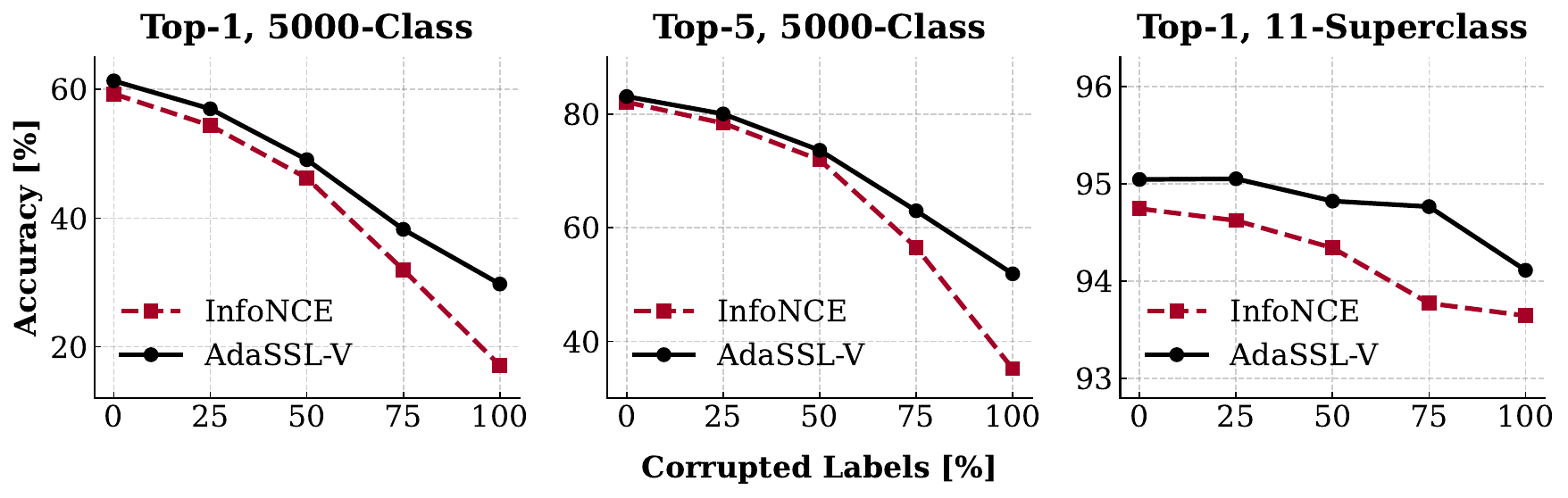}
    \caption{Results on iNat-1M. We gradually increase the percentage of pairings that are corrupted. Performance of online linear probes on the validation set is reported. Left: top-1 accuracy across 5000 classes. Middle: top-5 accuracy across 5000 classes. Right: top-1 accuracy across 11 superclasses. AdaSSL-V's performance decays more gracefully than InfoNCE, showing its robustness to noisy pairings.}
    \label{fig:inat_curves}
\end{figure}

In this experiment, we train InfoNCE and AdaSSL-V with a ResNet-50 backbone at 224$\times$224 resolution on a subset of iNat2021~\citep{van2021benchmarking}, a large-scale image dataset that contains fine-grained (class) and coarse-grained (superclass) species labels. We call this subset iNat-1M, which contains one million training and 250\,000 validation images across 5000 fine-grained classes and 11 superclasses. Fine-grained classification on iNat-1M serves as a proxy task to evaluate whether the model learns diverse features.

Since the goal of AdaSSL is to learn diverse features under target uncertainty, we hypothesize that it is more robust to noisy data pairings than vanilla SSL. The ideal setting is one where we have perfect class labels and we can pair up images within the same class~(Fig.~\ref{fig:inat_vis_class}). To control the amount of noise in the pairings, we corrupt a subset of labels such that these corrupted data is paired with another image from the same super class instead~(Fig.~\ref{fig:inat_vis_superclass}). This simulates real-world label noise, which is rarely random, but instead structured; it is much more likely for a dog to be mislabeled as a wolf than as an avocado. We gradually increase the percentage of data that are corrupted, and investigate how it affects the performance of online linear probes trained on top of the representations.

Fig.~\ref{fig:inat_curves} shows that both InfoNCE and AdaSSL-V suffers decay in fine-grained classification accuracy when we increase the corruption rate. However, AdaSSL-V's accuracy decays more gracefully. We also observe its advantage to be more visible when the corruption rate passes 50\%.
Interestingly, AdaSSL-V slightly improves InfoNCE even when we do not corrupt any labels. We hypothesize that this could be due to the potential label noise that already exist in the dataset as well as the unavoidable heteroscedastic noise~(Proposition~\ref{thm:C1-main}). We observe a slight drop in the superclass classification accuracy and do not observe the gap between AdaSSL and InfoNCE to change much across corruption rates. This is expected because we always pair up data within the same superclass.

%%%%%%%%%%%%%%%%%%%%%%%%%%%%%%%%%%%%%%%%%%%%%%%%%%%%%%%%%%%%%%%%%%%%%%%%%%%%%%%%%%%%%%%%%%%%%%%%%%%%%%%%%%%%%%%%%%%%%%%%%%%%%%%%%%%%%%%%%%%%%%%%%%%%%%%%%%%%%%%%%%%%%%%%%%%%%%%%%%%%%%%%%%%%%%%%%%%%%%%%%%%%%%%%%%%%%%%%%%%%%%%%%%%%%%%%%%%%%%%%%%%%%%%%%%%%%%%%%%%%%%%%%%%%%%%%%%%%%%%%%%%%%%%%%%%%%%%%%%%%%%%%%%%%%%%%%%%%%%%%%%%%%%%%%%%%%%%%%%%%%%%%%%%%%%%%%%%%%%%%%%%%
\subsection{Leveraging an additional view}\label{sec:add-view}

\begin{table}[t]
%\begin{wraptable}{r}{0.5\textwidth}
\caption{Ablation of the additional view $\mathbf{x}^{++}$ on CelebA, trained with natural pairs under weak or strong augmentations. Linear $F_1$ scores on representations (encoder output) and embeddings (projector output) are reported. AdaSSL trained without $\mathbf{x}^{++}$ outperforms the baseline, and using $\mathbf{x}^{++}$ improves it further.}
\label{tab:additional_view}
\begin{center}
\resizebox{0.85\linewidth}{!}{
\renewcommand{\arraystretch}{1}
\begin{tabular}{lccccc}
\toprule

\multirow{2}{*}{\bf Model}
& \multirow{2}{*}{\bf Pairing} 
& \multicolumn{2}{c}{\textsc{weak augmentation}} & \multicolumn{2}{c}{\textsc{strong augmentation}} \\
% \cmidrule(lr){3-4} \cmidrule(lr){5-6}
& & \bf Repr. & \bf Emb.  & \bf Repr. & \bf Emb. \\

\midrule

%%%%%%%%%%%%%%%
InfoNCE & Natural  & 0.5473 \tiny{$\pm$ 0.0027} & 0.3747 \tiny{$\pm$ 0.0051} & 0.5784 \tiny{$\pm$ 0.0008} & 0.4941 \tiny{$\pm$ 0.0035} \\

%%%%%%%%%%%%%%%
AdaSSL-S 
 & \dittotikz 
 & 
 \underline{0.5676} \tiny{$\pm$ 0.0049} 
 & 
 \underline{0.4581} \tiny{$\pm$ 0.0016} 
 & 
 \underline{0.5911} \tiny{$\pm$ 0.0014} 
 &  \underline{0.5654} \tiny{$\pm$ 0.0007} \\

%%%%%%%%%%%%%%%
\quad \raisebox{0.5ex}{$\llcorner$}without $\rvx^{++}$
 & \dittotikz 
 & 0.5505 \tiny{$\pm$ 0.0045} & 0.4466 \tiny{$\pm$ 0.0026} & 0.5716 \tiny{$\pm$ 0.0046} & 0.5149 \tiny{$\pm$ 0.0076} \\

%%%%%%%%%%%%%%%
AdaSSL-V & \dittotikz   & {\bf 0.5784} \tiny{$\pm$ 0.0025} & {\bf 0.4794} \tiny{$\pm$ 0.0015} & {\bf 0.6014} \tiny{$\pm$ 0.0008} & {\bf 0.5706} \tiny{$\pm$ 0.0034} \\

%%%%%%%%%%%%%%%
\quad \raisebox{0.5ex}{$\llcorner$}without $\rvx^{++}$ & \dittotikz   & 0.5550 \tiny{$\pm$ 0.0037} & 0.4351 \tiny{$\pm$ 0.0042} & 0.5869 \tiny{$\pm$ 0.0011} & 0.5117 \tiny{$\pm$ 0.0014} \\

\bottomrule

\end{tabular}
}
\end{center}

% \end{wraptable}

\end{table}

\citet{chen2020simple} show that we need strong augmentations to create differences between views in their low-level image statistics. Otherwise, encoding these statistics alone is sufficient to satisfy the SSL objective---a shortcut solution (e.g., standard pairing with weak augmentations in Table~\ref{tab:cba}).

For both AdaSSL-V and AdaSSL-S, the model is expected to learn the factors explaining the differences in the paired views. Allowing $\rvr$ to encode \emph{any} differences facilitates this shortcut solution. Furthermore, applying pixel-level augmentations may exacerbate this issue by increasing the incentive to learn these discriminative, yet less semantic transformations.

One way to ensure $\rvr$ learns the desired transformations while remaining invariant to others is to use a surrogate view $\rvx^{++}$. In other words, AdaSSL-V uses $\rvr$ sampled from $q_{\phi}\left(\rvr \mid f(\rvx), f(\rvx^{++})\right)$ and AdaSSL-S uses $\rvr$ predicted by $m\left(f(\rvx), f(\rvx^{++})\right)$. We would like $\rvx^{++}$ to satisfy two criteria: (a) the relationship between the underlying content factors, $\rvc$ and $\rvc^{++}$, mimics that between $\rvc$ and $\rvc^+$ such that we are able to learn the desired transformations; and (b) the difference between $\rvx$ and $\rvx^{++}$ in their low-level statistics does not reflect that between $\rvx$ and $\rvx^{+}$ such that $\rvr$ cannot encode it to trivially reduce the loss.
These additional views are usually easy to obtain, e.g., by applying the same random augmentations---that created the low-level differences between $\rvx$ and $\rvx^+$---to $\rvx^+$. We detail the $\rvx^{++}$ that we use in each experiment in \S\ref{sec:app:implementation}.

Notably, this additional view is not always required. When low-level statistics alone are insufficient for instance discrimination, the model is incentivized to learn other features. This is demonstrated in our CRL experiments in \S\ref{sec:exp:ident}, where we learn to recover \emph{all} data-generating factors from natural pairs without augmentations or $\rvx^{++}$.

We perform ablation studies of $\rvx^{++}$ in Table~\ref{tab:additional_view} and observe consistent benefits.

%%%%%%%%%%%%%%%%%%%%%%%%%%%%%%%%%%%%%%%%%%%%%%%%%%%%%%%%%%%%%%%%%%%%%%%%%%%%%%%%%%%%%%%%%%%%%%%%%%%%%%%%%%%%%%%%%%%%%%%%%%%%%%%%%%%%%%%%%%%%%%%%%%%%%%%%%%%%%%%%%%%%%%%%%%%%%%%%%%%%%%%%%%%%%%%%%%%%%%%%%%%%%%%%%%%%%%%%%%%%%%%%%%%%%%%%%%%%%%%%%%%%%%%%%%%%%%%%%%%%%%%%%%%%%%%%%%%%%%%%%%%%%%%%%%%%%%%%%%%%%%%%%%%%%%%%%%%%%%%%%%%%%%%%%%%%%%%%%%%%%%%%%%%%%%%%%%%%%%%%%%%%
\subsection{A deeper look into AdaSSL-S's behavior}
\label{sec:app:results:adassl-s}

\begin{figure}[t!]
    \centering
    \includegraphics[width=0.95\textwidth]{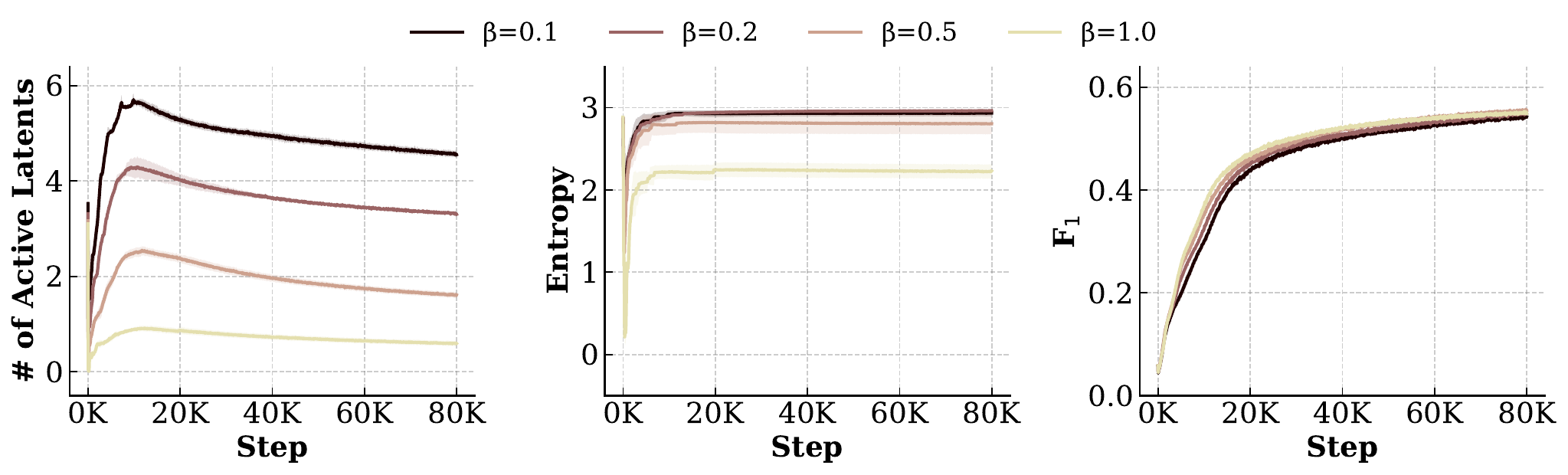}
    \caption{Utilization of $\rvr$ and performance on CelebA throughout training when AdaSSL-S is applied to InfoNCE. Left: the number of active latent dimensions of $\rvr$. Middle: the entropy of the probability vector with which the model picks which latent of $\rvr$ to activate. Right: training $F_1$ score of an online linear probe on representations.}
    \label{fig:sparsity_entropy_infonce}
\end{figure}

\begin{figure}[t!]
    \centering
    \includegraphics[width=0.95\textwidth]{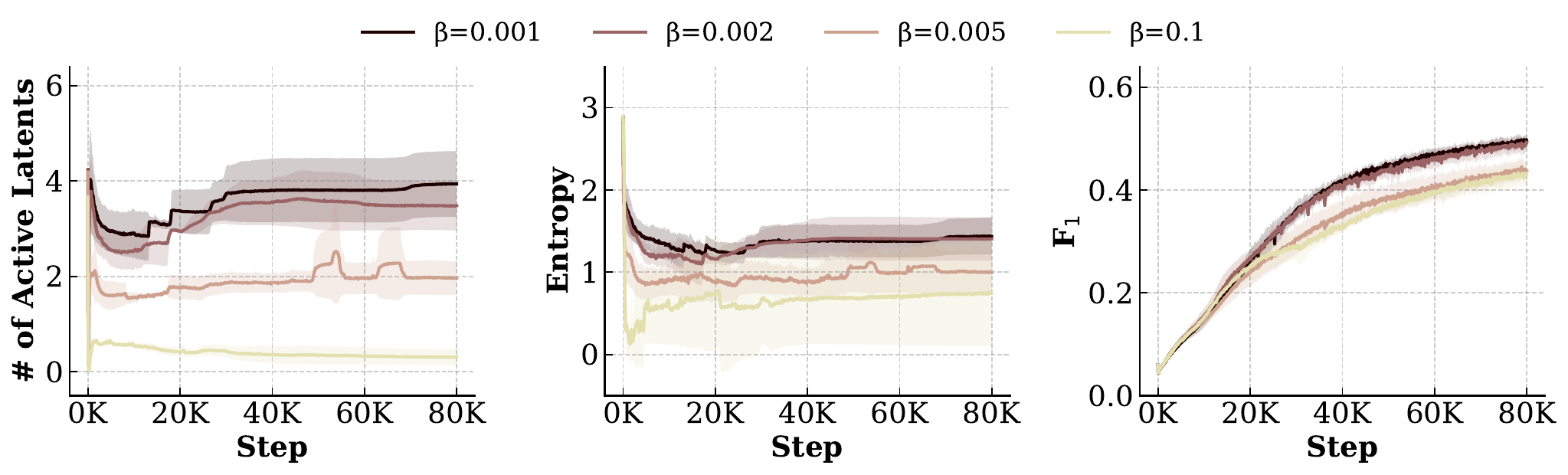}
    \caption{Utilization of $\rvr$ and performance on CelebA throughout training when AdaSSL-S is applied to BYOL. Left: the number of active latent dimensions of $\rvr$. Middle: the entropy of the probability vector with which the model picks which latent of $\rvr$ to activate. Right: training $F_1$ score of an online linear probe on representations.}
    \label{fig:sparsity_entropy_byol}
\end{figure}

Both AdaSSL-V and AdaSSL-S improves InfoNCE on natural pairs, but only AdaSSL-V consistently improves BYOL. We hypothesize that explicitly regularizing the embeddings' information content (as in the denominator of Eq.~\ref{eq:infonce}) pressures the model to utilize $\rvr$ effectively. In contrast, distillation-based methods lack this direct tension. Since AdaSSL-S only regularizes the sparsity, it may require additional regularization on $\rvr$ that encourages efficient utilization.

In Fig.~\ref{fig:sparsity_entropy_infonce} and Fig.~\ref{fig:sparsity_entropy_byol}, we show statistics of $\rvr$ throughout training on CelebA with natural pairs and strong augmentations, for AdaSSL-S trained with InfoNCE and BYOL, respectively. For a meaningful comparison, we fix $d_r = 20$ and plot the curves for different values of $\beta$. We show three metrics. First, we show the sparsity of latent usage and plot average number of active dimensions of $\rvr$ across the batch, i.e., $\E[\|\rvr\|_0]$. Second, we show the spread of usage across dimensions. To do so, define latent variable $\boldsymbol{\rho}$ in the $d_r$-dimensional probability simplex as the probability with which the model picks each dimension of $\rvr$ to activate within each batch. We then plot $H(\boldsymbol{\rho})$, the Shannon entropy of $\boldsymbol{\rho}$. Third, we plot the $F_1$ trace on the training set. We expect a successful model to activate a few dimensions of $\rvr$ each time, but utilize the entire space across different data pairs.

As shown in Fig.~\ref{fig:sparsity_entropy_infonce} and Fig.~\ref{fig:sparsity_entropy_byol}, the strength of regularization controls how many latents are active for both InfoNCE and BYOL. However, BYOL$+$AdaSSL-S fails to maintain a high spread of information across $\rvr$'s latent dimensions, as reflected by a lower $H(\boldsymbol{\rho})$. We also observe higher instabilities when training AdaSSL-S with BYOL than with InfoNCE. In the latter case, we observe that the model's performance is robust to different $\beta$.

We hypothesize that AdaSSL-V works well with both contrastive and distillation-based SSL because minimizing the KL divergence enforces non-zero variance, which promotes utilization, while using a factorized prior discourages redundancy. Together, the KL regularization encourages learning an efficient representations of $\rvr$, as we discussed in the disentanglement analysis in Sec.~\ref{sec:exp:ident}.

Our findings corroborates the motivation of designing additional regularization terms for sparsity regularization in concurrent work~\citep{garrido2026learning}, where the authors learn a world model with latent variable regularization on frozen representations.

%%%%%%%%%%%%%%%%%%%%%%%%%%%%%%%%%%%%%%%%%%%%%%%%%%%%%%%%%%%%%%%%%%%%%%%%%%%%%%%%%%%%%%%%%%%%%%%%%%%%%%%%%%%%%%%%%%%%%%%%%%%%%%%%%%%%%%%%%%%%%%%%%%%%%%%%%%%%%%%%%%%%%%%%%%%%%%%%%%%%%%%%%%%%%%%%%%%%%%%%%%%%%%%%%%%%%%%%%%%%%%%%%%%%%%%%%%%%%%%%%%%%%%%%%%%%%%%%%%%%%%%%%%%%%%%%%%%%%%%%%%%%%%%%%%%%%%%%%%%%%%%%%%%%%%%%%%%%%%%%%%%%%%%%%%%%%%%%%%%%%%%%%%%%%%%%%%%%%%%%%%%%
\subsection{World modeling}

\label{sec:app:results:mnist}

\subsubsection{Numerical results for Fig.~\ref{fig:pareto}}
\label{sec:full_mnist}

\begin{table}[t]
\caption{Performance of linear probes trained on frozen representations and embeddings on stochastic Moving-MNIST. Evaluation is performed on the online branch of BYOL.}
\label{tab:mnist_full}
\begin{center}
\resizebox{\linewidth}{!}{
\renewcommand{\arraystretch}{1}
\begin{tabular}{l|cc|cc||cc|cc}
\toprule
\multirow{3}{*}{\bf Model} & \multicolumn{4}{c||}{\textsc{Setting A}} & \multicolumn{4}{c}{\textsc{Setting B}} \\
& \multicolumn{2}{c|}{\bf Representations} & \multicolumn{2}{c||}{\bf Embeddings} & \multicolumn{2}{c|}{\bf Representations} & \multicolumn{2}{c}{\bf Embeddings} \\
 & Acc. [\%] & Velocity [$R^2$] & Acc. [\%] & Velocity [$R^2$] & Acc. [\%] & Velocity [$R^2$] & Acc. [\%] & Velocity [$R^2$] \\
 
\midrule

%%%%%%%
BYOL & 90.42 \tiny{$\pm$ 0.94} & 0.8753 \tiny{$\pm$ 0.0044} & 87.09 \tiny{$\pm$ 2.41} & 0.1079 \tiny{$\pm$ 0.0061} & 91.00 \tiny{$\pm$ 1.07} & 0.8810 \tiny{$\pm$ 0.0057} & 88.61 \tiny{$\pm$ 1.79} & 0.1486 \tiny{$\pm$ 0.0303}\\

%%%%%%%
BYOL+Future & 88.31 \tiny{$\pm$ 1.14} & 0.9005 \tiny{$\pm$ 0.0063} & 78.68 \tiny{$\pm$ 0.55} & 0.5890 \tiny{$\pm$ 0.0242} & 88.33 \tiny{$\pm$ 1.09} & 0.8996 \tiny{$\pm$ 0.0059} & 78.99 \tiny{$\pm$ 0.45} & 0.6041 \tiny{$\pm$ 0.0186} \\

%%%%%%%
BYOL+GT & 93.09 \tiny{$\pm$ 0.24} & 0.8814 \tiny{$\pm$ 0.0078} & 88.95 \tiny{$\pm$ 0.56} & -0.0038 \tiny{$\pm$ 0.0060} & 93.55 \tiny{$\pm$ 0.50} & 0.8884 \tiny{$\pm$ 0.0062} & 87.99 \tiny{$\pm$ 0.36} & -0.0028 \tiny{$\pm$ 0.0045} \\

%%%%%%%
\rowcolor{wash} AdaSSL-V$_{\beta=0}$ & \textbf{94.18} \tiny{$\pm$ 0.51} & 0.8951 \tiny{$\pm$ 0.0066} &  \underline{90.54} \tiny{$\pm$ 0.54} & 0.2867 \tiny{$\pm$ 0.0184} & \underline{94.17} \tiny{$\pm$ 0.19} & 0.8961 \tiny{$\pm$ 0.0028} & \underline{90.34} \tiny{$\pm$ 0.66} & 0.2875 \tiny{$\pm$ 0.0219}\\

%%%%%%%
\rowcolor{wash} AdaSSL-V & \underline{93.83} \tiny{$\pm$ 0.22} &  \textbf{0.9168} \tiny{$\pm$ 0.0015} & \textbf{91.28} \tiny{$\pm$ 0.43} & \underline{0.8695} \tiny{$\pm$ 0.0185} &  \textbf{94.31} \tiny{$\pm$ 0.48} &  \textbf{0.9188} \tiny{$\pm$ 0.0006} & \textbf{92.32} \tiny{$\pm$ 0.73} & \underline{0.8594} \tiny{$\pm$ 0.0035} \\

%%%%%%%
\rowcolor{wash} AdaSSL-S & 91.89 \tiny{$\pm$ 0.74} & \underline{0.9121} \tiny{$\pm$ 0.0028} &  86.00 \tiny{$\pm$ 0.33} & \textbf{0.8901} \tiny{$\pm$ 0.0247}  & 91.95 \tiny{$\pm$ 0.53} & \underline{0.9121} \tiny{$\pm$ 0.0032} & 85.53 \tiny{$\pm$ 1.90} & \textbf{0.8750} \tiny{$\pm$ 0.0121} \\
\bottomrule
\end{tabular}
}
\end{center}
\end{table}

We provide full evaluation results on stochastic Moving-MNIST in Table~\ref{tab:mnist_full}. These results further demonstrate AdaSSL’s effectiveness in achieving strong performance in both digit recognition and velocity decoding.
Our results and ablations in the main text in Fig.~\ref{fig:pareto} uses Setting A because we do not find significant difference between the results.

\subsubsection{Future trajectory prediction}
\label{sec:retrieval_mnist}

Given a source (initial) trajectory and its ground-truth latent $\rvr$ (change in velocity), we predict the embedding of the corresponding target (future) trajectory with the learned predictors of BYOL and AdaSSL-V/S and retrieve the nearest neighbors from a pool of 4096 target trajectory embeddings. To generate this pool, we randomly sample 64 initial trajectories from the validation set and generate 64 future trajectories for each of them.
To map the ground-truth $\rvr$ space to the $\rvr$ space learned by AdaSSL-S/V, we train a projection (an MLP with one hidden layer of dimension $128$) by minimizing the prediction error on the training set using the frozen encoder $f$ and predictor $\eta$. 

Table~\ref{tab:mnist_mrr} reports mean reciprocal rank (MRR) and hit rate at $k$ (Hit@$k$) which are standard metrics for assessing predictor quality~\citep{Kipf2020Contrastive,park2022learning,garrido2023self,gupta2024context}. AdaSSL consistently outperforms BYOL. Fig.~\ref{fig:retrieval_mnist} shows the retrieved trajectories and the rank of the ground-truth target in the retrievals. We observe that all methods are able to retrieve the correct digit, while AdaSSL-V and AdaSSL-S predict the direction and velocity of the ground-truth target better than BYOL.

\begin{figure}[t!]
    \centering
    \includegraphics[width=\textwidth]{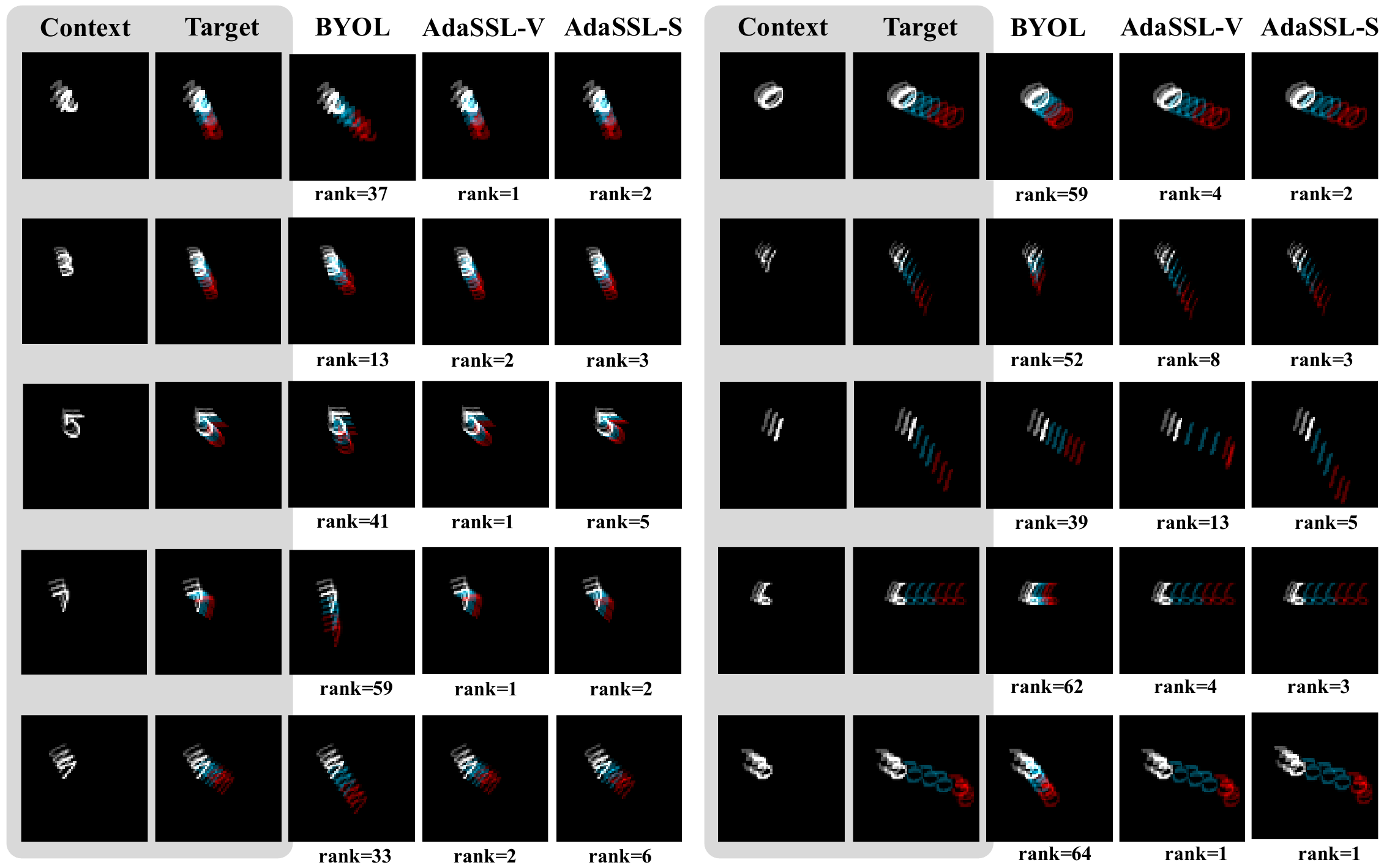}
    \vspace{-0.1in}
    \caption{Future prediction visualization. Given a source trajectory and its ground-truth latent $\rvr$ (change in velocity), we predict the embedding of the corresponding future trajectory with the learned predictors of BYOL and AdaSSL-V/S and retrieve the three nearest matches from a pool of 4096 future trajectory embeddings. All methods predict the correct digit, while AdaSSL-V and AdaSSL-S predict the directional velocity of the ground-truth target better than BYOL. Retrieval ranks ($\downarrow$) for the ground-truth target are reported below each visualization.
}
    \label{fig:retrieval_mnist}
\end{figure}

\begin{table}[t]
\caption{Quantitative retrieval results on Stochastic Moving-MNIST. We encode the first three-frame segments of randomly sampled videos and use the learned predictors of BYOL and AdaSSL-S/V to predict one sampled future in the embedding space. We then retrieve the best matched future trajectory from a pool of 4096 video segments (consisting of 64 randomly sampled future trajectories of 64 random initial segments), including the correct segment. We report the mean recipricol rank (MRR) and hit rate at $k$ (Hit@$k$) for the retrievals.}
\label{tab:mnist_mrr}
\begin{center}
\resizebox{0.5\linewidth}{!}{
\renewcommand{\arraystretch}{1}
\begin{tabular}{l|c|c|c}
\toprule
{\bf Model} & {\bf MRR ($\uparrow$)}  & {\bf Hit@1 ($\uparrow$)} & {\bf Hit@5 ($\uparrow$)} \\
\midrule

BYOL & 0.1758 & 0.1182 & 0.1802 \\

AdaSSL-V & \underline{0.4102} & \underline{0.3071} & \underline{0.5129} \\

AdaSSL-S & \bf 0.5291 & \bf 0.4338 & \bf 0.6287 \\

\bottomrule
\end{tabular}
}
\end{center}
\end{table}

%%%%%%%%%%%%%%%%%%%%%%%%%%%%%%%%%%%%%%%%%%%%%%%%%%%%%%%%%%%%%%%%%%%%%%%%%%%%%%%%%%%%%%%%%%%%%%%%%%%%%%%%%%%%%%%%%%%%%%%%%%%%%%%%%%%%%%%%%%%%%%%%%%%%%%%%%%%%%%%%%%%%%%%%%%%%%%%%%%%%%%%%%%%%%%%%%%%%%%%%%%%%%%%%%%%%%%%%%%%%%%%%%%%%%%%%%%%%%%%%%%%%%%%%%%%%%%%%%%%%%%%%%%%%%%%%%%%%%%%%%%%%%%%%%%%%%%%%%%%%%%%%%%%%%%%%%%%%%%%%%%%%%%%%%%%%%%%%%%%%%%%%%%%%%%%%%%%%%%%%%%%%
\subsubsection{Future trajectory sampling}

In this section, we evaluate the diversity of the predicted future trajectories on Stochastic Moving-MNIST. We first sample a random source trajectory $\rvx$ from the validation set and 64 future trajectories from the ground-truth $p(\rvx^+ \mid \rvx)$.
We encode the source and targets using BYOL and AdaSSL-V. 
Next, we retrieve the nearest neighbor of their predictions and visualize them in Fig.~\ref{fig:futures_mnist}.
For AdaSSL-V, we use the learned prior to sample multiple $\rvr$'s and condition the predictor $t$ to predict 10 target embeddings. We then retrieve the nearest neighbors of these targets.

Fig.~\ref{fig:futures_mnist} shows the overlaid ground-truth future trajectories (left) and predictions by BYOL and AdaSSL-V. BYOL produces a single deterministic prediction, whereas the AdaSSL-V predicts diverse plausible futures.

\begin{figure}[t!]
    \centering
    \includegraphics[width=\textwidth]{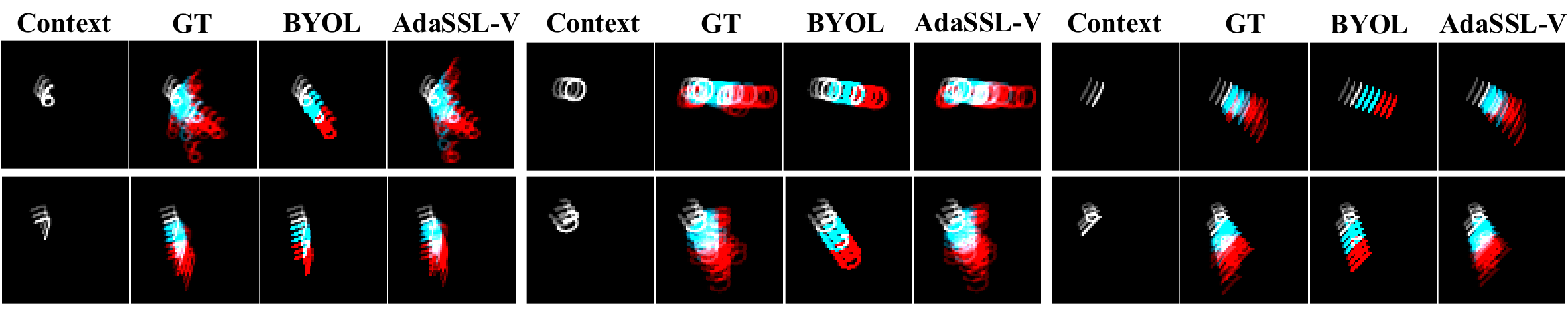}
    \caption{Sampling future trajectories. Given a source trajectory, we visualize the nearest neighbor (via cosine similarity of the embeddings) of the predictions of BYOL (middle) and AdaSSL-V (right). Samples from the ground-truth (GT) transition distribution is shown on the left. AdaSSL-V samples multiple future trajectories from the learned prior while BYOL produces a deterministic prediction. }
    \label{fig:futures_mnist}
\end{figure}

%%%%%%%%%%%%%%%%%%%%%%%%%%%%%%%%%%%%%%%%%%%%%%%%%%%%%%%%%%%%%%%%%%%%%%%%%%%%%%%%%%%%%%%%%%%%%%%%%%%%%%%%%%%%%%%%%%%%%%%%%%%%%%%%%%%%%%%%%%%%%%%%%%%%%%%%%%%%%%%%%%%%%%%%%%%%%%%%%%%%%%%%%%%%%%%%%%%%%%%%%%%%%%%%%%%%%%%%%%%%%%%%%%%%%%%%%%%%%%%%%%%%%%%%%%%%%%%%%%%%%%%%%%%%%%%%%%%%%%%%%%%%%%%%%%%%%%%%%%%%%%%%%%%%%%%%%%%%%%%%%%%%%%%%%%%%%%%%%%%%%%%%%%%%%%%%%%%%%%%%%%%%
\subsection{Density}
\label{sec:app:results:density}

\begin{figure}[t!]
    \centering
    \includegraphics[width=0.6\textwidth]{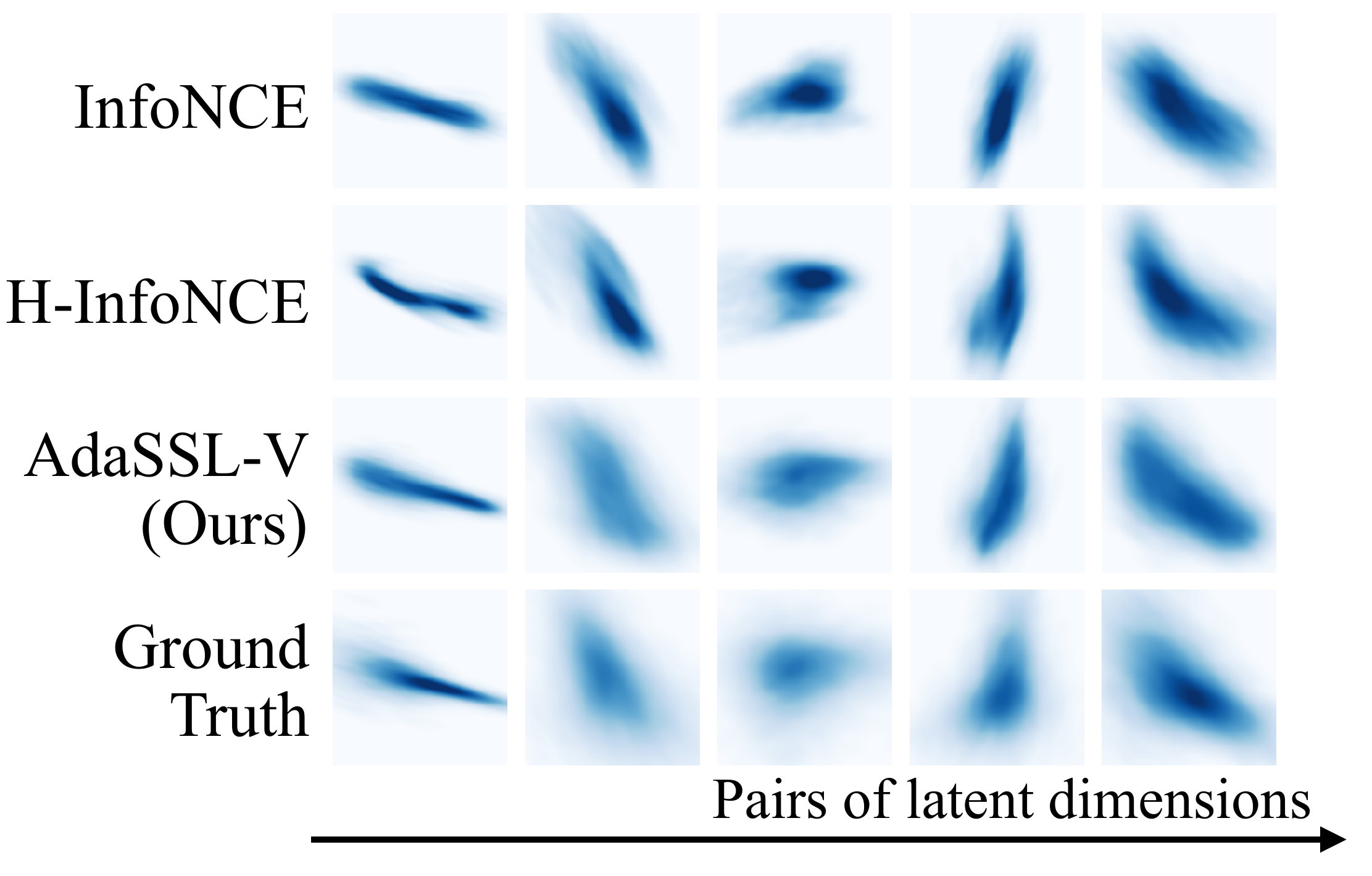}
    \caption{Aggregated marginal distributions $\E_{\rvz}[p_\mathrm{model}(\rvz^+ \mid \rvz)]$ across latent dimension pairs. InfoNCE produces collapsed densities and H-InfoNCE partially recovers variability, while AdaSSL-V aligns closely with the ground truth. The improvement is most evident in columns two and three, where AdaSSL-V captures both spread and orientation while baselines do not.}
        \label{fig:density}
\end{figure}

To understand why AdaSSL outperforms baselines in Table~\ref{tab:numerical_complex},
we visualize the aggregated marginal distribution of $\rvz^+$ implied by the learned predictor, $\E_{\rvz}[p_\mathrm{model}(\rvz^+ \mid \rvz)]$. We define one Monte-Carlo sample of $p_\mathrm{model}(\rvz^+ \mid \rvz)$ for each $\rvz$ as follows. For InfoNCE, we first encode the input $\rvx = g(\rvz)$ and then learn a projection from the embedding space to the ground-truth latent space by training a linear regressor from $f(g(\rvz))$ to $\rvz^+$. For H-InfoNCE, we pass $f(g(\rvz))$ through the predictor and project the predicted representations. For AdaSSL-V, we sample from the learned prior $\tilde{\rvr} \sim p_\theta(\rvr \mid \rvx)$ and use $\tilde{\rvr}$ to edit the embeddings with $t(f(\rvx), \tilde{\rvr})$ and project the edited embeddings. We repeat this process for random samples of $\rvz \sim p(\rvz)$ and visualize them in Fig.~\ref{fig:density}. InfoNCE embeddings produce overly concentrated densities, indicating their inability to accurately capture complex conditional uncertainties. H-InfoNCE partially corrects this, while AdaSSL best fits the ground-truth distribution, suggesting that its improvement arises from more accurate modeling of the conditional uncertainty.

%%%%%%%%%%%%%%%%%%%%%%%%%%%%%%%%%%%%%%%%%%%%%%%%%%%%%%%%%%%%%%%%%%%%%%%%%%%%%%%%%%%%%%%%%%%%%%%%%%%%%%%%%%%%%%%%%%%%%%%%%%%%%%%%%%%%%%%%%%%%%%%%%%%%%%%%%%%%%%%%%%%%%%%%%%%%%%%%%%%%%%%%%%%%%%%%%%%%%%%%%%%%%%%%%%%%%%%%%%%%%%%%%%%%%%%%%%%%%%%%%%%%%%%%%%%%%%%%%%%%%%%%%%%%%%%%%%%%%%%%%%%%%%%%%%%%%%%%%%%%%%%%%%%%%%%%%%%%%%%%%%%%%%%%%%%%%%%%%%%%%%%%%%%%%%%%%%%%%%%%%%%%
\subsection{Retrieval}

\begin{figure}[t!]
    \centering
    \includegraphics[width=\textwidth]{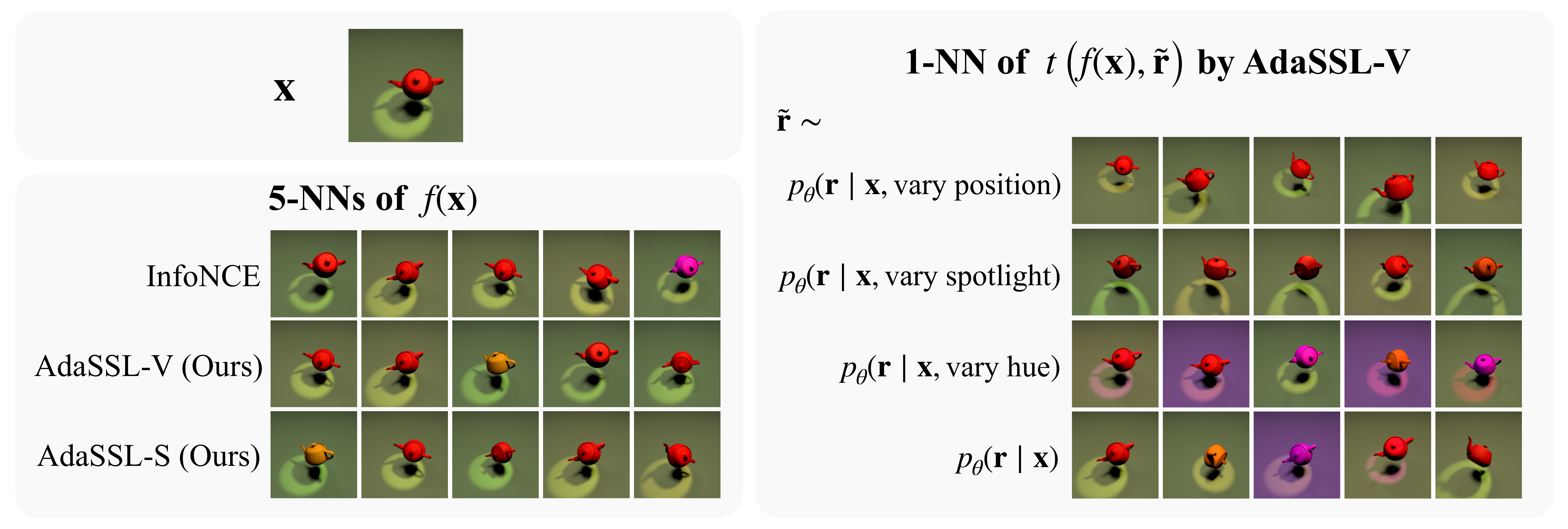}
    \caption{Image retrieval results on 3DIdent. Top left: query image. Bottom left: five nearest neighbors on the embeddings. Right: controllable retrieval by AdaSSL-V.}
    \label{fig:retrieval:full}
\end{figure}

In Fig.~\ref{fig:retrieval:full} (left), we perform standard retrieval to accompany our analysis in \S\ref{sec:exp:ident}. We retrieve the five nearest neighbors of the query image in the embedding space. We observe that both AdaSSL and the baselines are able to retrieve visually similar images. There are still some wrong retrievals in color and spotlight, and rotation is especially hard to learn for all methods. 

\section{Data and implementation details}
\label{sec:app:implementation}

\subsection{Implementation of $L_0$ penalty}
\label{sec:app:l0}

For AdaSSL-S, we implement the sparsity regularization following~\citet{lachapelle2022disentanglement, brouillard2020differentiable}. Specifically, we implement $\rvr$ as $\rvm \odot \tilde{\rvr}$, where $\mathrm{m}_i \sim \mathrm{Bern}(\pi_i)$ is a binary mask and $\tilde{\rvr}$ is the value of the latent. Both the Bernoulli logits and the vector $\tilde{\rvr}$ are predicted by the MLP $m\left(f(\rvx), f(\rvx^{+})\right)$ or $m\left(f(\rvx), f(\rvx^{++})\right)$.

We encourage sparsity by penalizing the expected number of active gates:
\begin{equation}
    \E \left[ \| \rvm \|_0 \right]
    = \E \left[ \sum_i \mathrm{m}_i \right]
    = \sum_i \E[\mathrm{m}_i]
    = \sum_i \pi_i
    \, ,
\end{equation}
which is differentiable w.r.t. $\boldsymbol{\pi}$. 

Next, we sample the binary mask $\rvm$ using the Gumbel-Sigmoid reparameterization of the Bernoulli distribution~\citep{maddison2017the}. To enable backpropagation through this discrete decision, we use a straight-through estimator, which backpropagates through the relaxed gates in the backward pass~\citep{bengio2013estimating}.

%%%%%%%%%%%%%%%%%%%%%%%%%%%%%%%%%%%%%%%%%%%%%%%%%%%%%%%%%%%%%%%%%%%%%%%%%%%%%%%%%%%%%%%%%%%%%%%%%%%%%%%%%%%%%%%%%%%%%%%%%%%%%%%%%%
\subsection{Numerical experiments in \S\ref{sec:exp:num}}
\label{sec:app:numerical_details}

In the numerical experiments, most of our setup follows prior work~\citep{kugelgen2021selfsupervised, zimmermann2021contrastive}. 
We list the similarity functions used by the models in Table~\ref{tab:sim_func}.

\begin{table}[t]
\caption{Similarity functions used by different models, where
$\psi(\cdot) = \frac{f(\cdot)}{\|f(\cdot)\|_2}$ if the model assumes a normalized latent space, in which case InfoNCE and AdaSSL's similarity functions are equivalent to a dot product; otherwise $\psi(\cdot) = f(\cdot)$. The same applies to $\psi_1$ and $\psi_2$, whose subscripts are used to indicate the asymmetry of H-InfoNCE and AdaSSL. Note that in Table~\ref{tab:numerical-simple}, H-InfoNCE has $\psi_1 = \psi_2$ because $\E[\rvc^+ \mid \rvc] = \rvc$.
}
\label{tab:sim_func}
\begin{center}
\resizebox{0.6\linewidth}{!}{
\renewcommand{\arraystretch}{1}
\begin{tabular}{l|c}
\toprule

Model & $s(\rvx, \rvy)$ \\
\midrule
InfoNCE & $ -\lambda(\psi(\rvx) - \psi(\rvy))^\top(\psi(\rvx) - \psi(\rvy))$ \\
AnInfoNCE &  $-(\psi(\rvx) - \psi(\rvy))^\top \boldsymbol{\Lambda} (\psi(\rvx) - \psi(\rvy))$ \\
H-InfoNCE & $-(\psi_1(\rvx) - \psi_2(\rvy))^\top \boldsymbol{\Lambda}_{\rvx} (\psi_1(\rvx) - \psi_2(\rvy))$\\
AdaSSL & $-\lambda(\psi_1(\rvx, \hat{\vr}) - \psi_2(\rvy))^\top(\psi_1(\rvx, \hat{\vr}) - \psi_2(\rvy))$ \\
\bottomrule

\end{tabular}
}
\end{center}
\end{table}

\paragraph{Complex $p(\rvc^+ \mid \rvc)$, formally stated.}
\begin{equation}
\label{eq:dgp_num_complex}
\boldsymbol{\kappa} \sim \gN(0, \Sigma)
\, , \quad
\rc_i \mid \boldsymbol{\kappa} \sim \gN(\mu(\boldsymbol{\kappa})_i, \sigma(\boldsymbol{\kappa})_i^2)
\, ,
\end{equation}
\begin{equation}
\iota_i \mid \boldsymbol{\kappa} 
\sim
\mathrm{Bern}(\pi(\boldsymbol{\kappa})_i)
\, , \quad
\rc_i^+ \mid \iota_i, \rc_i, \boldsymbol{\kappa}
\sim
\begin{cases}
\delta(\rc_i^+ = \rc_i), 
& \iota_i = 0 \\
\gN\left(\mu(\boldsymbol{\kappa})_i, \sigma(\boldsymbol{\kappa})_i^2
\right), 
& \iota_i = 1
\end{cases}
\, .
\end{equation}

\paragraph{Data.} 
We set $n_c=n_s=5$ and sample $\Sigma \sim \gW^{-1}(n_c + 2, \mathbf{I})$. For anisotropic noise, we sample $\sigma(\rvc)_i^2 \sim \mathrm{InvGamma}(2, 1)$. For heteroscedastic noise, we set $\sigma(\rvc)^2 = \mathrm{softplus}\left(\rmW_{\sigma} \rvc + \mathrm{softplus}^{-1}(1)\right)$. For complex $p(\rvc^+ \mid \rvc)$, we use $\mu(\boldsymbol{\kappa}) = \rmW_{\mu}^\top \boldsymbol{\kappa} + \vb$,
$\sigma(\boldsymbol{\kappa})^2 = \mathrm{softplus}\left(\rmW_{\sigma} \boldsymbol{\kappa} + \mathrm{softplus}^{-1}(1)\right)$, and $\pi_i(\boldsymbol{\kappa}) = \mathrm{Sigmoid}\left(\frac{\kappa_i}{\Sigma_{ii}} - 1\right)$. We sample each element of $\rmW_{\mu}$, $\rmW_{\sigma}$, and $\vb$ from $\gN(0, 1)$. We parameterize $g_\mathrm{MLP}$ as a three-layer MLP with \texttt{LeakyReLU} activation (negative slope $0.2$) with the same number of units in all layers. We ensure invertibility by using $L^2$-normalized weight matrices that has the lowest condition number among 25\,000 uniformly sampled candidates. We use $\rvx^{++} = g_\mathrm{MLP}\left([\rvc^+, \rvs^{++}]\right)$ where $\rvc^+$ is the same content factor as in $\rvx^+$ and $\rvs^{++} \sim \gN(0, \mathbf{I})$.

\paragraph{Architecture.} 
For the encoder $f$, we use an MLP with four hidden layers of dimensionality $10n$ where $n = n_c+n_s$ is the input dimension. For models that apply $L^2$ normalization to the outputs, we set the output dimensionality to $n+1$ to accommodate for the missing degree of freedom; otherwise we set it to $n$. For H-InfoNCE$_\mathrm{Affine}$, we use an affine layer followed by \texttt{softplus} activation to predict $\Lambda_{\rvx}$. For H-InfoNCE$_\mathrm{MLP}$, we use an MLP with three hidden layers of size $10n$ followed by \texttt{softplus} activation to predict $\Lambda_{\rvx}$ and an MLP of the same size to predict $\phi_1(\rvx)$ in Table~\ref{tab:numerical_complex}.
For AdaSSL, we set $d_r = 5$. We use MLPs with two hidden layers of dimension $64$ to parameterize $q_\phi$, $p_\theta$, and $m$ and use a linear $t$ for AdaSSL-V.
All MLPs except the encoder use a \texttt{BatchNorm} layer followed by \texttt{LeakyReLU} with the default negative slope ($0.01$) after each hidden layer.

\paragraph{Hyperparameters.} 
We use the \texttt{AdamW} optimizer~\citep{loshchilov2018decoupled} with learning rate $5 \times 10^{-4}$ and weight decay $10^{-4}$ on the parameters except biases. We use a batch size of 2048. For the experiments on complex $p(\rvc^+ \mid \rvc)$, we apply the loss symmetrically similar to~\citet{chen2020simple} because the sampling process of $\rvc$ and $\rvc^+$ is symmetric. We train the models for 200\,000 steps and observe convergence. 
For AdaSSL-V, we linearly warmup $\beta$ from $0$ to $0.5$ for $1000$ steps to prevent early KL instabilities.
We keep $\beta=1$ fixed throughout training for AdaSSL-S. 
For the unimodal $p(\rvc^+ \mid \rvc)$ experiments, we set $\tau = \E[\sigma_i^2(\rvc)] = 1$ except when the variance is fixed to $0$, in which case we set $\tau = 0.1$.
For the complex $p(\rvc^+ \mid \rvc)$ experiments, we set $\tau=0.1$.

\paragraph{Evaluation.} We perform evaluation by training a linear regressor on top of the frozen representations on 100\,000 unseen data samples and evaluate it on another 100\,000 samples.

\paragraph{Hardware.} 
Each trial of this experiment required approximately 15-20 hours to run, using eight CPU cores, 4 GB of system memory, and an MIG-partitioned slice of an NVIDIA H100 GPU providing roughly a quarter of the GPU's compute capacity and 20 GB of GPU memory.

%%%%%%%%%%%%%%%%%%%%%%%%%%%%%%%%%%%%%%%%%%%%%%%%%%%%%%%%%%%%%%%%%%%%%%%%%%%%%%%%%%%%%%%%%%%%%%%%%%%%%%%%%%%%%%%%%%%%%%%%%%%%%%%%%%
\subsection{3DIdent experiments in \S\ref{sec:exp:ident}}
\label{sec:app:imp:3dident}

\paragraph{Data.}
3DIdent contains 250\,000 training images in $\gD_\mathrm{train}$ and 25\,000 test images in $\gD_\mathrm{test}$, which we use for CRL experiments.
We sample latent pairs $(\rvz, \rvz^+)$ following 
\begin{equation}
    \rvz \sim p(\rvz) \, , \,
    \tilde{\rvz} \sim p(\tilde{\rvz})\, , \, 
    \iota_i \sim \mathrm{Bern}(0.2)\, , \,
    \rz_i^+ = \rz_i \; \text{if} \; \iota_i = 0 \; \text{else} \; \rz_i^+ = \tilde{\rz}_i \; \text{for} \; i \in [d_z] 
    \, .
\end{equation}
This setting is usually considered to belong to \emph{multiview} or \emph{weakly supervised} CRL in the literature~\citep{yao2025unifying}. Since 3DIdent is a finite dataset, after obtaining a latent pair, we find their nearest neighbor in the training set with FAISS~\citep{douze2024faiss} and use the correspondingly rendered observations as inputs following the original authors~\citep{zimmermann2021contrastive}. AdaSSL does not use an additional view in this experiment.

\paragraph{Data augmentations.}
For standard pairs, we use the same set of strong augmentations used for CelebA~(\S\ref{sec:app:cba_details}) excluding \texttt{Solarization}. 
For natural pairs, we do not perform augmentations.
We resize the images to $128\times128$ resolution.

\paragraph{Architecture.}

We use a ResNet-18 encoder followed by a two layer MLP projector with hidden size of 128 and output size of 16 as $f$. The hidden layer is followed by \texttt{ReLU} activation without \texttt{BatchNorm}, and the output layer does not have bias. For AdaSSL, we set $d_r = 16$. We use MLPs with two hidden layers of dimension 128 to parameterize $q_\phi$, $p_\theta$, and $m$. These MLPs use a \texttt{BatchNorm} layer followed by \texttt{ReLU} activation after each hidden layer. 
As discussed in \S\ref{sec:exp:ident}, we ablate the parameterization of $t$ for AdaSSL-V; the MLP parameterization has a hidden layer of dimensionality 128 with \texttt{BatchNorm} followed by \texttt{ReLU} activations. The VAE-based methods use a ResNet-18 decoder that mirror the encoder.

\paragraph{Hyperparameters.}

We use the \texttt{AdamW} optimizer with learning rate $10^{-4}$, weight decay $10^{-5}$ on non-bias parameters, and a batch size of 256.
For contrastive learning, we calculate the loss symmetrically following standard practice~\citep{chen2020simple}.
We train all models for 150\,000 steps and observe convergence on $\gD_\mathrm{train}$. All SSL methods use a normalized embedding space, use $\tau=0.05$, and do not learn $\lambda$ in this experiment.
For AdaSSL-V, we perform linear warmup of $\beta$ from $0$ to $0.5$ for 10\,000 steps to prevent early KL instabilities. For AdaSSL-S, we fix $\beta = 0.5$.
For AdaGVAE, we search within the authors' recommended set of $\beta$'s, [1, 2, 4, 8, 16], but find $\beta=100$ to give the best disentanglement.

\paragraph{Evaluation.}

We perform evaluation on $\gD_\mathrm{test}$ with the frozen embeddings and ground-truth latent factors with linear regression and the DCI disentanglement score. We normalize the embeddings for the SSL based models such that they align with the training objective, similar to~\citet{zimmermann2021contrastive}. We use the posterior mean as the embeddings for VAE-based models and do not normalize them.
For the DCI disentanglement score,
we use the weights of Lasso regressors as the relative importance matrix.

\paragraph{Hardware.}
Each trial of this experiment required approximately 15-20 hours to run, using eight CPU cores, 32 GB of system memory, and an MIG-partitioned slice of an NVIDIA H100 GPU providing roughly three-eighths of the GPU's compute capacity and 40 GB of GPU memory.

%%%%%%%%%%%%%%%%%%%%%%%%%%%%%%%%%%%%%%%%%%%%%%%%%%%%%%%%%%%%%%%%%%%%%%%%%%%%%%%%%%%%%%%%%%%%%%%%%%%%%%%%%%%%%%%%%%%%%%%%%%%%%%%%%%
\subsection{CelebA experiments in \S\ref{sec:exp:images}}
\label{sec:app:cba_details}

\begin{figure}[t!]
    \centering
    \includegraphics[width=\linewidth]{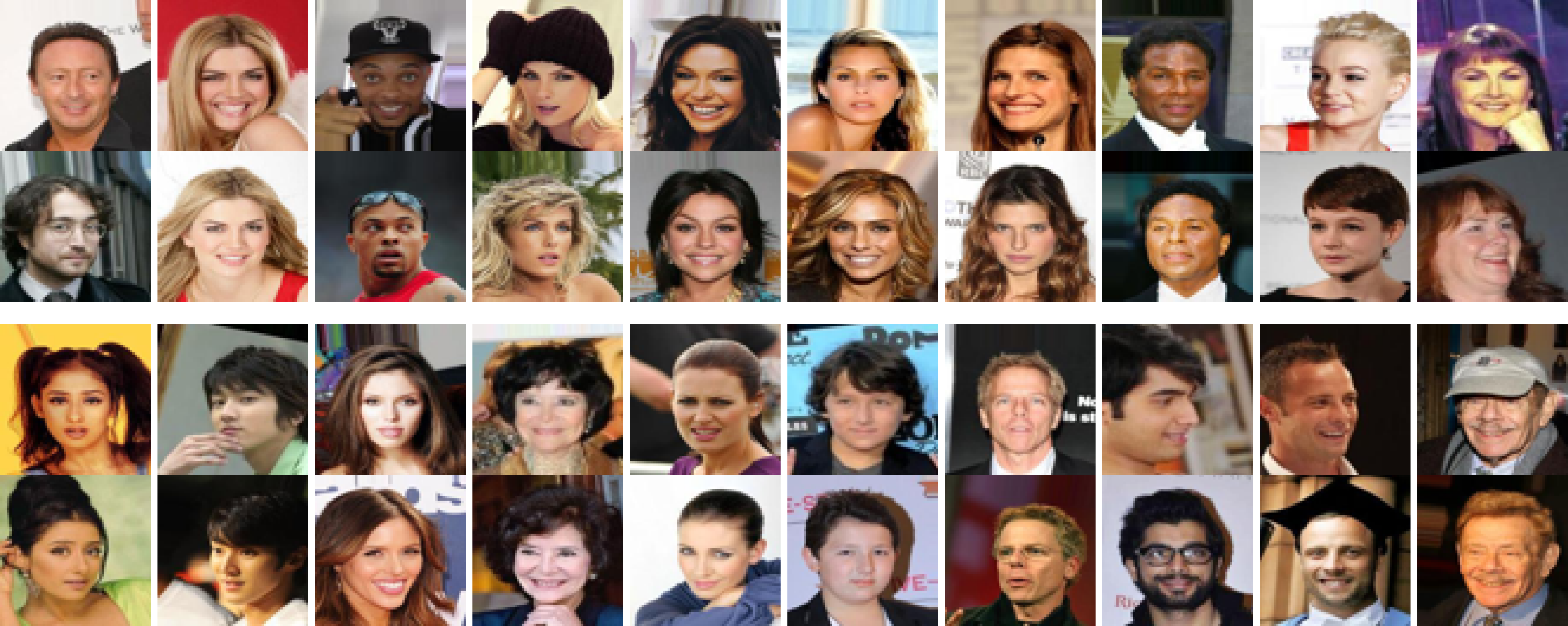}
    \caption{Visualization of images paired by identity from the CelebA dataset.}
    \label{fig:cba:vis}
\end{figure}

\begin{wrapfigure}{r!}{0.4\textwidth} % 
  %\vspace{-12pt}  
  \centering
  \includegraphics[width=\linewidth]{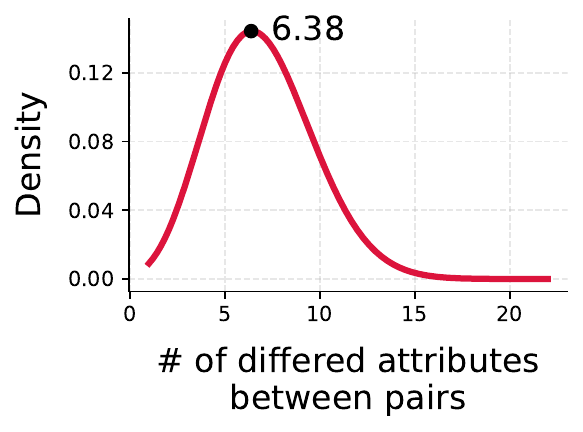}
  \caption{Distribution of the number of differed attributes between pairs of images of the same identity.}
  \label{fig:cba:sparsity}
\end{wrapfigure}

\paragraph{Data.}
We split the CelebA dataset into $\gD_\mathrm{train}$, $\gD_\mathrm{val}$, and $\gD_\mathrm{test}$ following an 8-1-1 ratio; this gives us 161\,908 training images, 20\,346 images in the validation set and 20\,345 images in the test set. To create a natural distribution shift, we sample celebrity identity such that the people in $\gD_\mathrm{train}$ does not appear in $\gD_\mathrm{val} \cup \gD_\mathrm{test}$. This gives us 8142 celebrities in $\gD_\mathrm{train}$ and 2035 celebrities in $\gD_\mathrm{val} \cup \gD_\mathrm{test}$.
To construct a structured positive pair, we randomly sample two images of the same person. This results in 1\,850\,918 possible positive pairs. 
Data pairs examples are visualized in Fig.~\ref{fig:cba:vis} and
the distribution of the number of differed attributes between pairs are shown in Fig.~\ref{fig:cba:sparsity}, confirming that attributes differ sparsely between positive pairs.
During training, we augment the sampled pair using data augmentations and obtain $\rvx$ and $\rvx^+$. We use another augmented view of $\rvx^+$ as $\rvx^{++}$. This is helpful because our goal is not to learn the low-level style factors, but instead the semantic content factors that differ structurally between $\rvx^+$ and $\rvx$. 
The standard pairing process still use augmented versions of the same image as positive pairs.

\paragraph{Data augmentations.} We investigate the effect of both strong and weak augmentations. For strong augmentations, we apply the standard set of augmentations used in SSL studies~\citep{chen2020simple, grill2020bootstrap}. We use \texttt{RandomHorizontalFlip} with 0.5 probability, then \texttt{RandomResizedCrop} with crops of size within $[8\%, 100\%]$ of the original image and aspect ratio within $[0.75, 1.33]$, which are then resized to $64\times64$. Next, with probability $0.8$, we randomly apply \texttt{ColorJitter} where the brightness, contrast, saturation and hue of the image are shifted by a uniformly random offset. We use parameters $0.4, 0.4, 0.2, 0.1$, respectively. Finally, we apply \texttt{RandomGrayScale} with probability $0.2$, \texttt{GaussianBlur} with probability $0.5$, and \texttt{Solarization} with probability $0.2$. For weak augmentations, we only apply \texttt{RandomHorizontalFlip} with probability of $0.5$ and \texttt{RandomResizedCrop} with crops of size within $[80\%, 100\%]$ of the original image and aspect ratio within $[0.9, 1.1]$. Notice that this cropping operation is significantly weaker than the one used for strong augmentations.

\paragraph{Architecture.}

We use a ResNet-18 encoder~\citep{he2016deep} followed by a two layer MLP projector with hidden size of 1024 and output size of 128 as $f$. 
For InfoNCE, the hidden layer is followed by \texttt{ReLU} activation without \texttt{BatchNorm}, and the output layer does not have bias following \citet{chen2020simple}. 
For BYOL, the hidden layer uses \texttt{BatchNorm} followed by \texttt{ReLU} activation, and the output layer does not have bias following \citet{grill2020bootstrap}. BYOL's predictor design is the same as its projector.
For AdaSSL, we set $d_r = 20$. We use MLPs with one hidden layer of dimension 1024 to parameterize $q_\phi$, $p_\theta$, and $m$. 
We use an MLP with one hidden layer of dimension 512 to parameterize $t$ for AdaSSL-V; this MLP does not have a bias term in the output layer, similar to the predictor in BYOL. These MLPs use a \texttt{BatchNorm} layer followed by \texttt{ReLU} activation after each hidden layer.

\paragraph{Hyperparameters.}

We use the \texttt{AdamW} optimizer with learning rate $2 \times 10^{-4}$ and weight decay $10^{-4}$ on the parameters except biases. We use a batch size of 512. We calculate the SSL loss symmetrically following standard practice~\citep{chen2020simple}. We train the models for 80\,000 steps and observe convergence on $\gD_\mathrm{val}$. For contrastive learning, all models use a normalized embedding space and use $\tau=0.1$. The following are the hyperparameters of different methods used in combination with InfoNCE.
For AdaSSL-V, we perform linear warmup of $\beta$ from $0$ to $0.1$ for 10\,000 steps to prevent early KL instabilities. For AdaSSL-S, we fix $\beta = 0.5$. For LieSSL, we search for the best-performing coefficient of the last regularization term from [0.01, 0.1, 1, 10] because we do not assume access to transformation labels, and 
choose the best-performing number of basis functions from [1, 10, 20, 40].
For BYOL, we anneal the EMA momentum from $0.996$ to $1$ following a cosine schedule. With BYOL, AdaSSL-V uses a linear warmup of $\beta$ from $0$ to $0.005$ for 10\,000 steps for weak augmentations and to $0.001$ for strong augmentations. For AdaSSL-S, we fix $\beta = 0.002$ for weak augmentations and $\beta = 0.0005$ for strong augmentations.

%\vspace{-0.5em}

\paragraph{Evaluation.} Following standard practice, we train a linear classifier with the \texttt{BinaryCrossEntropy} loss for each attribute on top of the frozen representations and embeddings on $\gD_\mathrm{train}$ until convergence and evaluate it on $\gD_\mathrm{test}$. 
We use the $F_1$ score of the minority class as the evaluation metric because the attributes are highly imbalanced. To do that, we compute the $F_1$ score for each attribute then report the mean score over attributes.

\paragraph{Hardware.}
Each trial of this experiment required approximately 15-20 hours to run, using 12 CPU cores, 24 GB of system memory, and an NVIDIA L40S GPU with 48 GB of GPU memory.

%%%%%%%%%%%%%%%%%%%%%%%%%%%%%%%%%%%%%%%%%%%%%%%%%%%%%%%%%%%%%%%%%%%%%%%%%%%%%%%%%%%%%%%%%%%%%%%%%%%%%%%%%%%%%%%%%%%%%%%%%%%%%%%%%%%%%%%%%%%%%%%%%%%%%%%%%%%%%%%%%%%%%%%%%%
\subsection{Video experiments in \S\ref{sec:exp:videos}}
\label{sec:videoimp}

\paragraph{Data.}
We construct a custom dataset similar to Moving-MNIST~\citep{srivastava2015unsupervised, drozdov2024video}, where nine-frame videos are generated stochastically on the fly from sample images in MNIST. For a given image, we first create a black $64\times 64$ canvas. Afterwards, we resize the original $28\times 28$ image to $16\times 16$ and place it on the canvas after uniformly sampling its initial center coordinates from $[8, 16]$. In frames~1-3, the digit moves from this center based on a velocity in the horizontal direction, denoted by $v_{x, 1:3}$, and in the vertical direction, denoted by $v_{y, 1:3}$. 
We sample these initial velocities uniformly from $[0, v_0]$ where $v_0=3$. 
Then, with an equal probability, we sample one direction and change its velocity by adding a Gaussian noise proportional to the initial velocity (i.e., heteroscedastic):
\begin{equation}
\iota \sim \mathrm{Bern}(0.5)
\, , \quad
\begin{cases}
v_{x, 4:9} \sim \gN\left(v_{x, 1:3}, \frac{2}{3}v_{x, 1:3}\right)
\, , 
v_{y, 4:9} = v_{y, 1:3}
\, ,
& \iota = 0 \\
v_{y, 4:9} \sim \gN\left(v_{y, 1:3}, \frac{2}{3}v_{y, 1:3}\right)
\, ,
v_{x, 4:9} = v_{x, 1:3}
\, , 
& \iota = 1
\end{cases}\label{eq:bern}
\, .
\end{equation}
This makes the new velocity in frame~4-9 within $(-v_0, 3v_0)$ with high probability. Generated video samples are shown in Figure~\ref{fig:mnist-samples}. We refer to this as \emph{Setting A}.

In \emph{Setting B}, we let $\iota$ depend on the digit input. Concretely, we use equally spaced bins between 0.1 and 0.9 for the ten digits:
\begin{equation}
\iota_k \sim \mathrm{Bern}(p_k)
\, ,
\quad
\text{where}
\quad
    p_k = 0.1 + k\cdot \frac{0.9 - 0.1}{10-1}, \quad k = 0,\ldots,9
    \, .
\end{equation}
This means the distribution of the direction of acceleration varies for different digits.

%\vspace{-0.5em}

We partition each sampled video into three-frame segments and use them as $\rvx$, $\rvx^+$, and $\rvx^{++}$ (\S\ref{sec:add-view}). The model predicts $f(\rvx^+)$ from $f(\rvx)$ (and optionally $f(\rvx^{++})$ by AdaSSL and BYOL+Future). The goal is to capture both the digit class and the velocity in the three-frame video representations. We partition the 60\,000 MNIST images into 50\,000 training images and 10\,000 validation images and use each set for generating training and validation videos on the fly. Note that we always sample the velocities online, and the model observes different videos in every epoch.

\begin{figure}
    \centering
    \includegraphics[width=0.9\linewidth]{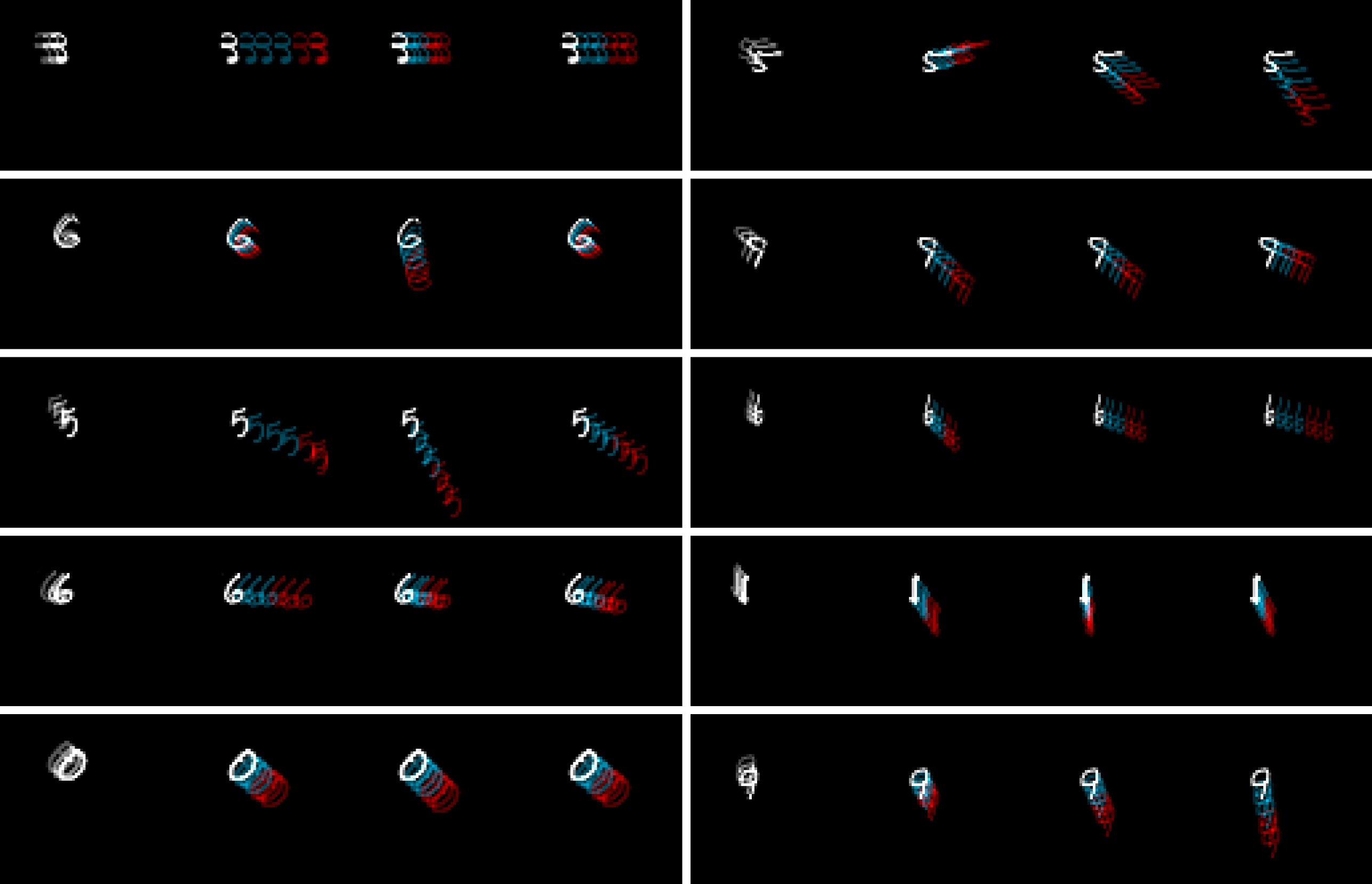}
    \caption{Random samples (nine-frame video sequences) from the stochastic Moving-MNIST dataset. For each example, the first three frames (context) are shown on the left. Then, three different future trajectories of the next six frames (targets) are randomly sampled according to Eq.~\ref{eq:bern} and visualized to the right of the initial three-frame segment. The third frame is overlaid on all canvases for reference. The motion uncertainty arises from random velocity changes along spatial directions.}
    \label{fig:mnist-samples}
\end{figure}

%\vspace{-0.5em}

\paragraph{Architecture.}
The encoder $f$ consists of a 3D convolutional encoder, followed by an MLP projector.
The 3D convolutional encoder consists of five convolutional layers with $[32, 64, 128, 128, 256]$ channels with \texttt{BatchNorm} and \texttt{ReLU} activations after each layer. The first two and the last layer have spatial-only kernels of dimensions $[1,3,3]$ and the third and fourth layers have temporal convolutions with kernels of dimensions $[3,1,1]$. The encoder outputs are average-pooled on the spatial dimensions and then flattened across the temporal dimension resulting in a 768-dimensional representation. The representations are passed to an MLP projector with two hidden layers of size 1024, each followed by \texttt{BatchNorm} and \texttt{ReLU} activations. The output embeddings have a dimensionality of 128, and are batch-normalized. The projector is followed by an MLP predictor $h$ with two hidden layers of dimensionality 1024 with \texttt{BatchNorm} and \texttt{ReLU} activations after each hidden layer. The predictor output does not use \texttt{BatchNorm} or \texttt{ReLU}. For AdaSSL-V, we use a two-dimensional $\rvr$, which is concatenated to $f(\rvx)$ as the predictor input. We use MLPs with one hidden layer of dimensionality 1024 to parameterize $q_\phi$, $p_\theta$, and $m$.
These MLPs use a \texttt{BatchNorm} layer followed by \texttt{ReLU} activation after each hidden layer. For BYOL+Future, we concatenate the projector embeddings $f(\rvx)$ and $f(\rvx^{++})$ and use it as the predictor input. BYOL+GT predicts $f(\rvx^+)$ from $f(\rvx)$ and $r^\star$, the ground-truth difference between the velocities of $\rvx$ and $\rvx^+$. We experiment with concatenating $r^\star$ directly with $f(\rvx)$ or passing it through a learnable linear embedding before concatenation, and find that using an embedding layer slightly improves performance.

\paragraph{Hyperparameters.}
For all methods, we train the model for 75\,000 steps with the \texttt{AdamW} optimizer using a batch size of $128$. We use an initial learning rate of $10^{-4}$ and decay it following a cosine schedule, following~\citet{grill2020bootstrap}. We use a constant weight decay of $10^{-4}$. For the EMA momentum, we use a constant decay rate of $0.996$. In BYOL+GT, we learn an affine projection to create an embedding for $\rvr^\star$ of dimensionality 32. For all AdaSSL models, we use a constant regularization coefficient $\beta$, and in our default setting, $d_r=2$ and $\beta=0.001$.

%\vspace{-0.5em}

\paragraph{Evaluation.}
To perform evaluation, we train linear probes with \texttt{CrossEntropy} (for digit classification) and \texttt{MSE} (for velocity regression) losses
on top of the frozen video representations and embeddings of the online branch on $\gD_\mathrm{train}$ until convergence. We then report the digit prediction accuracy and velocity decoding $R^2$ scores on a fixed video test set generated from the 10\,000 test images of MNIST.

\paragraph{Hardware.} Each trial of this experiment required approximately 6-8 hours to run, using six CPU cores, 32 GB of system memory, and an NVIDIA H100 GPU with 80 GB of GPU memory.

\subsection{iNat-1M experiments in \S\ref{sec:app:results:inat}}
\label{sec:app:inat_details}

\paragraph{Data.} 

iNat2021~\citep{van2021benchmarking} is a large-scale image dataset collected
and annotated by community scientists that contains fine-grained (class) and coarse-grained (superclass) species labels.
We create a balanced subset from iNat2021 of about half of its size. We first randomly sample 5000 classes that have at least 250 images from iNat2021's training set. We then randomly pick 200 images as $\gD_\mathrm{train}$ and 50 as $\gD_\mathrm{val}$. This results in a one-million-image training set and a validation set of 250\,000 images. We name this subset of iNat2021 as iNat-1M. During training, we investigate the model's behavior under different natural pairings. We start with a setting where we pair up each image with an image from the same class, randomly sampled at each step. Since we evaluate the model's performance on fine-grained classes, this setting is similar to supervised contrastive learning~\citep{khosla2020supervised} because the model is encouraged to cluster images within the same class together. For the rest of the settings, we randomly corrupt the pairing process of each image with probability $\omega \in \{0.25, 0.5, 0.75, 1\}$. Our corruption changes the pairing method for selected images to pairing within the same superclass. We expect AdaSSL to adapt to different corruption ratios better than baselines because it's less restricted on the uncertainty in features that could change under a coarse/corrupted pairing. We augment paired data using data augmentations and obtain $\rvx$ and $\rvx^+$ for all methods. We use another augmented view of $\rvx^+$ as $\rvx^{++}$.

\paragraph{Data augmentations.} 
We resize the images to $224\times224$ resolution and use the same set of strong augmentations used for CelebA~(\S\ref{sec:app:cba_details}).

\paragraph{Architecture.}

We use a ResNet-50 encoder~\citep{he2016deep} followed by a two layer MLP projector with hidden size of 1024 and output size of 128 as $f$. The hidden layer is followed by \texttt{ReLU} activation without \texttt{BatchNorm}, and the output layer does not have bias following \citet{chen2020simple}. 
For AdaSSL, we set $d_r = 8$. We use MLPs with one hidden layer of dimension 1024 to parameterize $q_\phi$, $p_\theta$, and $t$. 
$t$ does not have a bias term in the output layer. These MLPs use a \texttt{BatchNorm} layer followed by \texttt{ReLU} activation after each hidden layer. 

\paragraph{Hyperparameters.}

We use the \texttt{AdamW} optimizer with learning rate $2 \times 10^{-4}$ and weight decay $10^{-4}$ on the parameters except biases. We calculate the SSL loss symmetrically. We use a batch size of 256 and train the models for 200\,000 steps. For contrastive learning, all models use a normalized embedding space and use $\tau=0.1$. 
For AdaSSL-V, we perform linear warmup of $\beta$ from $0$ to $\beta_\mathrm{final}$ for 10\,000 steps. We search $\beta_\mathrm{final}$ from $\{0.2, 0.4, 0.6, 0.8\}$ and find $\beta_\mathrm{final}=0.8$ performs the best for all settings except on the noisiest setting when $\omega=1$, where we need weaker regularization, $\beta_\mathrm{final}=0.4$, to accommodate the higher uncertainty.

\paragraph{Evaluation.} 

We train two online linear probes with \texttt{CrossEntropy} loss on top of the representations, one for predicting the superclass and one for the fine-grained species classes. The gradient of these losses do not propagate back to the encoder. We then report their performance on $\gD_\mathrm{val}$.

\paragraph{Hardware.} Each trial of this experiment required approximately 40 hours to run, using 12 CPU cores, 60 GB of system memory, and an NVIDIA H100 GPU with 80 GB of GPU memory.

\end{document}